\documentclass[11pt]{article}

\usepackage{helvet}

\usepackage[top=1in, bottom=1in, left=1in, right=1in]{geometry}
\usepackage{fancyhdr}
\usepackage{amssymb,amsbsy,amsfonts,amsmath,amsthm,mathtools,xspace}
\usepackage{bm}
\usepackage{mathrsfs}

\usepackage{sectsty}
\sectionfont{\fontsize{11}{13}\selectfont}
\subsectionfont{\fontsize{11}{13}\selectfont}
\subsubsectionfont{\fontsize{11}{13}\selectfont}
\usepackage{titlesec}
\titlespacing*{\section}{0pt}{10pt}{5pt}
\titlespacing*{\subsection}{0pt}{6pt}{3pt}

\usepackage{booktabs}
\usepackage{multirow}
\usepackage{makecell}
\usepackage{graphicx}
\graphicspath{{figures/}}
\usepackage{caption}
\usepackage{subcaption}
\usepackage{float}

\usepackage{natbib}
\usepackage{setspace}

\usepackage{color}
\usepackage[colorlinks=true, urlcolor=blue, linkcolor=blue, citecolor=blue]{hyperref}
\usepackage{url}
\usepackage[toc,page]{appendix}
\usepackage{cleveref}

\theoremstyle{plain}
\newtheorem{theorem}{Theorem}
\newtheorem{lemma}{Lemma}
\newtheorem{proposition}{Proposition}
\newtheorem{corollary}{Corollary}
\theoremstyle{definition}

\newtheorem{assumption}{Assumption}
\theoremstyle{remark}
\newtheorem{remark}{Remark}
\theoremstyle{plain}
\newtheorem*{propositionprime}{Proposition~\ref{prop:lb-var}$'$}

\crefname{assumption}{Assumption}{Assumptions}
\Crefname{assumption}{Assumption}{Assumptions}

\DeclareMathOperator{\E}{\mathbb{E}}
\DeclareMathOperator{\Var}{Var}
\DeclareMathOperator{\Cov}{Cov}

\DeclareMathOperator*{\argmax}{arg\,max}
\newcommand{\R}{\mathbb{R}}
\newcommand{\N}{\mathcal{N}}
\newcommand{\given}{\,\vert\,}

\newcommand{\one}{\mathbf{1}}
\newcommand{\Aset}{\mathcal{A}}
\newcommand{\Bset}{\mathcal{B}}
\newcommand{\Naive}{\mathrm{Naive}}
\newcommand{\GLMM}{\mathrm{GLMM}}
\newcommand{\Neff}{N_{\mathrm{eff}}}

\newcommand{\Dhat}{\widehat{\Delta}}
\newcommand{\phat}{\widehat{p}}

\RequirePackage[normalem]{ulem}
\RequirePackage{xcolor}

\crefname{appendix}{Supplement}{Supplements}
\Crefname{appendix}{Supplement}{Supplements}

\begin{document}

\begin{center}
\textbf{\large Trustworthy Method Comparison with AI Judges: Estimation and Design under Order, Batch, and Aggregation Effects
}\\
\bigskip
Tianxi Li \quad Jie Ding\\
\medskip
School of Statistics, University of Minnesota\\
\end{center}

\begin{abstract}
\noindent
Large language models (LLMs) are increasingly used as judges for automated AI evaluation. A common practice is to randomize prompt sequences and average the resulting scores, but its statistical validity remains unclear. We show that LLM evaluation mechanisms can be approximated by a class of Markov generalized linear mixed models (GLMMs), supported by out-of-sample predictions across three major commercial LLMs. Using a first-order Markov GLMM, we study leaderboard ranking and group comparison. For leaderboard ranking, randomize-and-average selection is consistent under a mild separation condition, and a Williams square design can improve efficiency when item qualities are close. For group comparison, naive averaging can yield inconsistent conclusions about differences in group-level quality because of the response model’s nonlinearity. Empirical results further support the validity of the proposed model-based inference beyond the first-order theory, including settings with higher-order sequence memory. We illustrate the approach in an application where AI judges compare two graphical model estimation methods.
\end{abstract}

\section{Introduction}
\label{sec:intro}

Generative large language models (LLMs) are increasingly used as AI judges,
scoring and ranking candidate responses, model versions, and even grant proposals at a
scale that human raters cannot match. For example, a traditional grant review panel reads a slate of
proposals in one review session and makes a sponsoring decision for each. An AI judge can be placed as a screening gate for such a panel by identifying promising proposals before human panels spend effort. For another example, comparing two methods using a benchmark dataset is a common task in scientific publications. When the task involves interpretation and domain knowledge, such as variable selection in genetic studies or graph construction by graphical models, an AI judge with broader background knowledge can take the output of the two methods and help with evaluating whether one gives more scientifically meaningful results. In these examples, a number of items to be evaluated go to the judge together in one shared context, and a decision or ``meaningful'' label is assigned by the AI judge. 

In many of these cases, the items for evaluation are provided together in a batch. This is not only how tasks are commonly posed, since a judgment about one item is made against the standard the rest of the slate sets, but also the token-economical choice to reduce repetitive prompts. In such settings, a judge's ratings may not be independent: what the judge says about the fifth item depends on what it has just
said about the fourth. That dependence is the source of complications for much of the related research. In particular, whether the resulting evaluation can be trusted raises two critical questions.

The first is whether the judge is measuring the right thing. Does the criterion it
applies track the quality a human evaluator ultimately cares about? This is a question of \emph{validity}, and it is where most of the field's attention has gone. MT-Bench and Chatbot Arena calibrate judge ratings against human preference
data \citep{zheng2023judging,chiang2024chatbot}; AlpacaEval's length-controlled
revision corrects a specific, documented misalignment between the judge's criterion and
the human target \citep{dubois2024lengthcontrolled}; and recent surveys and audits
catalog the ways a judge's criterion can drift from the intended one
\citep{gu2024survey,ye2024justice}.

The second question, which is equally important, is whether we can recover what the judge actually concluded. Given
a criterion, aligned or not, the judge is typically queried a finite number of times under a common protocol for stability, and its answers are aggregated into a score, a ranking, or a
conclusion about which of two groups is better. Does that aggregation return the judge's own
answer, or does the procedure introduce error of its own? This is a question of
\emph{measurement} rather than validity.  A perfectly aligned judge tells us nothing useful if the
aggregation step distorts its answer, and, as we show, the distortion can grow with the
evaluation budget, so that the more one spends on collecting more answers, the more confident
one could become in a conclusion the judge never reached. 

In this paper, we focus on solving the second problem (\emph{measurement}). We take the judge's criterion as
given. Every failure we study is therefore a failure of the evaluator's statistics rather than of the rubric it was judged by, and every remedy is a change of estimator or of experimental design rather than of prompt.

The dominant evaluation pipeline used in practice has two steps. First, items are presented to the judge in randomized order to mitigate the well-documented position bias of LLM judges. Such bias has been observed in pairwise judging \citep{zheng2023judging,wang2024fair}, list-wise ranking \citep{shi2024judging}, and multiple-choice option order \citep{pezeshkpour2024order}, and is reviewed together with other judge biases in recent surveys and audits \citep{gu2024survey,ye2024justice}. Proposed remedies include reversing presentation order and averaging scores \citep{wang2024fair}, splitting and re-merging candidate answers \citep{li2024splitmerge}, bootstrapping over reshuffled presentations \citep{hou2024zeroshot}, symmetrizing pairwise comparisons \citep{liusie2024comparative}, and multi-agent debate \citep{khan2024debating}. These remedies are largely heuristic, format-specific, and often computationally costly. Second, each item is scored by its marginal average rating, and these averages are compared or ranked. Variants of this ``random shuffle followed by marginal averaging'' pipeline underlie many contemporary judge-based benchmarks, including MT-Bench, which averages absolute grades \citep{zheng2023judging}, and AlpacaEval, which averages win rates against a fixed baseline \citep{dubois2023alpacafarm,dubois2024lengthcontrolled}. A notable exception is Chatbot Arena, which fits a logit-scale Bradley--Terry model to pairwise votes \citep{chiang2024chatbot} and is therefore insulated from the bias studied below. It is the second step, the \emph{estimator}, that the order-effect literature largely leaves untouched. Yet aggregation itself can introduce systematic error: a biased, order-dependent rating process can distort marginal averages in ways that randomization cannot remove. As we show, this is where the statistical failure arises.

A statistical model is needed to assess the validity of the ``randomize and average'' procedure and to develop improved alternatives. We show that an LLM judge scoring a shuffled batch can be well approximated by a \emph{Markov generalized linear mixed model} (GLMM), in which each item's binary rating follows a Bernoulli distribution on the logit scale, with a serial effect, a cold-start effect, and a batch-level random intercept. We view this as a \emph{working model class}, rather than a representation of the judge's internal mechanism. Its adequacy can be assessed empirically through out-of-sample predictive evaluation, as demonstrated in our experiments.

Under the order-1 Markov GLMM, we study two common evaluation tasks. For \emph{leaderboard ranking}, which seeks to identify the best item in a pool, we show that the naive average is consistent under a mild separation condition. A Williams Latin square design, combined with the same estimator, can further improve selection performance at no additional computational or token cost. In contrast, for \emph{group comparison}, the A/B test of whether one collection of items outperforms another, the naive procedure can fail. We show that it is \emph{structurally biased}, leading to invalid point estimation and hypothesis testing; increasing the evaluation budget does not remove this bias. A unified alternative is to fit the GLMM under an ordinary random shuffle and conduct standard model-based inference. This procedure is valid for both tasks. Although the theory is developed under the order-1 model, we further show that an order-1 approximation can still provide reliable inference when the judge exhibits higher-order memory.

The rest of the paper is organized as follows. \Cref{sec:physics} introduces the Markov-GLMM working model for an AI judge and establishes its empirical adequacy. \Cref{sec:leaderboard} analyzes leaderboard ranking and the properties of the Williams design, while \Cref{sec:convexity} analyzes group comparison and the nonlinearity bias of naive averaging. \Cref{sec:boundary} considers judges with memory beyond first order, a setting outside our design theory. \Cref{sec:data-example} applies the group-comparison method to a real problem, using AI judges to compare two graphical model estimation methods based on the scientific meaningfulness of their discovered associations. \Cref{sec:discussion} concludes with further discussion.
\section{A Working Statistical Model of AI Judges}
\label{sec:physics}

We first examine empirical patterns in AI-judge evaluations and use them to motivate the proposed working model. Our data come from three LLM judges from different providers: Gemini 2.5 Flash, GPT-5.4-mini, and Claude Haiku 4.5. Each judge scored the same $30$ research proposals in $200$ \emph{batches}: each batch presented all $30$ proposals in one shared context, in a randomly permuted order, and the judge assigned a binary accept/reject rating to each proposal sequentially. This yielded $6{,}000$ binary ratings per judge. We held the batch orderings \emph{identical} across the three judges.

Gemini 2.5 Flash provides a representative example. We first examine whether the judge's rating depends on the preceding rating in the same batch. \Cref{fig:context-bias} shows that, pooled over positions $2$--$30$, an item presented immediately after a rejection is accepted $73.0\%$ of the time, compared with $66.6\%$ after an acceptance, a difference of $-6.4\%$. Using $2{,}000$ bootstrap resamples of the batches, the corresponding $95\%$ interval is $[-8.9\%,-3.9\%]$, and all resamples yield a negative gap. The pattern also persists across positions in the batch (\Cref{fig:cb-position}). This observation suggests, though does not prove, a negative serial effect.

The opening slot of a batch has no predecessor, so it is excluded from the preceding comparison. \Cref{fig:context-bias} also shows that the acceptance rate at the first position can be lower than at positions following a rejection.

\begin{figure}[!htbp]
\centering
\begin{subfigure}[b]{0.40\textwidth}
  \includegraphics[width=\linewidth]{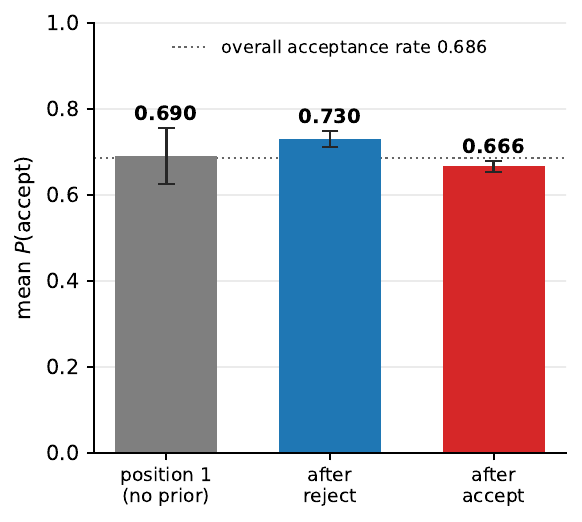}
  \caption{Pooled over positions $2$--$30$.}
  \label{fig:cb-bars}
\end{subfigure}\hfill
\begin{subfigure}[b]{0.56\textwidth}
  \includegraphics[width=\linewidth]{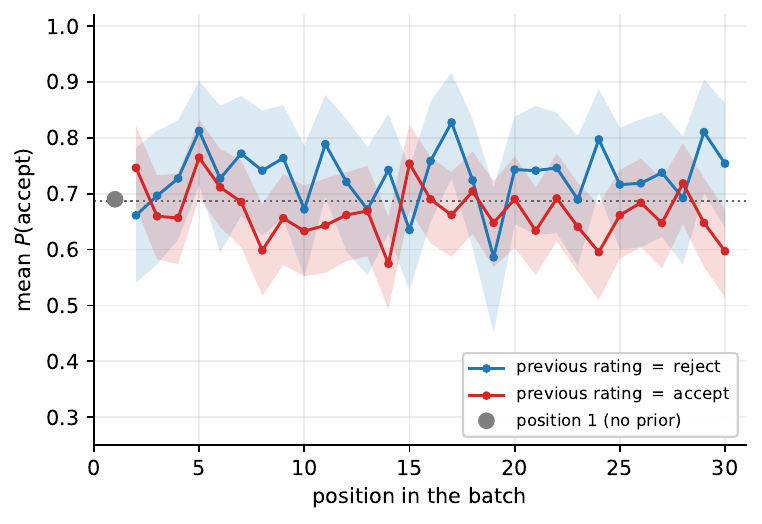}
  \caption{The same split, position by position.}
  \label{fig:cb-position}
\end{subfigure}
\caption{Dependence on the immediately preceding rating in the illustrative judge
(Gemini 2.5 Flash), visible in the raw data before any modeling. }
\label{fig:context-bias}
\end{figure}

Although these results are shown for one judge, qualitatively similar patterns appear for the other two. They motivate a working model in the next subsection.

\subsection{A working model for sequential AI judges}
\label{sec:working-model}

We do \emph{not} seek to recover the internal generative mechanism of an LLM. Instead, we propose a \emph{working model class}, namely a family of Markov generalized linear mixed models. The goal is to approximate the judge's rating process at the scale relevant for evaluation, providing a basis for estimation, design, and inference. Its adequacy is assessed empirically in \Cref{sec:adequacy}.

We consider a fixed pool of $V$ items, such as grant proposals, candidate model responses, or essays, evaluated by an LLM judge. The items are presented sequentially within each \emph{batch}, and the judge assigns a binary rating (accept/reject, valid/invalid, etc.) to each item. This protocol is repeated over $N$ batches. Let $Y_{tj}\in\{0,1\}$ denote the rating at within-batch position $j\in\{1,\dots,V\}$ of batch $t\in\{1,\dots,N\}$, let $V_{tj}\in\{1,\dots,V\}$ denote the item occupying that slot, and let $\theta_v\in\R$ denote the \emph{intrinsic quality} of item $v$.

The \emph{batched} protocol is common in practice. It resembles expert-panel evaluation, where items are assessed within a shared slate, and is also token-efficient for LLM evaluation because instructions and rubrics are shared across items. The resulting context sharing induces the dependence of interest: within a batch, each rating may depend on preceding ratings. The previous subsection identified three features: dependence on previous ratings, a distinct first-position effect, and batch-level heterogeneity.

A natural statistical tool to capture these features is a mixed-effects model with Markov dependence. In particular, we encode the patterns through a contextual shift $C_{tj}$ added to quality on the logit scale, and model the rating at slot $(t,j)$, for item $v=V_{tj}$, as conditionally Bernoulli:
\begin{equation}
  Y_{tj}\given(\theta_v,C_{tj}) \;\sim\;
  \mathrm{Bernoulli}\!\big(\sigma(\theta_v + C_{tj})\big),
  \qquad \sigma(x)=(1+e^{-x})^{-1}.
  \label{eq:bern}
\end{equation}
The \emph{model class} is indexed by memory depth $K\ge 0$, with
\begin{equation}
  C_{tj} \;=\; \sum_{k=1}^{K}\beta_k\,Y_{t,j-k} \;+\; \gamma\,\one_{\{j=1\}} \;+\; u_t,
  \qquad u_t \stackrel{\text{i.i.d.}}{\sim}\N(0,\sigma_b^2),
  \label{eq:shift-K}
\end{equation}
with boundary convention $Y_{t,j}:=0$ for $j\le 0$. Here $\beta_k$ is the serial contrast at lag $k$, $\gamma$ is the cold-start effect on the first item of a batch, and $u_t$ is a batch-level random intercept capturing variation in judge stringency across batches. The model includes as special cases no serial contrast ($\beta_1=\dots=\beta_K=0$), no cold-start effect ($\gamma=0$), and no batch-level variation ($\sigma_b^2=0$). The depth $K$ is not fixed a priori. We select it separately for each judge by standard model-selection criteria; \Cref{sec:fit-four} selects $K=1$ for Gemini 2.5 Flash and $K=3$ for GPT-5.4-mini and Claude Haiku 4.5.

The key structural feature, shared by every member of the class, is that $C_{tj}$ depends only on information preceding slot $(t,j)$: it is a function of previous ratings, the position indicator, and the batch effect, but not of the item identity $V_{tj}$. Combined with a randomized, item-exchangeable allocation (\Cref{ass:design}), this yields a single \emph{generic} contextual shift $C\sim F_C$, defined as the position-averaged distribution of $C_{tj}$ over a batch, unconditional on the item occupying the slot. We then define the \emph{marginal win probability}
\begin{equation}
  \pi(\theta) \;:=\; \E_C\big[\sigma(\theta + C)\big]
  \;=\; \int_{\R}\sigma(\theta+c)\,dF_C(c).
  \label{eq:pi}
\end{equation}
The map $\theta\mapsto\pi(\theta)$ is convex when $\theta+C<0$ almost surely and
concave when $\theta+C>0$ almost surely. 

\begin{assumption}[Exchangeable allocation in independent randomization units]
\label{ass:design}
The allocation $\{V_{tj}\}$ is exchangeable across items, meaning that for every permutation $\varsigma$ of $\{1,\dots,V\}$, the law of $\{V_{tj}\}$ equals that of $\{\varsigma(V_{tj})\}$, and the allocation is independent of $\{u_t\}$ and the ratings' conditional draws. Every batch presents every item exactly once, so $N$ is both the number of batches and the per-item evaluation budget. Moreover, the allocation is generated in \emph{independent randomization units}: either \textnormal{(a)} the $N$ within-batch orders are drawn i.i.d.\ across batches, or \textnormal{(b)} the batches form $N/B$ i.i.d.\ blocks of $B$ consecutive batches, with the within-block orders given by a fixed array of $B$ sequences composed with a single item-to-symbol bijection drawn uniformly and independently for each block.
\end{assumption}

Two allocations satisfy this assumption and are compared in the paper: the \emph{complete random shuffle}, corresponding to (a) with uniformly random orders, and the balanced Williams design of \Cref{sec:leaderboard}, corresponding to (b) with $B=V$ and an independently redrawn item-to-symbol assignment for each block. The independent-unit condition ensures the applicability of the law of large numbers, central limit theorem, and concentration bounds used below, with the batch as the unit under (a) and the block under (b). Exchangeability alone is insufficient, since it permits, for example, a single random order to be reused across all batches, yielding random limits for the empirical win rates. Exchangeability allows both designs to be treated within a common framework and ensures that the reference law $F_C$, and hence the map $\pi(\theta)$ in \eqref{eq:pi}, is the same under either design.

The model \eqref{eq:shift-K} is a \emph{transition} (Markov) model of order $K$. Conditional on $u_t$, the chain rule factorizes the batch joint probability according to the previous $K$ ratings, so the ``lagged response as covariate'' objective maximized by a mixed-model routine is the exact likelihood. Thus, \eqref{eq:bern}--\eqref{eq:shift-K} can be fitted by maximum likelihood using standard mixed-model methods, with likelihood-based inference available directly. Under mild regularity conditions, the GLMM MLE is consistent and asymptotically normal as the number of independent randomization units grows, following standard GLMM inference theory (see Supplement~\ref{supp:glmm-theory}).

One practical caveat is \emph{separation}. For example, if an item is accepted or rejected on nearly all of its appearances, the corresponding $\theta_v$ may have no finite maximum likelihood estimate. A common remedy is to use a weakly informative prior on the fixed effects, such as a Gaussian prior, equivalent to a ridge penalty \citep{chung2013blme}, or a heavier-tailed Cauchy prior \citep{gelman2008weakly}. When penalization is used, inference can be based on the bootstrap. Under the random-shuffle design, the experimental unit is the \emph{batch}, and the bootstrap resamples whole batches with replacement.

\subsection{Fitted models for three AI judges}
\label{sec:fit-four}

\Cref{fig:forest} reports the bootstrap estimates of the memory kernel for all three judges from the proposal evaluation data, using ridge-penalized fitting, with $V=30$ and $N=200$. The lag-1 contrast is negative and supported by the bootstrap for all three judges, indicating that an acceptance lowers the probability of a subsequent acceptance. Memory depth nonetheless varies. Gemini's signal is confined to lag-1; GPT-5.4-mini shows an additional supported lag-3 effect and a borderline cold-start effect; and Claude shows significant negative contrasts at all three lags and a significant cold-start effect. The batch random effect is substantial in all three, with $\sigma_b^2$ equal to $0.446$, $1.173$, and $0.418$, and the overall acceptance rate decreases across the three judges, from $0.686$ to $0.675$ and $0.567$.

We emphasize that this shared structure is specific to these three judges and this task. For other tasks and judges, an appropriate specification should be selected within the proposed model class, or a suitable extension, rather than adopting the fitted structure found here.

\begin{figure}[H]
\centering
\includegraphics[width=0.9\linewidth]{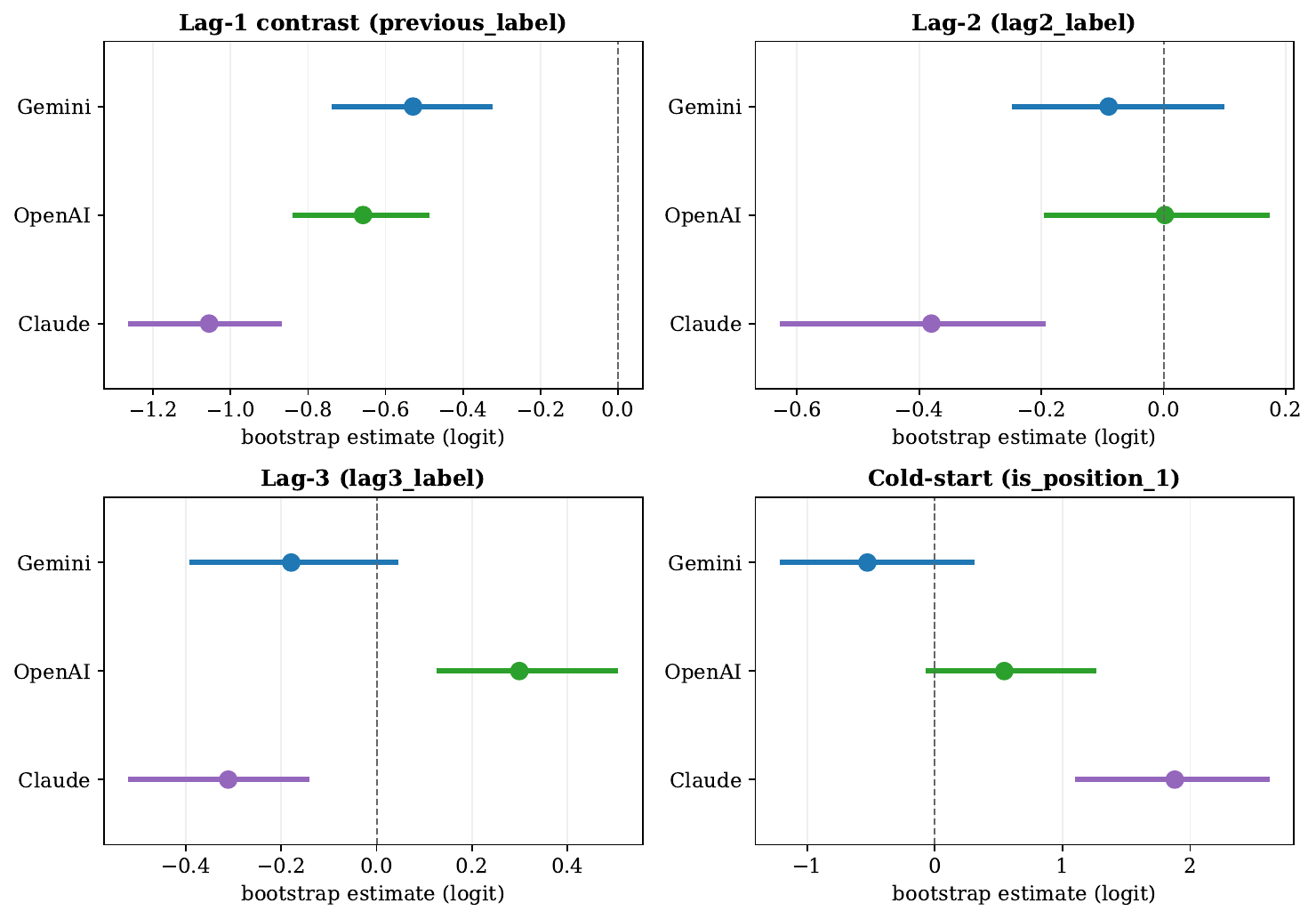}
\caption{Cross-model serial-contrast estimates of coefficients, with
the bootstrap $95\%$ intervals.}
\label{fig:forest}
\end{figure}
\subsection[Model adequacy through out-of-sample predictive checks]{Model adequacy by out-of-sample predictive evaluation}
\label{sec:adequacy}

Establishing that the GLMM is a reasonable summary is not sufficient for our study. \Cref{sec:leaderboard,sec:convexity} build estimators
and tests \emph{on top of} \eqref{eq:bern}--\eqref{eq:shift-K}, so if the working model were missing important
structure, and capturing the judge truly required a neural-network
surrogate, those guarantees would be vacuous. We therefore assess model adequacy through out-of-sample predictive checks of lagged transition probabilities, using newly collected batches that were not used for model fitting or selection.

We collected $100$ new batches using shuffled ordering. Meanwhile, we froze each judge's fitted GLMM and simulated $500$ datasets  from these fitted models under the same new orderings. For any quantity computed from the new test data, we can use the 500 simulated datasets to construct confidence intervals (by taking the corresponding percentiles).
The resulting $95\%$ prediction intervals cover $34$ of $36$ (94.4\%) lagged-transition probabilities (\Cref{fig:ppc}); the two misses are GPT-5.4-mini's lag-2 and lag-3 $0\!\to\!1$ cells. Also, the predictive intervals are narrow, relative to the magnitude of the true values.
The approximately correct and informative coverage supports the working model. Note that the intervals condition on fitted parameters, so this is a predictive diagnostic rather than a formal goodness-of-fit test. We also calculate the predictive CIs for three conditional-entropy values (up to order 3) on the new data and all of these are correctly covered. More details about this experiment can be found in Supplement~\ref{supp:predictive-details}.

\begin{figure}[!htbp]
\centering
\begin{subfigure}[t]{0.32\textwidth}
  \includegraphics[width=\linewidth]{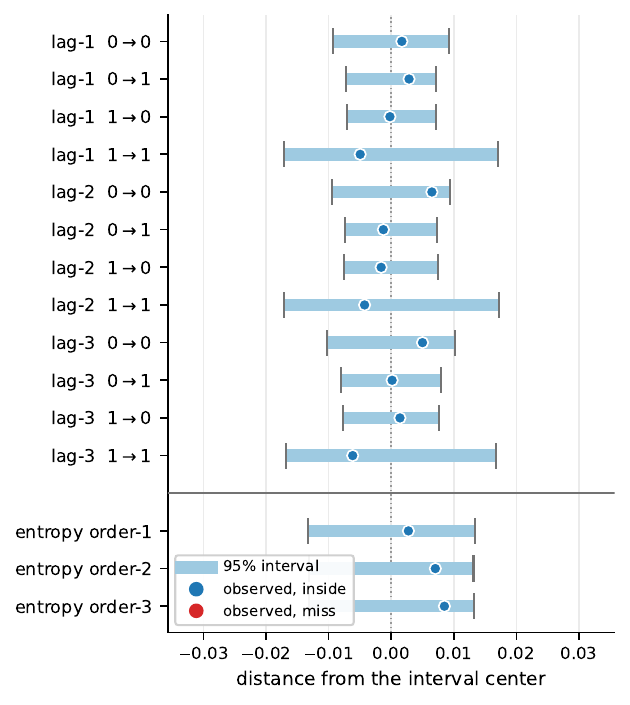}
  \caption{Gemini 2.5 Flash.}
  \label{fig:ppc-gemini}
\end{subfigure}\hfill
\begin{subfigure}[t]{0.32\textwidth}
  \includegraphics[width=\linewidth]{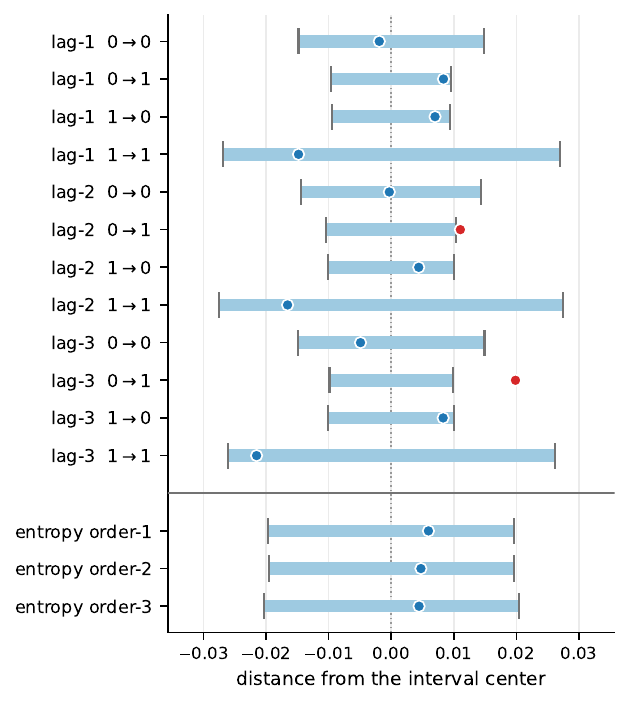}
  \caption{GPT-5.4-mini.}
  \label{fig:ppc-openai}
\end{subfigure}\hfill
\begin{subfigure}[t]{0.32\textwidth}
  \includegraphics[width=\linewidth]{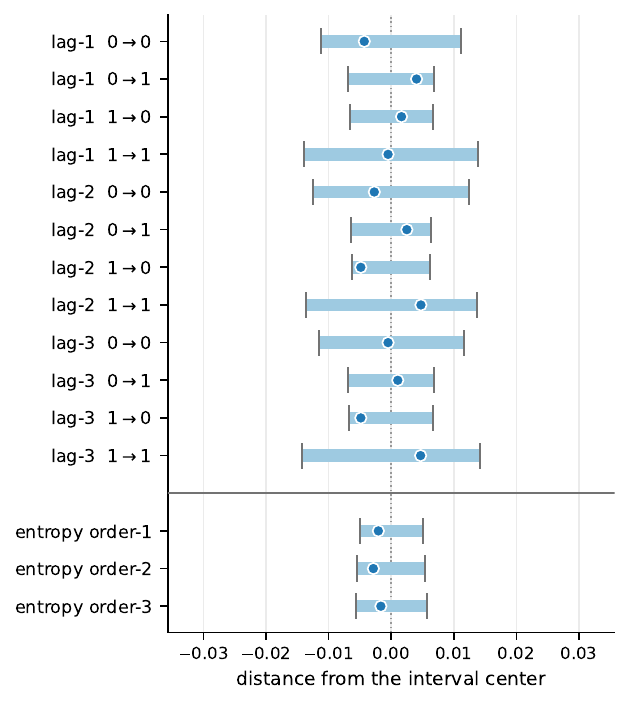}
  \caption{Claude Haiku 4.5.}
  \label{fig:ppc-claude}
\end{subfigure}
\caption{Predictive performance check. Each panel has two
blocks. \emph{Above the rule}: the twelve transition terms. \emph{Below the rule}: the three conditional entropies
$H(Y_t\mid Y_{t-1},\dots,Y_{t-k})$, $k=1,2,3$. For every term we compare the value observed in the $100$ new
batches against the $95\%$ prediction intervals. The intervals are shifted to the common center for visualization purposes.}
\label{fig:ppc}
\end{figure}

In summary, we have shown that the proposed Markov GLMM family can serve as a reasonable approximation of the AI judging mechanism under the batch protocol. In the next two sections, we will use this model to study the ``randomize and average'' evaluation strategy for a deep understanding of its properties.

\section{Task 1: Leaderboard Ranking}
\label{sec:leaderboard}

The design theory of this section and of \Cref{sec:convexity} is developed for the lag-1
working model of \Cref{sec:physics}, the first-order member of the class
\eqref{eq:shift-K}. 
\begin{assumption}
\label{ass:working}
The judge satisfies \eqref{eq:bern} with the $K=1$ member of \eqref{eq:shift-K},
\begin{equation}
  C_{tj} \;=\; \beta\,Y_{t,j-1} \;+\; \gamma\,\one_{\{j=1\}} \;+\; u_t,
  \qquad u_t \stackrel{\text{i.i.d.}}{\sim}\N(0,\sigma_b^2),
  \label{eq:shift}
\end{equation}
where we write $\beta:=\beta_1$, with boundary convention $Y_{t,0}:=0$. Define $\Lambda_1=(1-\tanh(|\beta|/4))^{-2}$. The margin $|\beta|\Lambda_1/V$ in \Cref{prop:lb-rank} below allows for the order-$1/V$ dependence of each item's win rate on the rest of the pool.
\end{assumption}


The first canonical task is the leader selection: from all the items, we want to identify the best one. As the marginal win probability
$\pi$ of \eqref{eq:pi} is strictly \emph{monotone} in quality, the naive win-rate
ranking is expected to be order-consistent, as we will rigorously verify. We will also show that we can further improve the performance via a balanced design. In particular, we will show that a Williams square design gives a variance reduction.

Let the $V$ items have distinct qualities $\theta_1>\theta_2>\cdots>\theta_V$, so
item $1$ is the unique optimum and $\delta:=\theta_1-\theta_2>0$ is the minimal
quality gap. Recall the naive marginal
estimator is $\phat_v=N^{-1}\sum_t Y_{t,P(v,t)}$, where $P(v,t)$ is the position of item $v$ in batch $t$, and the corresponding selection rule is $\hat v=\argmax_v\phat_v$, with ties broken by any fixed rule. 

The performance metric we care about is the \emph{probability of correct selection} $\Pr_{\mathcal D}(\hat v=1)$,
where the subscript records the design $\mathcal D$ under which input orders were
generated. We consider two designs under Assumption~\ref{ass:design} to give the
presentation orders in the batches:
\begin{itemize}
\item \emph{Random Shuffle} $\mathcal D_R$, in which the $N$
within-batch orders are drawn i.i.d.\ uniformly;
\item \emph{Williams Latin square} $\mathcal D_W$, a balanced design that is
simultaneously \emph{position balanced}, so that each item appears equally often in each of
the $V$ positions, and \emph{first-order carryover balanced}, so that each ordered pair
$(a,b)$ appears as consecutive items equally often \citep{williams1949}. Such a square exists for every even $V$, which we assume throughout for this design. Both forms of balance are properties of a complete square of $V$ batches, so we take the budget $N$ to
be a multiple of $V$; the design is then $N/V$ stacked squares, and the item-to-symbol
assignment is redrawn independently for each of them, so every block is exactly balanced.
We call the result the \emph{symbol-randomized} Williams design: the square itself is
deterministic, and the only randomness is the uniform assignment of items to its
symbols, redrawn per block. More details can be found in
\citet{jones2015crossover}.
\end{itemize}
Intuition based on classical experimental design theory suggests the advantage of the Williams Latin square design: it
makes every item's measurement conditions identical, since each item spends the same
number of batches in each position and follows each other item equally often. Two features, nevertheless, separate our setting from the classical one. First, instead of a standard mixed effect model, our current model is Markov with a serial effect on the logit scale. Second, the naive estimator is not an unbiased estimator of the true strength
parameter $\theta_v$; meanwhile, our primary goal is to recover the top item, instead of
estimating the individual parameters. The classical experiment design results are
therefore not directly applicable. However, as we show next, the principled idea of
balanced design still works.

\subsection{Theory for the naive estimator in leaderboard selection}
\label{sec:lb-moments}

We will start with an initial result showing that the naive ``randomize-and-average'' method works for this task. Technical details are included in Supplement~\ref{supp:ranking-details}.

\begin{proposition}
\label{prop:lb-rank}
Under \Cref{ass:working,ass:design}, if the gap between the two top items is sufficiently large such that
$\pi(\theta_1)-\pi(\theta_2)>|\beta|\Lambda_1/V$, naive selection is consistent under either design,
\[
  \Pr_{\mathcal D}(\hat v=1)\;\longrightarrow\;1,
  \qquad \mathcal D\in\{\mathcal D_R,\mathcal D_W\}.
\]
Moreover, $\E[\phat_v(\mathcal D_W)]=\E[\phat_v(\mathcal D_R)]$ for every $v$ and every $N$.
\end{proposition}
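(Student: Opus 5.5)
We will treat the two claims separately, starting with the equality of means because the consistency argument relies on it.

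\emph{Equality of means.} Within a single randomization unit (a batch under $\mathcal D_R$, a block of $V$ batches under $\mathcal D_W$), both designs place item $v$ at each position $j$ with marginal probability $1/V$. Under $\mathcal D_W$ this follows because the symbol-to-item bijection is uniform, so the symbol assigned to $v$ is uniform, and each symbol occupies each position exactly once per block. The conditional law of $Y_{tj}$ given the slot is $\sigma(\theta_v+C_{tj})$. However, $C_{tj}$ depends on the whole preceding prefix of items, so equal position marginals alone do not suffice. The key step is to show that, for a single batch, the joint law of the ordered sequence $(V_{t1},\dots,V_{tV})$ is uniform over permutations under both designs. Under $\mathcal D_W$, each row of a Williams square is a permutation of the symbols. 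Composing that row with a uniform bijection therefore gives a uniformly random permutation of the items. Since $\E[Y_{t,P(v,t)}]$ depends only on the marginal law of batch $t$'s ordering, with $u_t$ independent, the expectations agree batch by batch. Summing over $t$ gives $\E[\phat_v(\mathcal D_W)]=\E[\phat_v(\mathcal D_R)]$ for every $N$. Call this common value $\bar p_v$.

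\emph{Consistency.} We decompose
\[
\phat_1-\phat_2=(\bar p_1-\bar p_2)+(\phat_1-\bar p_1)-(\phat_2-\bar p_2).
\]
The plan has two parts.

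\begin{enumerate}
\item[(i)] We will prove the deterministic bound $|\bar p_v-\pi(\theta_v)|\le |\beta|\Lambda_1/(2V)$. The hypothesis then gives $\bar p_1-\bar p_2\ge \pi(\theta_1)-\pi(\theta_2)-|\beta|\Lambda_1/V>0$. The same bound, together with monotonicity of $\pi$, yields a positive margin $\bar p_1-\bar p_v>0$ for every $v\ge 2$, provided we compare against $\theta_2\ge\theta_v$.
\item[(ii)] We will show that the fluctuations vanish. Under Assumption~\ref{ass:design}, $\phat_v$ is an average of $N$ (or $N/B$) i.i.d.\ bounded unit-level averages. Hoeffding's inequality over units gives $\Pr(|\phat_v-\bar p_v|>\epsilon)\le 2e^{-cN\epsilon^2}$, and a union bound over the $V$ items finishes the argument with $\epsilon$ equal to half the margin.
\end{enumerate}

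\emph{The bias bound in (i).} This is the main obstacle. The generic $C$ in \eqref{eq:pi} is the position-averaged contextual shift with the item left unspecified. The actual shift preceding item $v$, however, is influenced by which items occupy the earlier slots, and in particular it excludes $v$ itself from those slots. The difference between conditioning on ``$v$ is at slot $j$'' and the generic slot is therefore a sampling-without-replacement effect of order $1/V$ on the law of the previous rating $Y_{t,j-1}$.

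We plan to control this by a coupling. Compare the chain in which slot $j$ holds $v$ with the chain in which the item at slot $j$ is drawn from the generic mixture; these differ only through one item swapped in the prefix. A change in the prefix alters $\Pr(Y_{t,j-1}=1)$, which then propagates forward through the Markov chain. The per-step sensitivity of $\sigma(x+\beta y)$ to $y$ is a factor involving $\tanh(|\beta|/4)$, namely the Dobrushin contraction coefficient of the binary kernel. Summing this geometric propagation over lags yields a factor $(1-\tanh(|\beta|/4))^{-1}$, and we expect the two-sided effect (influence into and out of the slot) to square it, producing $\Lambda_1$. Each perturbation carries the factor $|\beta|/4\cdot$(probability mass $1/V$), where $|\beta|/4$ is the Lipschitz constant of $\sigma$ in $y$.

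Making the constants match exactly $\Lambda_1/V$ is the delicate part. We would first try to establish the bound with a generic constant $c|\beta|\Lambda_1/V$ and then tighten it.
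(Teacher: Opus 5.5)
Your skeleton is the paper's. You get the mean identity from the uniform marginal law of each realized batch order $\tau^{-1}\circ\rho$. You decompose $\bar p_v=\pi(\theta_v)+e_v$ into the generic value plus a missing-self correction. You apply a law of large numbers over i.i.d.\ batches or blocks (your Hoeffding bound works as well), then a union bound over the contrasts against $\theta_2$.

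The gap is the one step that carries all the quantitative content: you do not prove $|e_v|\le|\beta|\Lambda_1/(2V)$. Your stated fallback, a generic constant $c|\beta|\Lambda_1/V$, gives consistency only under $\pi(\theta_1)-\pi(\theta_2)>2c|\beta|\Lambda_1/V$. That is the stated hypothesis only if $c\le\tfrac12$, so as written the proposal proves a weaker statement. Your guess that a ``two-sided effect'' squares the geometric factor is also not the mechanism. Item $v$'s rating depends only on its prefix, so there is no influence out of the slot to account for.

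You can close the gap along your own lines. Condition on $u$. Slot $1$ contributes nothing, since its context is $\gamma+u$ under every allocation. At slot $j\ge2$, item $v$ succeeds with probability $\sigma(\theta_v+u)+A^{\theta_v}(u)\,q_{j-1}$. Here $q_{j-1}$ is the predecessor's success probability, and $A^{\theta_v}(u)=\sigma(\theta_v+\beta+u)-\sigma(\theta_v+u)$ satisfies $|A^{\theta_v}(u)|\le x:=\tanh(|\beta|/4)\le|\beta|/4$. Hence $e_v=V^{-1}\sum_{j\ge2}\E_u[A^{\theta_v}(u)(\bar q^{(v)}_j(u)-\bar q_j(u))]$, where the two averages are over $v$-avoiding and unrestricted uniform prefixes. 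Couple them by replacing $v$ (if present) with a uniformly chosen unused item; the result is exactly uniform on $v$-avoiding arrangements.

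The paper then unrolls $q_{j-1}=\sum_kT_k$ with $|T_k|\le x^k$. Each $T_k$ depends only on a window of $k+1$ slots, which contains $v$ with probability at most $(k+1)/V$. This gives $|\bar q^{(v)}_j-\bar q_j|\le\sum_k2x^k(k+1)/V=2\Lambda_1/V$ and hence $|e_v|\le|\beta|\Lambda_1/(2V)$. So the second power of $(1-x)^{-1}$ comes from the window length, not from a two-sided effect.

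Your forward-propagation picture, carried out directly, is actually sharper. Suppose $v$ occupies prefix slot $i$. The coupled success probabilities agree before slot $i$ and differ by at most $1$ at slot $i$. After slot $i$ the items coincide, so $\Delta p_m=A_m\Delta p_{m-1}$, giving $|\Delta q_{j-1}|\le x^{j-1-i}$. Averaging over $i$ with weight $1/V$ gives $|e_v|\le x/(V(1-x))\le|\beta|/(4V(1-x))\le|\beta|\Lambda_1/(2V)$. Either way you obtain the constant the hypothesis requires, and the rest of your argument goes through.
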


Consistency in $N$ holds for any pool whose two
leaders are separable. So the naive ``randomize-and-average'' recipe under both designs is consistent for the task.
Based on the above result, any advantage of one design over the other must therefore live in
the \emph{variance} and that is what we will study next. Writing $\mathbf Z(\mathcal D)$ for the vector
of contrasts $\phat_1-\phat_k$, $k=2,\dots,V$, and
$\bm\mu(\mathcal D):=\E[\mathbf Z(\mathcal D)\mid\mathcal D]$ for its mean once the design
is drawn, the law of total variance decomposes the covariance under \emph{either}
design into the same two pieces,
\begin{equation}
  \Cov(\mathbf Z(\mathcal D))
  \;=\;\underbrace{\E_{\mathcal D}\!\big[\Cov(\mathbf Z\mid\mathcal D)\big]}_{\text{sampling}}
  \;+\;\underbrace{\Cov_{\mathcal D}\!\big(\bm\mu(\mathcal D)\big)}_{\text{design}}.
  \label{eq:lb-ltv}
\end{equation}
The first term is sampling variation with the design fixed; the second is variation from randomized presentation orders. Let $M_v(\pi)$ be the expected rating of item $v$ in order $\pi$, and $\bar m_v(\tau)$ its average expected rating over a Williams block with symbol assignment $\tau$. Define $\nu_{\mathrm R,k}=\Var_{\pi}[M_1(\pi)-M_k(\pi)]$ and $\nu_{\mathrm W,k}=\Var_{\tau}[\bar m_1(\tau)-\bar m_k(\tau)]$, with uniform random orders or assignments. The corresponding design variances are $\nu_{\mathrm R,k}/N$ and $V\nu_{\mathrm W,k}/N$. 
In addition, we write $\bm\Delta_{\mathrm R}$ and $\bm\Delta_{\mathrm W}$ for the second
term, the \emph{design term}, under the random shuffle and under the
symbol-randomized Williams design, respectively. What is being averaged over
differs: for the shuffle the design draw is the $N$ independent orderings, for
Williams it is the per-block item-to-symbol bijections.

For the design variance to be worth removing, it must be present in the first place. Equivalently, we need expected ratings to vary across slots.

\begin{assumption}[Non-degenerate design variance]
\label{ass:nondegen}
$(\beta,\gamma)\ne(0,0)$, and every top contrast carries positive per-batch design
variance: $\nu_{\mathrm R,k}>0$ for every $k=2,\dots,V$.
\end{assumption}

\begin{proposition}[Finite-sample variance dominance]
\label{prop:lb-var}
Fix the item qualities $\bm\theta$ and let \Cref{ass:working,ass:design} hold.
Then for every budget $N$ (a multiple of $V$),
\[
  N\Big([\Cov(\mathbf Z(\mathcal D_R))]_{k-1,k-1}
  -[\Cov(\mathbf Z(\mathcal D_W))]_{k-1,k-1}\Big)
  \;=\;\nu_{\mathrm R,k}\;-\;V\,\nu_{\mathrm W,k},
  \qquad k=2,\dots,V,
\]
and the right-hand side does not depend on $N$. If all qualities are equal then
$\nu_{\mathrm W,k}=0$ exactly, so the right-hand side is $\nu_{\mathrm R,k}$,
positive under \Cref{ass:nondegen}. Both sides are continuous in $\bm\theta$; hence for every
common value $\theta_*$ at which \Cref{ass:nondegen} holds for the flat
configuration $\bm\theta=\theta_*\bm 1$, there is a $\delta_0>0$ such that whenever
$\max_v|\theta_v-\theta_*|\le\delta_0$, every contrast has strictly smaller
variance under the Williams design at \emph{every} finite $N$.
\end{proposition}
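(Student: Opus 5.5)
The plan is to work with the decomposition \eqref{eq:lb-ltv} and show that the sampling terms coincide exactly under the two designs, so that the whole difference comes from the design terms. Conditional on the realized orders, the batches are independent: each $u_t$ is drawn independently, and the Markov chain runs within a single batch. Hence $\Cov(\mathbf Z\mid\mathcal D)=N^{-2}\sum_t \Cov(\mathbf z_t\mid \text{order}_t)$, where $\mathbf z_t$ collects the per-batch contrasts $Y_{t,P(1,t)}-Y_{t,P(k,t)}$.

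The conditional covariance of a single batch depends only on that batch's order. Under either design, the marginal law of a single batch's order is uniform over all $V!$ permutations. For the shuffle this holds by definition. For Williams, a fixed row of the square composed with a uniform bijection gives a uniform permutation. Taking expectations, $\E_{\mathcal D}[\Cov(\mathbf Z\mid\mathcal D)]=N^{-1}\E_\pi[\Cov(\mathbf z\mid\pi)]$ under both designs, so the sampling terms cancel in the difference. The same marginal-uniformity argument, applied to first moments, also gives the equality of means in \Cref{prop:lb-rank}.

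For the design terms, the contrast mean is $\mu_k(\mathcal D)=N^{-1}\sum_t[M_1(\pi_t)-M_k(\pi_t)]$.
\begin{itemize}
\item Under $\mathcal D_R$ the summands are i.i.d., so the variance is $\nu_{\mathrm R,k}/N$.
\item Under $\mathcal D_W$, $\mu_k=(V/N)\sum_{b}[\bar m_1(\tau_b)-\bar m_k(\tau_b)]$ is an average of $N/V$ i.i.d.\ block averages, so the variance is $(V/N)\nu_{\mathrm W,k}$.
\end{itemize}
Subtracting gives the stated identity, and it is free of $N$.

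For the flat case, take $\theta_v\equiv\theta_*$. Then $M_v(\pi)$ depends only on the slot $j=P(v,\pi)$: by \eqref{eq:shift} the chain's law is then item-independent, so the expected rating at slot $j$ is some $g_j$. The block average is $\bar m_v(\tau)=V^{-1}\sum_j g_j$, because each item occupies each position exactly once per Williams square. This is the same for all $v$ and all $\tau$, so $\nu_{\mathrm W,k}=0$ and the difference equals $\nu_{\mathrm R,k}>0$ by \Cref{ass:nondegen}.

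Continuity then finishes the argument. $M_v(\pi)$ is a finite sum, over the $2^V$ rating paths, of products of $\sigma(\cdot)$ integrated against the Gaussian law of $u_t$. It is therefore continuous in $\bm\theta$ by dominated convergence, since the integrands are bounded by $1$. The quantities $\nu_{\mathrm R,k}$ and $\nu_{\mathrm W,k}$ are finite averages of polynomials in these, over $V!$ orders or bijections, so they are continuous as well. Strict positivity of the $N$-free right-hand side at $\theta_*\bm 1$ then persists on a neighborhood $\max_v|\theta_v-\theta_*|\le\delta_0$, uniformly over $N$, because the right-hand side does not involve $N$.

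I expect the main obstacle to be justifying the sampling-term cancellation carefully. The concern is whether conditional independence across batches, given the orders, really holds, and whether the per-batch conditional covariance depends only on that batch's order and not on the block structure. Both follow from \Cref{ass:design}, since the allocation is independent of $\{u_t\}$ and of the draws, but this step is where I would be most explicit.
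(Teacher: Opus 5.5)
Your proposal is correct and follows the paper's proof essentially step for step. The shared steps are: exact cancellation of the sampling terms because each Williams batch's order is marginally uniform (the paper's \Cref{lem:a-uniform}(b)), the i.i.d.-batch versus i.i.d.-block computation giving $\nu_{\mathrm R,k}/N$ and $V\nu_{\mathrm W,k}/N$, within-block balance forcing $\nu_{\mathrm W,k}=0$ at the flat configuration, and continuity of these finite averages in $\bm\theta$ to obtain $\delta_0$. Your flat-case argument correctly needs only position balance rather than (W1)--(W2), and to finish you should take $\delta_0$ as the minimum over the finitely many contrasts $k$.
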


To compare selection accuracy when leading items are close, consider $\theta_v=\theta_*+t_v/\sqrt N$. Write $F_v(\bm\theta)=\E[\phat_v]$. At $\bm\theta=\theta_*\bm 1$, let $g_{\mathrm{self}}=\partial F_v/\partial\theta_v$ and $g_{\mathrm{cross}}=\partial F_v/\partial\theta_w$ for $w\ne v$; exchangeability makes these derivatives independent of the chosen indices. 

\begin{theorem}[Mean separation and selection dominance under local alternatives]
\label{thm:lb-pcs}
Consider the local-alternative regime $\theta_v=\theta_*+t_v/\sqrt N$ with bounded
$t_v$ and fixed contrasts $t_1-t_k=d_k>0$, $k=2,\dots,V$, where $V\ge4$ is even, and let
\Cref{ass:working,ass:design} hold.
\begin{enumerate}
\item[\textnormal{(i)}] Under both designs,
\[
  \sqrt N\,\E\big[\phat_1-\phat_k\big]\;\longrightarrow\;
  \big(g_{\mathrm{self}}(\theta_*)-g_{\mathrm{cross}}(\theta_*)\big)\,d_k,
  \qquad k=2,\dots,V,
\]
the background level $\theta_*$ and the bulk coefficients $t_v$, $v\notin\{1,k\}$,
canceling exactly in every contrast.
\item[\textnormal{(ii)}] If, in addition, \Cref{ass:nondegen} holds \emph{at the
flat configuration} $\bm\theta=\theta_*\bm 1$, that is,
$$\nu^*_{\mathrm R,k}:=\Var_{\pi\sim\mathrm{Unif}(S_V)}
[M_1(\pi)-M_k(\pi)]\big|_{\bm\theta=\theta_*\bm 1}>0$$
 for every $k$, and the
self-effect is stronger than the cross-effect,
$g_{\mathrm{self}}(\theta_*)>g_{\mathrm{cross}}(\theta_*)$,
\begin{equation}
  \liminf_{N\to\infty}\big[\Pr_{\mathcal D_W}(\hat v=1)-\Pr_{\mathcal D_R}(\hat v=1)\big]>0 .
  \label{eq:lb-pcs}
\end{equation}
\end{enumerate}
\end{theorem}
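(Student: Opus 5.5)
For part (i), I would Taylor-expand the mean map $F_v(\bm\theta)=\E[\phat_v]$ around the flat configuration $\theta_*\bm 1$. By \Cref{prop:lb-rank}, $\E[\phat_v]$ is the same under $\mathcal D_R$ and $\mathcal D_W$, so it suffices to work under the random shuffle. Under the shuffle, $\E[\phat_v]$ equals the average over positions $j$ and uniform orders of $\Pr(Y_{tj}=1\mid V_{tj}=v)$. This is a finite sum of smooth functions of $\bm\theta$: it is a composition of logistic functions through the lag-1 chain, integrated against the Gaussian $u_t$. Hence $F_v$ is $C^2$ with derivatives bounded locally uniformly.

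Exchangeability of the allocation makes $F$ permutation-equivariant, $F_{\varsigma(v)}(\varsigma\bm\theta)=F_v(\bm\theta)$. At the flat point this gives $\partial F_v/\partial\theta_v=g_{\mathrm{self}}$ and $\partial F_v/\partial\theta_w=g_{\mathrm{cross}}$ for all $w\ne v$. Expanding with $\bm\theta-\theta_*\bm 1=\mathbf t/\sqrt N$ yields
\[
\sqrt N(F_1-F_k)=g_{\mathrm{self}}(t_1-t_k)+g_{\mathrm{cross}}\Big(\textstyle\sum_{w\ne1}t_w-\sum_{w\ne k}t_w\Big)+O(N^{-1/2}).
\]
The cross sum equals $g_{\mathrm{cross}}(t_k-t_1)$. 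This gives the limit $(g_{\mathrm{self}}-g_{\mathrm{cross}})d_k$, with $\theta_*$ and the bulk coefficients $t_v$, $v\notin\{1,k\}$, canceling. The remainder is uniform because the $t_v$ are bounded.

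For part (ii), I would establish a joint CLT for $\sqrt N(\mathbf Z-\E\mathbf Z)$ under each design along the local sequence. The independent randomization units of \Cref{ass:design} (batches under $\mathcal D_R$, blocks of $V$ batches under $\mathcal D_W$) are i.i.d. with bounded summands, since each $\phat_v$ is an average of $0/1$ ratings. A triangular-array Lindeberg CLT therefore applies, because the law depends on $N$ only through $\bm\theta$. The limiting covariances are $\Sigma_{\mathcal D}(\theta_*\bm1)$, the flat-configuration values of $N\Cov(\mathbf Z(\mathcal D))$, which exist by the continuity in \Cref{prop:lb-var}. Combined with (i),
\[
\Pr_{\mathcal D}(\hat v=1)=\Pr(\mathbf Z>0)\to\Pr\big(\mathbf W_{\mathcal D}+\bm m>0\big),\qquad \mathbf W_{\mathcal D}\sim\N(0,\Sigma_{\mathcal D}),\quad m_k=(g_{\mathrm{self}}-g_{\mathrm{cross}})d_k>0.
\]
This uses that the orthant boundary has Gaussian measure zero, so ties do not matter. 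By \Cref{prop:lb-var} at the flat point, $\Sigma_R-\Sigma_W$ has diagonal entries $\nu^*_{\mathrm R,k}>0$, since $\nu_{\mathrm W,k}=0$ there.

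The main obstacle is turning this into a strict inequality between orthant probabilities, because diagonal dominance alone does not order orthant probabilities. The cleanest route is to show more: at the flat point the \emph{whole} matrix difference equals the full design-covariance term, $\Sigma_R-\Sigma_W=\Cov_\pi(\mathbf M)$, which is positive semidefinite. The reason is that the sampling term $\E[\Cov(\mathbf Z\mid\mathcal D)]$ is design-invariant at the flat point, because the judge's chain does not see item labels, and the Williams design term vanishes.

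Then I can write $\mathbf W_R\overset{d}{=}\mathbf W_W+\mathbf G$, with $\mathbf G\sim\N(0,\Cov_\pi(\mathbf M))$ independent of $\mathbf W_W$ and nonzero by \Cref{ass:nondegen}. Let $h(\mathbf g)=\Pr(\mathbf W_W+\bm m+\mathbf g>0)$. Since $\Sigma_W$ is nondegenerate (I would check this from the Bernoulli sampling noise), $h$ is smooth, positive and log-concave as a Gaussian orthant probability. I would show $\E h(\mathbf G)<h(0)$. A first attempt is to check that $h$ is maximized near $\mathbf 0$ in the directions spanned by $\mathbf G$ using $\bm m>0$, but this is delicate. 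If it fails, I would compare directly through the limiting Gaussians, using that $\mathbf G$ is exchangeable in coordinates $2,\dots,V$ by symmetry of the flat configuration. If that is also insufficient, I would fall back on the more standard tool: Anderson's theorem applied to the symmetric convex set $\{\mathbf w:|w_k|<m_k\}$. This requires converting the orthant to a symmetric set, which is the step I expect to need the most care.
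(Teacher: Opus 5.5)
Part (i) and the joint triangular-array CLT in (ii) are fine and follow the paper's argument. Part (i) uses equivariance of $F_v$ at the flat point; the CLT is a block-level Lindeberg argument with limiting covariances $\Sigma_{\mathcal D}$ obtained by continuity in $\bm\theta$. The gap is the last step, the strict ordering of the limiting orthant probabilities. You correctly flag it but do not close it, and none of your three routes closes it as stated.

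The inequality $\E h(\mathbf G)<h(\mathbf 0)$ does not follow from $\Sigma_R-\Sigma_W\succeq 0$. $h$ has strictly positive gradient at $\mathbf 0$, so $\mathbf 0$ is not a maximizer. Log-concavity only bounds $\E\log h(\mathbf G)$, not $\E h(\mathbf G)$. The inequality is in fact false for general PSD noise. Take $\mathbf G=g\mathbf 1$ with scalar $g\sim\N(0,s^2)$. Then $\E h(\mathbf G)=\Pr\big(\min_k((\mathbf W_W)_k+m_k)+g>0\big)\to\tfrac12$ as $s\to\infty$. So it exceeds $h(\mathbf 0)$ whenever $h(\mathbf 0)<\tfrac12$, for instance $h(\mathbf 0)\approx 1/V$ when $\bm m$ is small. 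That is exactly the selection regime of interest. Anderson's theorem is also unavailable: $\{\mathbf w: w_k>-m_k\ \forall k\}$ is not symmetric, and replacing it by $\{|w_k|<m_k\}$ changes the event.

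The missing idea is that at the flat configuration the joint law of $(\phat_1,\dots,\phat_V)$ is exchangeable under \emph{both} designs. This includes permutations that move item $1$; under $\mathcal D_W$ it holds because $\varsigma\circ\tau_r$ is again uniform. So the item-level covariance is intraclass, $(a_{\mathcal D}-c_{\mathcal D})\mathbf I+c_{\mathcal D}\mathbf J$, and the contrast covariance is $(a_{\mathcal D}-c_{\mathcal D})(\mathbf I+\mathbf J)$. Hence $\Sigma_R$, $\Sigma_W$ and your $\Cov_\pi$ term are all positive multiples of the same matrix, with contrast correlation exactly $\tfrac12$. Symmetry in coordinates $2,\dots,V$ alone gives only $\alpha\mathbf I+\beta\mathbf J$ with a design-dependent ratio $\beta/\alpha$, which is not enough.

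Write $\sigma_{k,\mathcal D}^2:=[\Sigma_{\mathcal D}]_{k-1,k-1}$, which is common in $k$. Then $\mathbf W_R\overset{d}{=}\lambda\mathbf W_W$ with $\lambda=\sigma_{k,R}/\sigma_{k,W}>1$. Both limits are therefore orthant probabilities $\Pr(\tilde G_k>-m_k/\sigma_{k,\mathcal D}\ \forall k)$ of one standard Gaussian vector $\tilde{\mathbf G}$ with correlation $\tfrac12(\mathbf I+\mathbf J)$. Since $m_k>0$ and $\sigma_{k,W}<\sigma_{k,R}$, the Williams orthant strictly contains the shuffle orthant. The difference contains an open box of positive Gaussian mass.

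This is the paper's route: threshold dominance plus equal correlations (\Cref{cor:a-exch}). In your notation, $\mathbf G$ is a scaled independent copy of $\mathbf W_W$, so $\E h(\mathbf G)=\Pr(\mathbf W_W+\lambda^{-1}\bm m>\mathbf 0)<h(\mathbf 0)$. That is the one case where your inequality is both needed and true.
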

The condition
$g_{\mathrm{self}}(\theta_*)>g_{\mathrm{cross}}(\theta_*)$ of part (ii) says that the evaluation of item $v$ depends more on its own quality than on that of another item $w\ne v$, which is intuitively reasonable. 

We want to give a summary of the theoretical study above. Overall, for this leaderboard task, the naive estimator remains valid under both designs. We can expect the Williams design to give some advantage, but generally speaking such a gain need not be substantial. However, given that the Williams design does not introduce extra cost in computation, such a gain can still be beneficial, as demonstrated next in our simulation examples.

\subsection{Simulation evidence}
\label{sec:lb-empirical}

We simulate $V=6$ items under two fitted judges: Gemini 2.5 Flash gives a nearly flat position profile (Setting 1), whereas Claude Haiku 4.5 gives an uneven profile (Setting 2). Each panel has one best item with gap $\delta=2.8/\sqrt N$ and otherwise equal qualities. We compare both estimators under both designs at $N=6,12,18$, using $10{,}000$ replicates per cell and uniformly randomized ties. Configuration details can be found in Supplement~\ref{supp:ranking-simulation}.

\Cref{tab:lb-secret} gives the result. In Setting 2 the Williams design is ahead at every budget, by
$1.2$, $2.0$ and $3.1$ percentage points. In Setting 1 it is not: the three gains are
$-0.5$, $-0.1$ and $-0.5$, scattered about zero. The selection probability itself barely changes down either column, as the local scaling intends. The GLMM model-based estimator is always a valid option, and the comparison between the naive estimator and the GLMM estimator depends on the setup of the problem. However, the Williams design does not help the GLMM estimator --- the model fitting has already fully leveraged the information.

The direction of the effect matches \Cref{prop:lb-var}: the naive estimator
gains under the uneven profile of Setting 2, while in Setting 1 the predicted
design-variance removal is far below what $10{,}000$ replicates can resolve
and no gain is detectable. In contrast, the design does little for the
model-based estimator, which already accounts for position.

The advantage is real but modest, about two percentage points of selection accuracy in the favorable configuration, consistent with removing a tenth of the variance of a contrast. Moreover, it is small for a structural reason:
the profile's departure from flatness is concentrated in the first slot, which is one slot out of $V$, so the design variance available to be removed shrinks as the item pool grows. Balancing is a tool for small pools evaluated on tight budgets, which is the
regime studied here.

\begin{table}[H]
\centering
\caption{Top-1 accuracy (\%) for all four combinations of estimator and design, $V=6$,
$10{,}000$ replicates per cell (Monte Carlo standard error $0.71$ on each gain). The item
panel is $\bm\theta=(\theta^\ast+\delta,\theta^\ast,\dots,\theta^\ast)$ with
$\delta=2.8/\sqrt N$, the local scaling of \Cref{thm:lb-pcs}, so the selection problem
holds constant difficulty as the budget grows. Setting 1 has a nearly flat
position profile, so the removable design variance is nearly zero; Setting 2 has a strongly
uneven profile and a substantial predicted gain, which is what the naive
columns show.}
\label{tab:lb-secret}
\begin{tabular}{llcccc}
\toprule
& & \multicolumn{2}{c}{Naive average (\%)} & \multicolumn{2}{c}{GLMM contrast (\%)} \\
\cmidrule(lr){3-4}\cmidrule(lr){5-6}
Setting & Budget & random & Williams & random & Williams \\
\midrule
\multirow{3}{*}{\makecell[l]{Setting 1 (Flat)}}
  & $N{=}6$  & $44.6$ & $44.1$ & $44.4$ & $44.5$ \\
  & $N{=}12$ & $47.8$ & $47.7$ & $47.8$ & $48.2$ \\
  & $N{=}18$ & $49.8$ & $49.3$ & $49.8$ & $49.5$ \\
\midrule
\multirow{3}{*}{\makecell[l]{Setting 2 (Uneven)}}
  & $N{=}6$  & $46.7$ & $47.9$ & $49.6$ & $49.0$ \\
  & $N{=}12$ & $48.0$ & $49.9$ & $51.3$ & $51.5$ \\
  & $N{=}18$ & $48.1$ & $51.2$ & $50.9$ & $52.3$ \\
\bottomrule
\end{tabular}
\end{table}

In summary, we find that ranking is the forgiving case. The naive estimator is already consistent under a mild separation condition, and a
classical balanced design can further sharpen it at no computational cost, though the magnitude of the improvement varies case by case. 
\section{Task 2: Group-Level Inference}
\label{sec:convexity}

The other canonical evaluation task is group-level comparison: given two
collections of items, such as two prompting strategies,
or two candidate model versions, decide whether the AI judge rates one group higher
than the other on average. This is the $A/B$ test of the evaluation world. The naive method records each
item's empirical win rate $\phat_v$ and aggregates these values to the group level for final comparisons, as used
by many contemporary judge-based benchmarks and leaderboards
\citep{zheng2023judging,dubois2023alpacafarm,dubois2024lengthcontrolled,chiang2024chatbot}. In this section, we show that, unlike the more forgiving leaderboard task, group-level inference is vulnerable to the nonlinearity of the problem, and that the naive method can lead to wrong conclusions about differences in mean latent item quality.

\subsection{The population-level bias}
\label{sec:conv-pop}

Let $\Aset$ and $\Bset$ partition the items into groups of sizes $V_A$ and $V_B$. Under \eqref{eq:bern}, define the group means, within-group variances, and third
central moments, for $G\in\{A,B\}$, by
\[
  \bar\theta_G \;:=\; \frac1{V_G}\sum_{v\in G}\theta_v,
  \qquad
  s_G^2 \;:=\; \frac1{V_G}\sum_{v\in G}\big(\theta_v-\bar\theta_G\big)^2,
  \qquad
  m_{3,G} \;:=\; \frac1{V_G}\sum_{v\in G}\big(\theta_v-\bar\theta_G\big)^3 .
\]
Following the latent-scale formulation of item response models \citep{deboeck2004explanatory}, we define group quality as the mean item effect $\bar\theta_G$ on the logit scale. Our target is therefore $\delta:=\bar\theta_A-\bar\theta_B$, the difference in average item quality after accounting for the modeled sequential and batch effects. As a linear contrast of the GLMM parameters, $\delta$ admits estimation and inference under \Cref{prop:glmm}. In contrast, the naive group contrast is
\begin{equation}
  \Dhat_{\Naive} \;=\; \frac{1}{V_A}\sum_{v\in\Aset}\phat_v
  \;-\; \frac{1}{V_B}\sum_{v\in\Bset}\phat_v .
  \label{eq:naive}
\end{equation}
Intuitively, the trouble is that $\phat_v$ estimates a \emph{nonlinear} transform of quality, namely $\pi(\theta_v)$ up to a correction of order $1/V$ (the last term in \eqref{eq:conv-bias}), and averaging a nonlinear transform is not the
same as transforming the average. Typically, the discrepancy is governed
by the curvature of $\pi$, and curvature does not cancel between two groups unless
they have the same spread. The following result makes this precise. Write
$\bar\mu:=\tfrac12(\bar\theta_A+\bar\theta_B)$ and
$\bar\rho:=\max_{v}|\theta_v-\bar\mu|$; a single third-order
expansion of $\pi$ about $\bar\mu$ then describes the naive limit.

\begin{theorem}
\label{thm:conv-bias}
Under \Cref{ass:working,ass:design}, with $V\ge4$, for every $\delta$ (zero or not), as $N\to\infty$,
\begin{equation}
\begin{aligned}
  \Dhat_{\Naive}
  \;\xrightarrow{p}\;& \pi'(\bar\mu)\,\delta
  \;+\; \tfrac12\,\pi''(\bar\mu)\,\big(s_A^2-s_B^2\big)
  \;+\; \tfrac16\,\pi'''(\bar\mu)\,\big(m_{3,A}-m_{3,B}\big)\\
  &\;+\; O\!\Big(\bar\rho^{\,4}\;+\;|\delta|\,\bar\rho^{\,2}
  \;+\;\frac{|\beta|}{V}\big(|\delta|+\bar\rho^{\,2}\big)\Big),
\end{aligned}
  \label{eq:conv-bias}
\end{equation}
where the implied constant depends only on $\beta$ and on the ratio of the two group sizes, and the slope satisfies $\pi'(\bar\mu)=\E_C[\sigma(\bar\mu+C)(1-\sigma(\bar\mu+C))]\in(0,\tfrac14]$.
\end{theorem}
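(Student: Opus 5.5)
The plan is to split $\Dhat_{\Naive}$ into three pieces: a stochastic fluctuation, a deterministic ``context-exclusion'' correction of order $1/V$, and a pure Taylor term in $\pi$. Write $F_v(\bm\theta):=\E[\phat_v]$.

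\emph{Step 1 (reduction to means).} Under \Cref{ass:design}, $\phat_v$ is an average over $N$ batches, or over $N/B$ blocks, that are independent and identically distributed. The summands lie in $[0,1]$. The weak law of large numbers therefore gives $\phat_v-F_v(\bm\theta)\xrightarrow{p}0$ for each of the finitely many $v$, and hence
\[
\Dhat_{\Naive}\xrightarrow{p} V_A^{-1}\sum_{v\in\Aset}F_v-V_B^{-1}\sum_{v\in\Bset}F_v .
\]
By the last part of \Cref{prop:lb-rank}, $F_v$ is the same under either design, so it suffices to work under the random shuffle.

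\emph{Step 2 (the $1/V$ correction).} Write
\[
F_v=\frac1V\sum_j\E\big[\sigma(\theta_v+C_{tj})\given V_{tj}=v\big].
\]
Since $C_{tj}$ depends only on the past, conditioning on $V_{tj}=v$ changes its law only because item $v$ is then excluded from the preceding slots. Relative to the generic law $F_C$, those slots are filled by a uniform random arrangement of the other $V-1$ items. This gives $F_v=\pi(\theta_v)+R_v$, where $R_v$ is the difference between the conditional and the unconditional expectation.

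I would bound $R_v$ by coupling two lag-1 chains that share the batch effect $u_t$ and differ only in whether $v$ may appear in the past. One step of the chain moves the acceptance probability by at most the Lipschitz constant of $\sigma$ times $|\beta|$. For the $\{0,1\}$ chain, Dobrushin's coefficient is bounded by $\tanh(|\beta|/4)$, so the discrepancy contracts geometrically as it propagates. Summing the geometric series produces the factor $\Lambda_1$ (as in \Cref{prop:lb-rank}). Each exclusion changes the predecessor's quality distribution by an amount of order $1/V$ times the spread of $\bm\theta$.

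The key structural point is that $R_v\equiv 0$ on a flat pool, because excluding $v$ changes nothing there. $R_v$ is also a smooth symmetric function of $(\theta_v;\{\theta_w\}_{w\neq v})$. Expanding $R_v$ to second order in the deviations $\theta_w-\bar\mu$, the constant term vanishes. The linear terms, once averaged within each group, leave only a multiple of $\bar\theta_A-\bar\theta_B=\delta$, because within-group deviations sum to zero. The quadratic terms are $O(\bar\rho^2)$. Hence
\[
\frac1{V_A}\sum_{\Aset}R_v-\frac1{V_B}\sum_{\Bset}R_v=O\big(\tfrac{|\beta|}{V}(|\delta|+\bar\rho^2)\big),
\]
with constants depending on $\beta$ (through $\Lambda_1$) and on $V_A/V_B$. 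I expect this step to be the main obstacle, specifically making the coupling and contraction argument produce derivative bounds and not merely sup bounds. I would first try differentiating the chain's transition kernels in $\bm\theta$ and bounding the derivatives through the same contraction.

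\emph{Step 3 (Taylor expansion).} Since $\pi(\theta)=\E_C\sigma(\theta+C)$, all derivatives of $\pi$ are bounded by those of $\sigma$, uniformly. Expand $\pi(\theta_v)$ about $\bar\mu$ to third order with an $O(\bar\rho^4)$ remainder, and average within $G$ using $\bar\theta_A-\bar\mu=\delta/2$ and $\bar\theta_B-\bar\mu=-\delta/2$. The group-$G$ average is
\[
\pi+\pi'(\bar\theta_G-\bar\mu)+\tfrac12\pi''\big(s_G^2+(\bar\theta_G-\bar\mu)^2\big)+\tfrac16\pi'''\big(m_{3,G}+3s_G^2(\bar\theta_G-\bar\mu)+(\bar\theta_G-\bar\mu)^3\big),
\]
with all derivatives evaluated at $\bar\mu$. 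Taking the $A-B$ difference:
\begin{itemize}
\item the $\delta^2/4$ terms cancel;
\item what remains is $\pi'\delta+\tfrac12\pi''(s_A^2-s_B^2)+\tfrac16\pi'''(m_{3,A}-m_{3,B})$;
\item the leftover $\tfrac16\pi'''\big(\tfrac32\delta(s_A^2+s_B^2)+\tfrac14\delta^3\big)$ is $O(|\delta|\bar\rho^2)$, since $|\delta|\le2\bar\rho$.
\end{itemize}
Combining Steps 1--3 yields \eqref{eq:conv-bias}.

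Finally, the slope formula follows from $\sigma'=\sigma(1-\sigma)\in(0,\tfrac14]$, by differentiating under the expectation (dominated convergence).
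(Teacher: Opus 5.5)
\emph{Verdict: genuine gap in Step 2.} Your Steps 1 and 3 match the paper's proof, and the skeleton of Step 2 is also the paper's. The gap is that the estimate you defer is the one that produces the $\tfrac{|\beta|}{V}\bar\rho^{\,2}$ remainder.

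\emph{What matches.} The weak law reduces $\Dhat_{\Naive}$ to $\sum_v c_v\big(\pi(\theta_v)+e_v\big)$, with $c_v=\one_{\{v\in\Aset\}}/V_A-\one_{\{v\in\Bset\}}/V_B$. The third-order expansion of $\pi$ about $\bar\mu$, with the parallel-axis identities, leaves exactly the interaction $\tfrac14\pi'''(\bar\mu)\,\delta\,(s_A^2+s_B^2)+\tfrac1{24}\pi'''(\bar\mu)\,\delta^3$. For Step 2, as in \Cref{prop:a-ms}, $\mathcal E:=\sum_vc_ve_v$ vanishes at the flat pool. Exchangeability makes its gradient there act on $\bm d:=\bm\theta-\bar\mu\one$ as $(a-b)\delta$, and what remains is quadratic in $\bm d$.

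\emph{Why the deferred step is the crux.} Since $\|\bm d\|_2^2$ can be as large as $V\bar\rho^{\,2}$, you need $\|\nabla^2\mathcal E\|_{\mathrm{op}}=O(|\beta|/V^2)$. The paper gets this from a graded scaling of second derivatives, with $x=\tanh(|\beta|/4)$ and for distinct $w,w'\ne v$: $\partial^2_{vv}e_v=O(x\Lambda/V)$, $\partial^2_{vw}e_v,\partial^2_{ww}e_v=O(x\Lambda/V^2)$, and $\partial^2_{ww'}e_v=O(x\Lambda/V^3)$. Suppose you differentiate the kernels and bound the derivatives of the $v$-excluded and the unconditional prefix averages separately through the contraction. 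Then you can certify only $\partial^2_{ww}e_v=O(x\Lambda/V)$ and $\partial^2_{ww'}e_v=O(x\Lambda/V^2)$. The Hessian of $\mathcal E$ then has operator norm $O(|\beta|\Lambda/V)$, and the quadratic remainder is only $O(|\beta|\Lambda\bar\rho^{\,2})$. That is the same order as the curvature term $\tfrac12\pi''(\bar\mu)(s_A^2-s_B^2)$ the theorem is meant to isolate. The contraction supplies only the decay $x^{m-1}$ along a window of length $m$; the missing powers of $1/V$ are combinatorial. The linear term is not the problem: such cruder bounds already give $|a-b|=O(|\beta|\Lambda/V)$.

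\emph{How the paper closes it.} In \Cref{lem:a-ms-deriv} it unrolls the predecessor's success probability exactly into window terms $H_{j,m}(W)$ with $|H_{j,m}|\le x^{m-1}$. Then $e_v$ is an explicit average of $D_{j,m}=\E^{-v}H_{j,m}-\E H_{j,m}$, and this difference must be differentiated as a difference.
\begin{itemize}
\item The exchange identity $D_{j,m}=\tfrac mV\{\E[H_{j,m}\given v\notin W]-\E[H_{j,m}\given v\in W]\}$ supplies one factor $m/V$.
\item $\partial_{\theta_w}$ (respectively $\partial^2_{\theta_w\theta_{w'}}$) kills every window not containing $w$ (respectively both $w$ and $w'$). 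The exact conditional inclusion probabilities, computed jointly, supply a further $O(m/V)$ (respectively $O(m^2/V^2)$).
\item Summing $m^dx^{m-1}$ for $d\le3$ is why the constant is $\Lambda=\sum_k(k+1)^4x^k$ rather than your $\Lambda_1$.
\end{itemize}
A maximal-row-sum bound then gives $\|\nabla^2\mathcal E(\bm\theta)\|_{\mathrm{op}}\le25x\Lambda/(VV_{\min})$ uniformly over $\bm\theta\in\R^V$. Taylor's theorem with integral remainder along the whole segment from $\bar\mu\one$ to $\bm\theta$, with $\|\bm d\|_2^2\le V_{\max}(s_A^2+s_B^2+\delta^2/2)$, then gives a global bound with no smallness condition on the spreads. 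If your second-order expansion is meant as a local one, it would additionally need third-derivative bounds with the same graded $V$-scaling, which this route avoids.

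\emph{Minor point in Step 1.} \Cref{prop:lb-rank} concerns only the Williams design. For a general clause (b) allocation, use instead that item-exchangeability makes every batch order marginally uniform, as in \Cref{lem:a-uniform}.
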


Consider \eqref{eq:conv-bias} in the special case
\begin{equation}
  H_0^{\mathrm{m}}:\qquad
  \bar\theta_A=\bar\theta_B=\mu
  \qquad\text{and}\qquad
  m_{3,A}=m_{3,B},
  \label{eq:mtm-null}
\end{equation}
which we call the \emph{matched-third-moment null}. The second condition holds,
for instance, whenever the two within-group quality distributions are
symmetric. Under \eqref{eq:mtm-null}, $\delta=0$ and $\bar\mu=\mu$, so the
signal term, the third-moment term, and every $\delta$-part of the remainder
become zero, and the limit of the naive estimator
carries the curvature term
$\tfrac12\pi''(\mu)(s_A^2-s_B^2)$, nonzero whenever the
groups differ in spread ($s_A^2\ne s_B^2$) and $\pi$ has nonzero curvature at
$\mu$. The limit itself is nonzero whenever the curvature term
dominates the remainder
of \eqref{eq:conv-bias}. The bias is a pure artifact of the probability scale that the naive method relies on.
This is a classical phenomenon arising in the new context of AI evaluation. In longitudinal and
clustered-data models it is well understood that averaging a nonlinear link over a
heterogeneous population leads to biased estimators \citep{zeger1988models,neuhaus1991comparison}. The naive aggregation, which the field treats as an unbiased summary, therefore silently
encodes a difference in within-group spread as a difference in quality.

In general, the group quality difference signal enters the
naive estimator's limit as the compressed $\pi'(\bar\mu)\,\delta$, shrunk by a
factor of at least four and, in harsh regimes where $\sigma(\bar\mu+C)$ concentrates
near $0$ or $1$, by an order of magnitude. So the true difference $\delta$, the object we really care about in comparison, can be either masked or distorted in the naive estimator, due to the nonlinearity of the AI models.

\Cref{thm:conv-bias} indicates the issue of the naive comparison via point estimation. Resorting to hypothesis testing for uncertainty quantification does not repair it, and makes matters worse. Note that the bias $b:=\tfrac12\pi''(\mu)(s_A^2-s_B^2)$
is a \emph{constant} in the evaluation budget, whereas the sampling noise of
$\Dhat_{\Naive}$ shrinks at the usual rate. Write $\Delta_\infty$ for the
in-probability limit of $\Dhat_{\Naive}$ in \eqref{eq:conv-bias}. Under \eqref{eq:mtm-null}, $\Delta_\infty\ne0$ whenever the
curvature term dominates the remainder. This indicates that an increasing sampling budget will only lead to a confident mistake for a test based on the naive statistic.

To make this precise, we first fix the test an analyst would actually run.
Under the random-shuffle mechanism of \Cref{ass:design}, every batch judges each
item exactly once, so the naive contrast is the sample mean of $N$ i.i.d.\
per-batch contrasts:
\begin{equation}
  \Dhat_{\Naive}=\frac1N\sum_{t=1}^N c_t,\qquad
  c_t:=\frac1{V_A}\sum_{v\in\Aset}Y_{t,\cdot(v)}
  -\frac1{V_B}\sum_{v\in\Bset}Y_{t,\cdot(v)}\;\in\;[-1,1],
  \label{eq:batch-contrast}
\end{equation}
where $Y_{t,\cdot(v)}$ denotes item $v$'s rating in batch $t$. The contrasts
$c_t$ are i.i.d.\ across batches with $\E[c_t]=\Delta_\infty$ exactly. A studentized level-$\alpha$ test
can be constructed as follows. Estimate the batch-level variance
$\nu_{\mathrm b}:=\Var(c_1)$ by some consistent estimator $\hat\nu$, form
$T_N:=\sqrt N\,\Dhat_{\Naive}/\hat\nu^{1/2}$, and reject $H_0$ when
$|T_N|>z_{1-\alpha/2}$.

\begin{corollary}
\label{cor:conv-student}
Under the random-shuffle design and \Cref{ass:working}, assume \eqref{eq:mtm-null} with
$\Delta_\infty\ne0$ and $\nu_{\mathrm b}>0$. Then for any estimator
$\hat\nu\xrightarrow{p}\nu_{\mathrm b}$, $|T_N|\xrightarrow{p}\infty.$
\end{corollary}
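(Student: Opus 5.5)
The argument only needs two facts: $\Dhat_{\Naive}$ converges to a nonzero limit, and the studentizing variance converges to a finite positive constant. After that the $\sqrt N$ factor in $T_N$ does all the work.

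First, use the representation \eqref{eq:batch-contrast}. Under the random-shuffle design, \Cref{ass:design}(a) makes the batches independent randomization units. The batch effects $u_t$ are i.i.d. and independent of the allocation, and the within-batch Markov ratings are generated conditionally on $(u_t,\text{order}_t)$. Hence the per-batch contrasts $c_t$ are i.i.d. They also take values in $[-1,1]$, so all their moments are finite. The strong (or weak) law of large numbers then gives
\[
\Dhat_{\Naive}=\frac1N\sum_{t=1}^N c_t\xrightarrow{p}\E[c_1]=\Delta_\infty .
\]
This also identifies $\Delta_\infty$ with the in-probability limit in \Cref{thm:conv-bias}. Strictly, that theorem is not even needed here: the corollary simply assumes $\Delta_\infty\neq0$, and that limit is interpreted through \eqref{eq:conv-bias} under \eqref{eq:mtm-null} as the curvature bias $\tfrac12\pi''(\mu)(s_A^2-s_B^2)$ plus the remainder.

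Second, since $\hat\nu\xrightarrow{p}\nu_{\mathrm b}>0$, the continuous mapping theorem gives $\hat\nu^{-1/2}\xrightarrow{p}\nu_{\mathrm b}^{-1/2}$, a finite positive constant. Combining with the first step,
\[
|\Dhat_{\Naive}|\,\hat\nu^{-1/2}\xrightarrow{p}|\Delta_\infty|\,\nu_{\mathrm b}^{-1/2}=:\kappa>0 .
\]

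Third, conclude divergence. For any $M>0$, on the event $\{|\Dhat_{\Naive}|\hat\nu^{-1/2}>\kappa/2\}$ we have $|T_N|>\sqrt N\,\kappa/2$, and this exceeds $M$ once $N>4M^2/\kappa^2$. That event has probability tending to one, so $\Pr(|T_N|>M)\to1$ for every $M$, which is exactly $|T_N|\xrightarrow{p}\infty$. It follows in particular that the rejection probability $\Pr(|T_N|>z_{1-\alpha/2})\to1$ under the null $H_0^{\mathrm m}$.

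The only step needing care is the i.i.d. claim in the first step. One must check that the batch random effect and the Markov dependence act only within a batch, so nothing couples different $c_t$'s; \Cref{ass:design}(a) together with the independent draws of $u_t$ gives this directly. I expect no genuine obstacle beyond that bookkeeping. The substantive content, that $\Delta_\infty\neq0$ under \eqref{eq:mtm-null} when $s_A^2\neq s_B^2$, is taken as a hypothesis and is supplied by \Cref{thm:conv-bias}.
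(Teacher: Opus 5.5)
Your proof is correct and follows essentially the same argument as the paper: i.i.d.\ bounded batch contrasts $c_t$ under clause (a), consistency of the studentizer, and a Slutsky-type combination with the $\sqrt N$ factor. The only difference is that you use the weak law $\Dhat_{\Naive}\xrightarrow{p}\Delta_\infty$ where the paper invokes the CLT to write $\sqrt N\,\Dhat_{\Naive}=\sqrt N\,\Delta_\infty+O_p(1)$; your version is slightly more economical, since it needs $\nu_{\mathrm b}>0$ only to keep $\hat\nu^{-1/2}$ bounded.
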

So for every fixed level $\alpha\in(0,1)$ the test rejects the true null with probability
tending to one. The test procedure in this case cannot control the type I error rate.

\subsection{Simulation evidence}
\label{sec:conv-illusion}

We evaluate two settings at $V=30$, split $15$ versus $15$, over a grid
of batch counts $N$:
\begin{enumerate}
\item Setting 1 (Null): equal means, unequal spread. $\bar\theta_A=\bar\theta_B=-0.5$,
  $\mathrm{sd}(\theta_A)=1.27$ against $\mathrm{sd}(\theta_B)=0.11$. So the two groups do not differ in average quality.
\item Setting 2 (Alternative): a true gap, equal spread. Group means $+0.2$ and $-0.2$, with $\delta = 0.4$, matched spreads $\approx0.45$. So the two groups differ in their average quality.
\end{enumerate}
Again, we compare the naive method with the principled GLMM fit. Two aspects are evaluated: the proportion of replications identifying group $A$ as the winner, and the rejection rate of the test of $H_0: \bar\theta_A = \bar\theta_B$ at level $0.05$. For hypothesis testing we use the percentile bootstrap, rejecting when the $95\%$ interval of the batch-resampled contrast excludes zero, for both the naive estimator and the GLMM fit, since the bootstrap is the preferable method in practice for AI judges, with the GLMM being only an approximation. For completeness we also include the Wald test of the GLMM. Each cell uses $2000$ replications for every arm.

\begin{figure}[H]
\centering
\begin{subfigure}[t]{0.48\textwidth}
  \includegraphics[width=\linewidth]{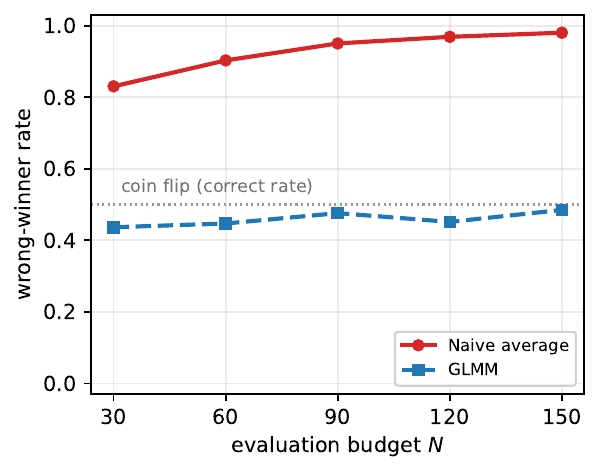}
  \caption{Setting 1, point estimate: rate of naming group $A$.}
  \label{fig:conv-s1-sign}
\end{subfigure}\hfill
\begin{subfigure}[t]{0.48\textwidth}
  \includegraphics[width=\linewidth]{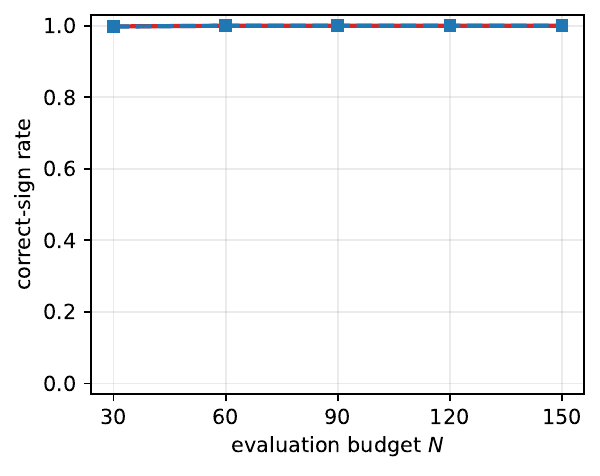}
  \caption{Setting 2, point estimate: correct-sign rate.}
  \label{fig:conv-s2-sign}
\end{subfigure}

\smallskip
\begin{subfigure}[t]{0.48\textwidth}
  \includegraphics[width=\linewidth]{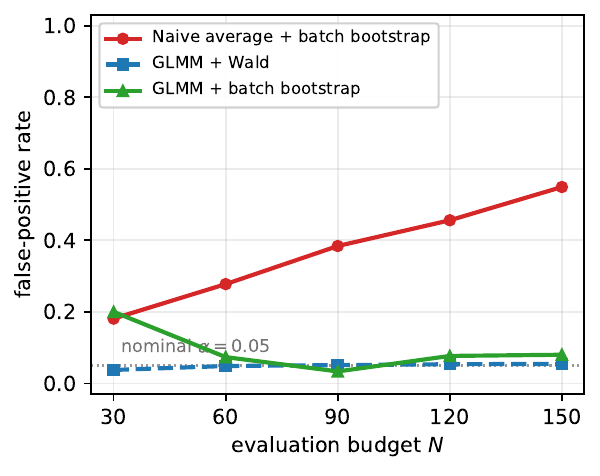}
  \caption{Setting 1, inference: false-positive rate.}
  \label{fig:conv-s1}
\end{subfigure}\hfill
\begin{subfigure}[t]{0.48\textwidth}
  \includegraphics[width=\linewidth]{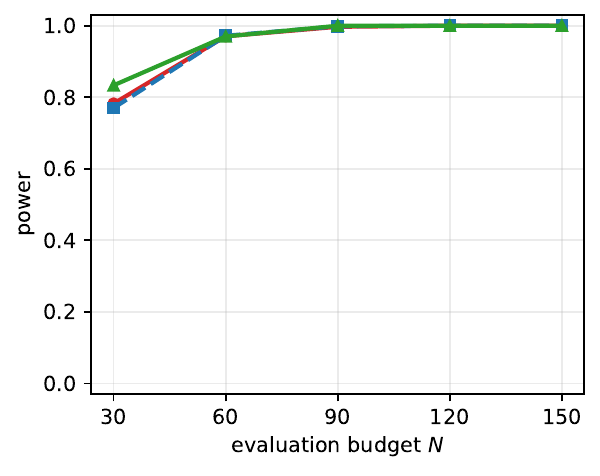}
  \caption{Setting 2, inference: power.}
  \label{fig:conv-s2}
\end{subfigure}
\caption{Group comparison results ($V=30$), for the naive estimator and the GLMM fit.
Top row: the rate at which the sign of each estimator picks group $A$ as the winner.
Bottom row: the false-positive rate and the power of the test.}
\label{fig:conv-three}
\end{figure}
The top row of \Cref{fig:conv-three} gives, for
each estimator, the rate at which its sign declares group $A$ the better one.
In Setting 1, the null case, the naive average names the higher-variance group $A$ in $83.1\%$ of replicates at $N=30$, rising to $98.1\%$ at $N=150$, against a calibrated value of $50\%$, which the GLMM stays close to. In Setting 2, where group $A$ is truly better, both estimators recover the correct sign in at least $99.8\%$ of replicates at every budget.

The bottom row of \Cref{fig:conv-three} reports the rejection rates on the same
replicates, illustrating the effects on hypothesis testing at level $0.05$. In Setting 1, the rejection rate corresponds to the false-positive rate. The rate of the naive method climbs with the budget, from $18.1\%$ at $N=30$ to $54.9\%$ at $N=150$, while the tests based on the GLMM fit do not: the Wald test holds at $3.8$--$5.5\%$ throughout, and the bootstrap at $6.0$--$8.2\%$ once past the smallest budget, where resampling only thirty batches leaves it liberal at $18.2\%$. In Setting 2, the rejection rate corresponds to the power of the test, and the three methods are indistinguishable: $78.3\%$, $77.0\%$ and $75.9\%$ at $N=30$, and all above $99.7\%$ from $N=90$ on.

The same patterns hold for a smaller pool of $V=6$ items as well, whose details can be found in Supplement~\ref{supp:small-pool}.

Lastly, different from \Cref{sec:leaderboard}, the Williams square design does not help much in the current case for two reasons. First, in group comparison the primary difficulty is the bias of \Cref{thm:conv-bias}, not the variance that the balanced design improves. Our experiments also confirm this (see \Cref{app:williams-group}). Second, the bootstrap is less feasible for a Williams square: its batches are rows of a balanced array, and resampling them destroys the very balance the design provides. Resampling blocks is possible, but that only works for very large $N$, which is not practical in this context.

\section{Robustness to Deep Memory}
\label{sec:boundary}

The analysis of \Cref{sec:leaderboard,sec:convexity} is developed entirely within the
order-1 model: the judge's memory is first-order.  This section introduces simulation evaluation using the lag-3 model as the generative model.  One question of particular interest is: do the conclusions of the two
preceding sections survive when the judge's memory runs deeper than the fitted model?

Specifically, the data-generating judge is the $K=3$ member of \eqref{eq:shift-K} while
the fitted GLMM assumes $K=1$, so the working model is deliberately wrong. The results below are therefore
evidence of empirical robustness within the tested settings, not a guarantee.
We compare it against a correctly specified lag-3 GLMM fitting. Each task is examined on the pool relevant to it: ranking at $V=6$, where \Cref{sec:leaderboard} found the design effect, and group comparison at $V=30$, the pool used in \Cref{sec:conv-illusion}. 
\paragraph{Task 1: leaderboard ranking at $V=6$.} The previous ranking patterns persist under lag-3 generation: Williams designs give a modest gain for naive averaging in the uneven-position setting, and the fitted lag-1 and lag-3 GLMMs perform comparably. Supplement~\ref{supp:deep-ranking} provides the simulation details and \Cref{tab:bd-rank}.

\paragraph{Task 2: group comparison at $V=30$.}
We follow \Cref{sec:conv-illusion} on the same $V=30$ pool split $15$ versus $15$, again
changing only the judge to a lag-3 GLMM.  The generating judge is Claude Haiku 4.5, Setting 2 of Task 1, $(\beta_1,\beta_2,\beta_3)=(-1.055,-0.380,-0.311)$ with $\gamma=1.879$---the judge whose position profile is uneven.  \Cref{tab:bd-group30} gives the result. The naive false-positive rate rises from $26.0\%$ to $70.5\%$ across $N$ while both GLMM versions give approximately correct control of the false positive rate, for a sufficiently large $N$. The bias of the naive method drives the sign rate from $86.5\%$ to
$100\%$, and the false-positive rate of the test is likewise uncontrolled.

\begin{table}[!htbp]
\centering
\caption{The group-comparison results under a lag-$3$ model for $V=30$. The bootstrap
($B=200$) is used for inference. The results are under the null case when $\bar\theta_A = \bar\theta_B$.}
\label{tab:bd-group30}
\begin{tabular}{lccccccc}
\toprule
 & \multicolumn{3}{c}{Type-I error, null (\%)} & & \multicolumn{3}{c}{Sign rate (Group A), null (\%)} \\
\cmidrule(lr){2-4}\cmidrule(lr){6-8}
 & Naive & GLMM & GLMM & & Naive & GLMM & GLMM \\
$N$ & average & lag-1 & lag-3 & & average & lag-1 & lag-3 \\
\midrule
$30$  & $26.0$          & $24.5$ & $25.0$ & & $86.5$           & $41.0$ & $40.5$ \\
$60$  & $38.0$          & $10.5$ & $11.5$ & & $95.5$           & $43.5$ & $44.5$ \\
$90$  & $51.0$          & $6.0$  & $4.5$  & & $97.5$           & $46.0$ & $46.5$ \\
$120$ & $66.5$          & $6.5$  & $6.0$  & & $99.0$           & $49.5$ & $51.5$ \\
$150$ & $\mathbf{70.5}$ & $5.5$  & $6.5$  & & $\mathbf{100.0}$ & $44.5$ & $46.5$ \\
\bottomrule
\end{tabular}
\end{table}

In summary, when the judge's memory runs beyond the fitted
lag-1 kernel, the main messages
of the preceding sections still hold. The first-order model fitting is still useful though it is only approximately correct. Such an observation serves as reassuring evidence for using a reasonable GLMM fit for AI judges.

\section{Data Example: Comparing Two Graphical Model Estimation Methods by AI-Judged Edge Quality}
\label{sec:data-example}

\citet{li2020ggmnetwork} introduced a Gaussian graphical model for \emph{network-linked}
observations called ``GNC-lasso''. They applied the method to a corpus of statistics papers from \cite{jijin2016coauthorship}, by leveraging the coauthorship network between authors. The variables in the graphical model are $300$ statistical terms, and each observation is an author's term-frequency vector, linked by a coauthorship
network of $635$ authors. The estimated edges are conditional dependencies between
terms, such as \emph{empirical--likelihood} or \emph{false--discovery}. The benchmark competitor is the ordinary graphical lasso (``glasso''; \citealp{friedman2008glasso,rothman2008spice}),
which estimates a sparse precision matrix by an $\ell_1$-penalized Gaussian likelihood
and ignores the coauthorship network. That study uses the two methods to select $25$ edges for the graphical structure and assesses them pair by pair. The two estimated graphs are shown in Supplement~\ref{supp:term-graphs} (\Cref{fig:de-graphs}). Using human evaluation, they 
conclude that GNC-lasso recovers the more meaningful associations. We take this human
assessment as the reference answer, and test the performance of using AI judges for the same task.

The union of the two lists contains $40$ distinct term-pairs, labeled
by provenance: GNC-only ($\mathcal G$, $15$ pairs), glasso-only ($\mathcal L$, $15$), or
shared ($\mathcal S$, $10$). We exclude the shared pairs, since every judge scores them near-unanimously and they
carry no information about the contrast. This leaves the between-method
difference on the $30$ distinctive pairs. The three judges of \Cref{sec:physics} (Gemini
2.5 Flash, GPT-5.4-mini, Claude Haiku 4.5) score each pair as a meaningful statistical
association ($1$) or not ($0$) over $160$ randomly shuffled batches.

We compare the two aggregation rules: the
naive rule using $\Dhat_{\Naive}$ in \eqref{eq:naive}, and the GLMM-based counterpart of
\Cref{sec:physics}, 
$\Dhat_{\GLMM}$. The
human evaluation in \cite{li2020ggmnetwork} indicates a positive contrast. Because many pairs are judged
unanimously, a case of perfect separation at which the unpenalized item effects diverge, we regularize the model fitting by adding an independent Cauchy prior on the item effects as suggested by \citet{gelman2008weakly}. For a more informative demonstration, we use the Wald-type $p$-value for both rules, with the standard error estimated by resampling whole batches ($B=1000$). Using the quantile-based bootstrap test of \Cref{sec:convexity} instead gives the same rejection decisions, but the comparison is less informative because the smallest $p$-value it can report is about $2/B$. To demonstrate the tradeoff between accuracy and sample size (AI tokens), we check the aggregation results on the first $N$ batches for
$N=20,40,\dots,160$, the sequence an analyst who stopped collecting early would have
seen. \Cref{fig:de-sweep} and \Cref{tab:de-lenient} show how the two methods agree and disagree, depending on the three judges.
\begin{itemize}
\item For \textbf{OpenAI} they agree and both are right. The contrast stays positive and both
$p$-values fall steadily. The difference between the two methods lies in the cost. The GLMM reaches any given
level of confidence in roughly half the batches. It reaches the floor of the plotted scale by $N=40$, while the naive method needs $N\approx100$. So the GLMM inference can save tokens substantially for the same conclusion.
\item For \textbf{Claude} the naive method never delivers a conclusive decision. It is not significant at a single
sample size in the sequence. Its point estimate
crosses zero at $N=60$ and stays negative. The GLMM is significant from $N=20$ onward and
falls to $p\approx2\times10^{-12}$. Thus the GLMM achieves the desired conclusion but the naive method fails to get there.
\item For \textbf{Gemini} the naive rule is confidently wrong. Its contrast is negative and becomes significant from $N=40$ onward. The GLMM never accumulates evidence and is not
significant at the full sample. More detailed examination of the data reveals that Gemini is the judge furthest into saturation, with an accept rate of $0.86$ and $60\%$ of pairs unanimous. So it has stopped discriminating
between the two estimated graphs. The GLMM inference returns an inconclusive result, which is the
correct description of the evidence.
\end{itemize}

\begin{figure}[t]
\centering
\includegraphics[width=\textwidth]{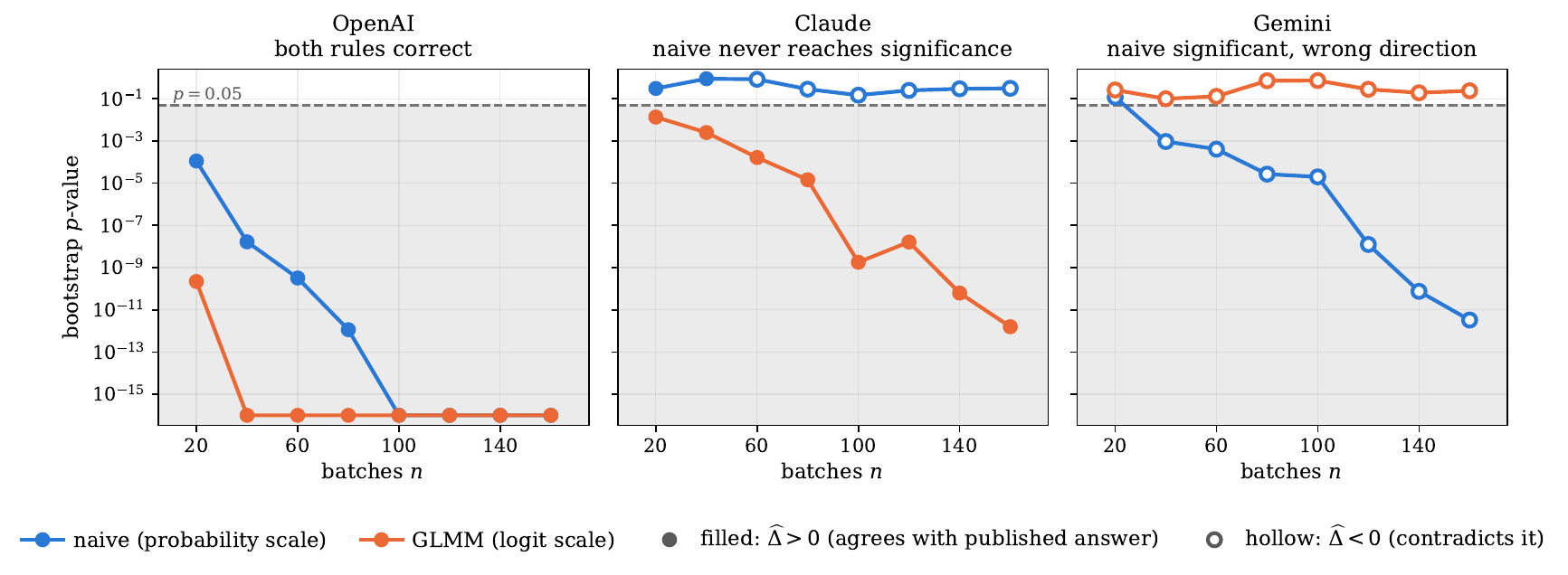}
\caption{Wald $p$-value with batch-bootstrap standard error against the number of batches for the two aggregation rules, on
the $30$ distinctive word pairs, fitted on the first $N$ batches. Shading marks
$p<0.05$. 
Filled markers are contrasts in the direction of the published reference answer
($\Dhat>0$, favoring GNC-lasso); hollow markers contradict it. Values below $10^{-16}$ are displayed at the axis limit.}
\label{fig:de-sweep}
\end{figure}

\begin{table}[t]
\centering
\caption{Full sample ($N=160$): contrasts against the published reference answer, which is $\Dhat>0$; $p$-values are from the Wald statistic with bootstrap standard error. ``Correct'' means significance with the sign matching the human evaluation in \cite{li2020ggmnetwork}; ``inconclusive'' means not significant; ``wrong'' means significant with the
opposite sign.}
\label{tab:de-lenient}
\begin{tabular}{lcccccc}
\toprule
& \multicolumn{3}{c}{naive} & \multicolumn{3}{c}{GLMM} \\
\cmidrule(lr){2-4}\cmidrule(lr){5-7}
Judge & $\Dhat_{\Naive}$ & $p$ & outcome & $\Dhat_{\GLMM}$ & $p$ & outcome \\
\midrule
OpenAI & $+0.092$ & ${<}10^{-12}$ & correct & $+2.30$ & ${<}10^{-12}$ & correct \\
Claude & $-0.007$ & $0.31$ & inconclusive & $+1.16$ & $2\times10^{-12}$ & correct \\
Gemini & $-0.038$ & $3\times10^{-12}$ & wrong & $-0.18$ & $0.24$ & inconclusive \\
\bottomrule
\end{tabular}
\end{table}

\section{Discussion and Practical Guidelines}
\label{sec:discussion}

Large language models increasingly perform tasks once requiring human judgment, often at a scale where statistical properties matter. When a judge scores a sequence of items, its ratings are dependent, and prompt engineering alone cannot determine what their average represents. The adoption of LLM-as-a-judge evaluation therefore requires the corresponding statistical theory to guide the design and interpretation for such procedures.

In this paper, we model the judge through a Markov GLMM \emph{working model}. We treat it as an approximation rather than the true mechanism and assess its adequacy using out-of-sample predictive checks across frontier models. This formulation turns evaluation into inference under an explicit, empirically validated model. The two tasks we study require different strategies. For \emph{leaderboard ranking}, the naive marginal average is consistent under a mild separation condition, while a Williams square design can improve efficiency in many cases at essentially no computational cost. In contrast, for \emph{group comparison}, the marginal average can fail severely because the nonlinear link converts differences in within-group variation into persistent bias. The GLMM estimation provides valid inference for both tasks. The method also remains empirically robust beyond our theoretical setting: when the judge has memory beyond the lag-$1$ kernel, the GLMM achieves approximately correct error control across all regimes considered. We therefore recommend GLMM inference with a random-shuffle batch design for reliable inference using AI judges.

Our conclusions have several limitations. The design theory is developed for a first-order Markov kernel. Although deeper memory is examined empirically and accommodated by the randomized GLMM, optimal designs for higher-order carryover remain open. Two extensions are especially important. First, placing all $V$ items in one batch becomes infeasible for large item pools under bounded context for many AI judges; incomplete block designs may offer a natural direction to overcome this restriction. Second, we focus on binary accept/reject outcomes in the paper, which cover pairwise and accept/reject judging but not scalar scores. The framework could replace the Bernoulli-logit model with a cumulative-link or continuous mixed model. However, both the nonlinear bias and design-optimality arguments depend on link curvature and must be re-derived.

\paragraph{Acknowledgment} Generative AI has been used to assist with language editing, coding, results visualization, and improving and checking proofs. 
T. Li was supported in part by NSF grant DMS-2515367. 
J. Ding was supported in part by NSF grant CAREER-2338506.

\bibliography{refs}
\clearpage
\section*{Supplementary Material}
\appendix
%
%

\setcounter{figure}{0}
\renewcommand{\thefigure}{S\arabic{figure}}
\renewcommand{\theHfigure}{supp.\arabic{figure}}
\setcounter{table}{0}
\renewcommand{\thetable}{S\arabic{table}}
\renewcommand{\theHtable}{supp.\arabic{table}}

\noindent\textbf{Summary.} This supplementary material collects additional analyses and experiments in the order of their appearance in the main text, followed by the proofs. Section~\ref{supp:predictive-details} provides the predictive-check protocol, transition results, interval widths, and conditional-entropy analyses. Section~\ref{supp:ranking-simulation} details the leaderboard simulation settings. Sections~\ref{supp:small-pool} and~\ref{app:williams-group} report group-comparison experiments with a small item pool and with Williams designs, respectively. Section~\ref{supp:deep-ranking} gives the leaderboard-ranking experiment under deeper memory, and Section~\ref{supp:term-graphs} presents the estimated networks used in the data example. Finally, Section~\ref{supp:glmm-theory} states the GLMM estimation result, and Sections~\ref{app:williams} and~\ref{app:convexity} collect the technical definitions, proofs, and supporting theory for leaderboard ranking and group comparison, including design variance, correct selection, nonlinearity bias, and finite-sample concentration.

\section[Details on out-of-sample predictive evaluation]{Details on out-of-sample predictive evaluation}
\label{supp:predictive-details}

We collected $100$ new shuffled batches of the same $30$ proposals, giving $3{,}000$ fresh ratings per judge under new randomized orderings. We
then froze the parameters from the original $200$-batch fits, with no
retraining, and treated each frozen GLMM as generative. Each judge is frozen at the
memory depth its own model selection chose in \Cref{sec:fit-four}: $K=1$ for Gemini
2.5 Flash, and $K=3$ for GPT-5.4-mini and Claude Haiku 4.5, each with the cold-start
term and the batch random intercept. We then simulated $B=500$ synthetic
datasets under the new orderings. We score twelve transition terms for every judge. The terms are the joint $0\!\to\!0$,
$0\!\to\!1$, $1\!\to\!0$, $1\!\to\!1$ cells of the lag-$1$, lag-$2$, and lag-$3$
transition matrices ($4\times3=12$ terms), regardless of the lags that judge's model
carries, so a judge frozen at $K=1$ is scored on transitions two and three steps back
that its model never saw. The $500$ synthetic datasets are then used to construct
$95\%$ prediction intervals for the transition terms. Note that these intervals are
constructed from the frozen GLMMs alone, without using the true judge scores on the
$100$ test batches. We then check whether each real transition probability from the
$100$ test batches falls inside its $95\%$ prediction interval.

Across the three judges we score $36$ terms, and we obtain $34/36$, as
\Cref{fig:ppc} shows. Two judges are covered in full:
\emph{Gemini's lag-$1$ model reproduces all twelve terms} ($12/12$), including the
out-of-order lag-$2$ and lag-$3$ cells it was never fit on, and Claude likewise covers
$12/12$. The two misses are the same failure mode, the $0\!\to\!1$ cell at a
deeper lag where the held-out probability runs slightly above the frozen model's
interval: GPT-5.4-mini at lag-$2$ and lag-$3$ ($10/12$). A miss rate of
$2/36\approx5.6\%$ is close to the $5\%$ a calibrated battery of $95\%$ intervals
should produce; the misses are not scattered but concentrated in the deep $0\!\to\!1$
cell, a small and reproducible higher-order-memory gap for that judge rather than
a global inadequacy. The logistic GLMM is thus sufficient to reproduce the judges'
sequence dynamics, with no drift detectable by this battery; the intervals condition
on the frozen point estimates (parameter-estimation uncertainty is not propagated),
so we read the check as a strong graphical predictive check rather than a formal
goodness-of-fit test. On this evidence no more elaborate surrogate is warranted.

Note that a high coverage rate is necessary but not, by itself, compelling: a $95\%$ rate could
be attained \emph{vacuously} by a severely misspecified model whose predictive
intervals are so wide that they contain almost anything. \Cref{fig:ppc} rules this out,
because it draws every interval at its true width. On the probability scale the transition
intervals span $0.013$ to $0.055$ end to end, and the entropy intervals $0.010$ to $0.041$
bits, so a bar that covers a held-out value is unlikely to have done so by accident. The widths also differ enough among themselves, by a factor of four, for the
comparison to be informative rather than uniform: the $1\!\to\!1$ cells, which predict the
largest probabilities, carry the widest intervals, and the entropies the narrowest. Relative to the quantity each one predicts these widths are small, between $2.5\%$ and $14.8\%$ of the observed value for the transition terms and under $2.5\%$ for the entropies, so the intervals are informative on the scale that matters as well as in absolute terms. We can see that the GLMM's prediction intervals are narrow and structural: they do not cover the held-out transition probabilities merely by being wide, but reproduce the \emph{asymmetric} shifts in those probabilities induced by the negative serial effect. The empirical $0\!\to\!1$ and $1\!\to\!0$
rates range across where the frozen model predicts, not merely staying far inside the bands. Therefore, the check is informative because the intervals are tight and the coverage is nominal.

For completeness, we also include a single
scalar information-theoretic check for global information: the order-$k$ \emph{conditional entropy} of the
binary label sequence, $H(Y_t\mid Y_{t-1},\dots,Y_{t-k})$ in bits, for $k=1,2,3$. As before, for a good model, the prediction interval of the conditional entropy should cover the real conditional entropy on the test data at the nominal level. These three statistics per judge form the lower block of each panel of \Cref{fig:ppc}. All $9$ of them (three judges $\times$ three orders) fall
inside their $95\%$ prediction intervals, which are again narrow ($0.010$--$0.041$ bits
wide). In particular, even Gemini's lag-$1$ model
covers the order-$2$ and order-$3$ entropy of its held-out sequence.

\clearpage
\section[Leaderboard ranking simulation]{Leaderboard ranking simulation}
\label{supp:ranking-simulation}

This section provides the full simulation specification for \Cref{sec:lb-empirical}.

\Cref{ass:nondegen} determines whether balancing can help at all, so we test the prediction where it holds and where it nearly fails. In concrete terms it requires a \emph{non-flat position profile}: in a batch of equal-quality items, the expected rating
must vary across slots. That profile is a property of $(\beta,\gamma)$, and the two
compete: the first slot has no predecessor, so its context is $\gamma$, while at later
slots the context averages $\beta$ times the chance the previous item was accepted. When
those two nearly cancel the profile is flat and there is no design variance to remove.

We therefore simulate two settings, both at $V=6$. Both are taken from the judges fitted in \Cref{sec:physics}, at their estimated coefficients, rather than chosen to produce the effect. The two settings therefore span \Cref{ass:nondegen} because the fitted judges themselves do. Setting 1 is Gemini 2.5 Flash, whose $\beta=-0.529$
and $\gamma=-0.528$ nearly cancel: the expected rating runs $(0.658,0.695,0.691,0.691,
0.691,0.691)$ across the six slots, flat to within $0.04$, so the removable
design variance is nearly zero. Setting 2 is Claude Haiku 4.5, whose
$\beta=-1.055$ and $\gamma=+1.879$ do not: the profile runs
$(0.910,0.419,0.536,0.508,0.515,0.513)$---an
item judged first wins $91\%$ of the time and one judged second only
$42\%$---and the removable design variance is far larger. Each
setting also carries its own judge's fitted batch variance, $\sigma_b^2=0.446$ and
$0.418$, and its own common quality level $\theta^\ast$, obtained by solving $\pi(\theta^\ast)$ against that judge's observed acceptance rate, $0.686$ and $0.567$, rather than by tuning. The two therefore differ in level as well as in sequential structure. In either case, the design variance that balancing can remove is governed by the position profile, which is nearly absent in Setting 1 and substantial in Setting 2.

The item panel is near-flat, $\bm\theta=(\theta^\ast+\delta,\theta^\ast,\dots,
\theta^\ast)$, with one best item and a tied field. We set $\delta=2.8/\sqrt N$, the
local scaling of \Cref{thm:lb-pcs}: the selection problem tightens as fast as the budget grows, so the difficulty is held constant and any advantage that persists reflects the limit rather than one particular sample size. Budgets are $N\in\{6,12,18\}$---one,
two and three Williams blocks---with $10{,}000$ replicates per cell, giving a Monte Carlo
standard error of $0.71$ percentage points on each design comparison. We report all four
combinations of estimator and design; ties among the win rates, which are frequent at
these budgets, are broken uniformly at random.

\clearpage
\section[Group comparison with a small item pool]{Group comparison with a small item pool}
\label{supp:small-pool}

We extend the simulations in \Cref{sec:conv-illusion} to the smaller item pool below.

For completeness, a small pool of $V=6$, split $3$ versus $3$, is also studied. The rest of the setup is unchanged. The corresponding results are shown in \Cref{fig:conv-v6}, and the overall patterns remain the same as in the $V=30$ case.

\begin{figure}[!htbp]
\centering
\begin{subfigure}[t]{0.48\textwidth}
  \includegraphics[width=\linewidth]{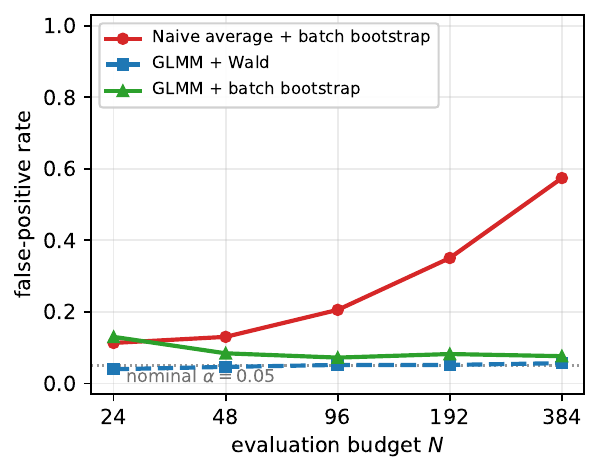}
  \caption{Setting 1: equal means, unequal spread.}
  \label{fig:conv-v6-s1}
\end{subfigure}\hfill
\begin{subfigure}[t]{0.48\textwidth}
  \includegraphics[width=\linewidth]{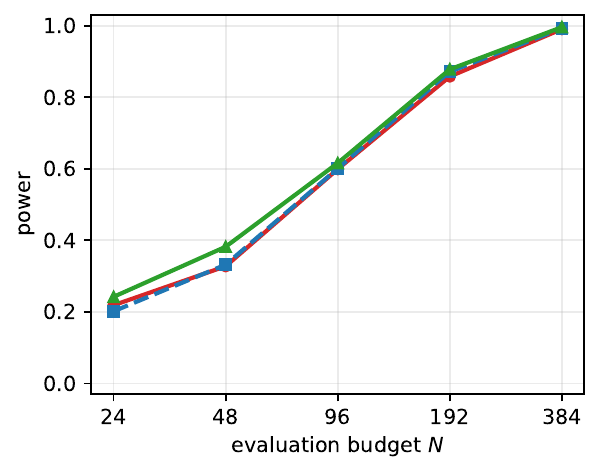}
  \caption{Setting 2: true gap, equal spread.}
  \label{fig:conv-v6-s2}
\end{subfigure}
\caption{The small-pool counterpart of the bottom row of \Cref{fig:conv-three}: the same two
scenarios and the same three procedures at $V=6$, split $3$ versus $3$.}
\label{fig:conv-v6}
\end{figure}

\clearpage
\section{Williams squares in group comparison: an empirical demonstration}
\label{app:williams-group}

\Cref{sec:leaderboard} showed that a Williams square strictly improves the naive
estimator for \emph{ranking} by removing a design-induced variance term. This
section documents the experiment behind the claim in \Cref{sec:convexity} that no
such improvement materializes for group comparison, where the obstacle is the bias
of \Cref{thm:conv-bias} rather than the variance.

\emph{Setup.} We rerun the two estimators of \Cref{sec:conv-illusion}---the naive
marginal average and the GLMM logit-scale contrast---under a Williams Latin square
in place of the random shuffle, with everything else held fixed: the same quality
panels as Settings 1 and 2 of \Cref{sec:conv-illusion} (verified identical to the
random-shuffle study), the same two pool sizes, $V=30$ split $15$ vs.\ $15$ over
$N\in\{30,\dots,150\}$ and $V=6$ split $3$ vs.\ $3$ over $N\in\{24,\dots,384\}$,
and $2000$ independently seeded replicates per cell. The design is the
symbol-randomized Williams design of \Cref{sec:leaderboard} ($V$ is even in both
cases): every budget is a whole multiple of $V$, so each replicate uses $N/V$
complete squares, and the item-to-symbol assignment is redrawn independently for
each square, with exact position and carryover balance verified within every
block. Each replicate records the point estimate and its sign only.

No test is computed under the Williams arms, and the reason is structural. A
bootstrap of batches is licensed by their exchangeability under a random shuffle,
whereas the batches of a Williams square are rows of a balanced array, and
resampling them with replacement destroys the balance the design exists to provide.
The correct resampling unit is then the \emph{complete square}, a block of $V$
batches, and as with block bootstraps for dependent data, consistency requires the
number of blocks, not merely the number of observations, to grow
\citep{kunsch1989,lahiri2003}; with few blocks such tests over-reject substantially
\citep{cameron2008bootstrap}. At $V=30$ a budget of $N=150$ buys five squares, and
reaching even a few dozen blocks would require $N\ge20V$ evaluations, a direct and
largely wasted token cost for judge-based evaluation. The usual escape, a
model-based bootstrap that holds the design fixed and re-simulates responses from a
fitted model \citep[Ch.~6]{davison1997}, is closed off by the premise: the naive
average is attractive precisely because it assumes no model.

\emph{Result.} \Cref{tab:williams-sign} reports the rate at which each arm names
Group~A the winner under the equal-means null Setting 1. Under that null the naive
average names the higher-variance group $98.1\%$ of the time under a random shuffle
at $V=30$ and $98.5\%$ under a Williams square; across all $40$ matched cells the
largest discrepancy between a Williams arm and its random-shuffle counterpart is
$2.9\%$ against a Monte Carlo standard error of $1.6\%$, and no cell reaches a
Bonferroni-corrected threshold. This is what \Cref{thm:conv-bias} predicts: the
nonlinearity bias is a property of the \emph{estimator}, namely of averaging on the
probability scale, and the design enters the theorem only through
\Cref{ass:design}; balancing removes design-induced \emph{variance} and has no
purchase on estimator \emph{bias}.

\begin{table}[!htbp]
\centering
\caption{The balanced design does not repair the naive estimator. Rate at which each
arm names the higher-variance Group~A under the equal-means null Setting 1, in percent; the
groups are equal, so $50$ is the calibrated value and no test is involved. Each
estimator is run under a random shuffle and under a Williams square on independently
seeded replicates ($2000$ per cell). The naive average climbs toward $100\%$ under
\emph{both} designs, and the GLMM is near the coin flip under both. The Setting 2 control, with
a true $0.4$-logit gap and matched spreads, separates all four arms from zero error by
the largest budget (Group~B named in $0.0\%$ of replicates at $V=30$, and $0.0\%$ at
$V=6$, for every arm).}
\label{tab:williams-sign}
\begin{tabular}{llccccc}
\toprule
& & \multicolumn{5}{c}{Group~A named (\%), by budget $N$} \\
\cmidrule(lr){3-7}
\multicolumn{2}{l}{$V=30$ ($15$ vs.\ $15$)} & $30$ & $60$ & $90$ & $120$ & $150$ \\
\midrule
\multirow{2}{*}{Naive} & random shuffle   & $83.1$ & $90.3$ & $95.1$ & $96.9$ & $98.1$ \\
                       & Williams square  & $82.7$ & $91.3$ & $94.6$ & $97.3$ & $98.5$ \\
\multirow{2}{*}{GLMM}  & random shuffle   & $43.7$ & $44.7$ & $47.6$ & $45.2$ & $48.5$ \\
                       & Williams square  & $42.7$ & $47.3$ & $48.0$ & $48.0$ & $47.6$ \\
\midrule
\multicolumn{2}{l}{$V=6$ ($3$ vs.\ $3$)} & $24$ & $48$ & $96$ & $192$ & $384$ \\
\midrule
\multirow{2}{*}{Naive} & random shuffle   & $69.3$ & $75.3$ & $85.8$ & $93.4$ & $98.4$ \\
                       & Williams square  & $68.6$ & $74.7$ & $85.2$ & $92.3$ & $97.9$ \\
\multirow{2}{*}{GLMM}  & random shuffle   & $49.7$ & $49.6$ & $48.9$ & $50.1$ & $50.8$ \\
                       & Williams square  & $49.2$ & $49.4$ & $49.0$ & $49.5$ & $51.0$ \\
\bottomrule
\end{tabular}
\end{table}

\clearpage
\section[Leaderboard ranking under deeper memory]{Leaderboard ranking under deeper memory}
\label{supp:deep-ranking}

This section gives the ranking experiment summarized in \Cref{sec:boundary}, comparing lag-1 and lag-3 fits under lag-3 data generation.

\paragraph{Task 1: leaderboard ranking at $V=6$.}
We repeat \Cref{sec:lb-empirical} exactly---the same two settings and the same local scaling $\delta=2.8/\sqrt N$---but change only the judge, which now follows the lag-3 model.  Each setting keeps the lag-1 coefficient it had in \Cref{sec:leaderboard} and gains the fitted lag-2 and lag-3 terms of the same judge, so it deepens without becoming a different judge: $(\beta_1,\beta_2,\beta_3)=(-0.529,-0.090,-0.179)$ for Gemini 2.5 Flash in Setting 1 and $(-1.055,-0.380,-0.311)$ for Claude Haiku 4.5 in Setting 2, with $\gamma$ and $\sigma_b^2$ carried over unchanged.  \Cref{tab:bd-rank} gives the leaderboard selection accuracy, and it reproduces the patterns in \Cref{sec:leaderboard}. Both the naive method and the GLMM model fitting work for this task. In Setting 1, the Williams square design gains
nothing. In Setting 2, where the position
profile is uneven, the Williams square design gains about $2\%$ improvement on the naive method. On GLMM fitting, the design is not impactful. However, note that the performance of the GLMM fitting based on the misspecified lag-1 model is comparable to that of the lag-3 model fitting, suggesting that using the lag-1 model can still achieve reasonable empirical performance.

\begin{table}[!htbp]
\centering
\caption{Selection accuracy (\%) of \Cref{sec:leaderboard} under a lag-3 GLMM model.}
\label{tab:bd-rank}
\begin{tabular}{llcccccc}
\toprule
 & & \multicolumn{3}{c}{Random shuffle (\%)} & \multicolumn{3}{c}{Williams design (\%)} \\
\cmidrule(lr){3-5}\cmidrule(lr){6-8}
 & & Naive & GLMM & GLMM & Naive & GLMM & GLMM \\
Setting & $N$ & average & lag-1 & lag-3 & average & lag-1 & lag-3 \\
\midrule
\multirow{3}{*}{Setting 1 (Flat)}
  & $6$  & $44.4$ & $44.2$ & $43.3$ & $43.7$ & $43.9$ & $43.2$ \\
  & $12$ & $49.1$ & $49.1$ & $48.6$ & $48.3$ & $48.3$ & $48.1$ \\
  & $18$ & $47.9$ & $48.0$ & $47.8$ & $48.9$ & $49.1$ & $49.0$ \\
\midrule
\multirow{3}{*}{Setting 2 (Uneven)}
  & $6$  & $46.9$ & $49.4$ & $47.1$ & $48.7$ & $49.8$ & $48.8$ \\
  & $12$ & $48.4$ & $51.0$ & $50.2$ & $50.6$ & $51.5$ & $51.1$ \\
  & $18$ & $49.0$ & $51.7$ & $51.3$ & $51.2$ & $51.7$ & $51.7$ \\
\bottomrule
\end{tabular}
\end{table}

\clearpage
\section[Estimated networks for the data example]{Estimated networks for the data example}
\label{supp:term-graphs}

The networks below are the graphical lasso and GNC-lasso estimates described in \Cref{sec:data-example}. The AI-judge comparison uses their 30 distinctive term-pairs.

\Cref{fig:de-graphs} shows the two estimated graphs.

\begin{figure}[p]
\centering
\begin{subfigure}[t]{\textwidth}
  \centering
  \includegraphics[width=\linewidth]{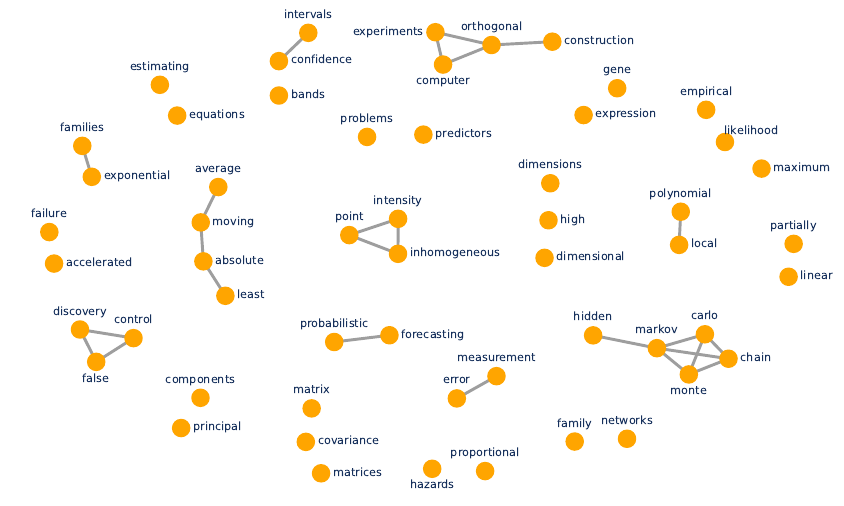}
  \caption{Graphical lasso.}
  \label{fig:de-graph-glasso}
\end{subfigure}

\bigskip
\begin{subfigure}[t]{\textwidth}
  \centering
  \includegraphics[width=\linewidth]{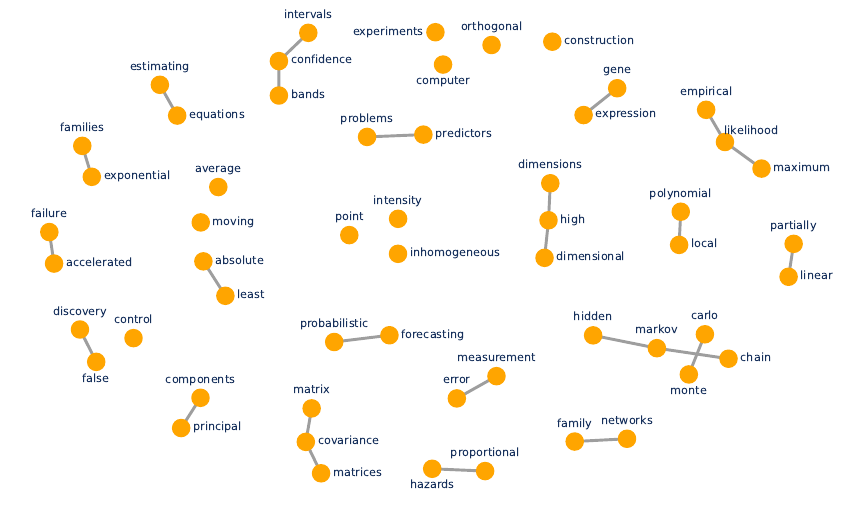}
  \caption{Network-cohesion graphical lasso.}
  \label{fig:de-graph-gnc}
\end{subfigure}
\caption{The two estimated term graphs of \citet{li2020ggmnetwork} by glasso and GNC-lasso.}
\label{fig:de-graphs}
\end{figure}

\clearpage
\section[GLMM estimation theory]{GLMM estimation theory}
\label{supp:glmm-theory}

\begin{proposition}
\label{prop:glmm}
Suppose the judge is evaluated under \eqref{eq:bern}--\eqref{eq:shift-K} at some
memory depth $K<V$ and under \Cref{ass:design}, whose randomization units
(single batches under clause (a); complete blocks of $B=V$ batches under clause
(b)) are independent clusters---conditional on the allocation, the chain rule
factorizes each unit's joint response probability exactly through the lagged
ratings, so the likelihood is a product over units and conditioning on the lag
covariates adds no separate factor. Assume the standard regularity conditions
for mixed-model likelihood asymptotics hold (identifiability of every item
contrast from the marginal likelihood; a positive-definite limiting per-unit
Fisher information $\bm{\mathcal I}_B(\bm\eta_0)$, and write
$\bm{\mathcal I}(\bm\eta_0):=\bm{\mathcal I}_B(\bm\eta_0)/B$ for the
information per batch; and $\bm\eta_0$ interior to the parameter space---in
particular $\sigma_b^2>0$, the boundary case not being covered). Then
the maximum likelihood estimator $\hat{\bm\eta}_N$ is consistent and asymptotically
normal as $N\to\infty$,
\begin{equation}
  \hat{\bm\eta}_N \xrightarrow{p}\bm\eta_0,
  \qquad
  \sqrt{N}\,(\hat{\bm\eta}_N-\bm\eta_0)
  \xrightarrow{d}\N\!\big(\bm 0,\;\bm{\mathcal I}(\bm\eta_0)^{-1}\big).
  \label{eq:can}
\end{equation}
The same conclusion holds for the weakly penalized variant used in our empirical
sections (a fixed weakly-informative prior on the item effects): for a fixed
smooth log-prior with locally bounded gradient, a score expansion around the
MLE, whose likelihood Hessian is of order $N$, gives
$\hat{\bm\eta}_{\textnormal{penalized}}-\hat{\bm\eta}_{\textnormal{MLE}}
=O_p(N^{-1})=o_p(N^{-1/2})$, so \eqref{eq:can} is unchanged.
\end{proposition}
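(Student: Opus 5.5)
The plan is to reduce the statement to the classical M-estimation theory for independent clusters, where the cluster is the randomization unit of \Cref{ass:design}. First I make the likelihood structure explicit. Fix a unit, meaning a single batch under (a) or a block of $B=V$ batches under (b), and condition on its allocation. Given $u_t$, the chain rule and \eqref{eq:bern}--\eqref{eq:shift-K} give the batch probability as $\prod_j \sigma(\theta_{V_{tj}}+C_{tj})^{Y_{tj}}(1-\sigma(\cdot))^{1-Y_{tj}}$. Here $C_{tj}$ depends only on earlier ratings, the position indicator, and $u_t$. Integrating against $\N(0,\sigma_b^2)$ gives the exact marginal likelihood of the batch. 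Batches within a Williams block have independent $u_t$ and share only the allocation, and the allocation is independent of responses and random effects. So the unit log-likelihood $\ell_i(\bm\eta)$ is a sum of batch terms, and the $n=N/B$ units are i.i.d.: we draw the allocation, then the responses. The allocation factor does not involve $\bm\eta$, so it drops out.

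Second, consistency. I will use a Wald-type argument on the compact parameter set, taking $\log\sigma_b$ and the fixed effects in a compact neighbourhood. Each $\ell_i$ is continuous in $\bm\eta$. It is dominated by an integrable envelope, because the probabilities are bounded away from $0$ on compacts after Gaussian integration and there are finitely many ratings per unit. The uniform law of large numbers then gives $n^{-1}\sum_i\ell_i\to L(\bm\eta)$ uniformly. By Jensen/Kullback--Leibler, $L$ is uniquely maximised at $\bm\eta_0$ under the identifiability assumption. Since $\bm\eta_0$ is interior, I will need a local argument, or concavity-free compactification via the standard Wald theorem, to rule out escape to the boundary.

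Third, asymptotic normality. The score is $\sum_i\nabla\ell_i(\bm\eta_0)$, with mean zero and covariance $\bm{\mathcal I}_B$ (information equality), so the CLT gives $n^{-1/2}\sum_i \nabla\ell_i\Rightarrow\N(0,\bm{\mathcal I}_B)$. Differentiation under the Gaussian integral is justified by dominated convergence, since all derivatives of $\sigma$ are bounded and polynomial moments of $u$ are finite. Next I Taylor-expand the score around $\bm\eta_0$ and use a uniform LLN for the Hessian in a neighbourhood. This yields $\sqrt n(\hat{\bm\eta}-\bm\eta_0)\Rightarrow\N(0,\bm{\mathcal I}_B^{-1})$. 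I then rescale by $N=nB$ to get $\bm{\mathcal I}^{-1}=(\bm{\mathcal I}_B/B)^{-1}$.

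Fourth, the penalized variant. The penalized score equation is $\nabla\ell_N(\hat{\bm\eta}_p)+\nabla\log p(\hat{\bm\eta}_p)=0$, and $\nabla\log p$ is $O(1)$ near $\bm\eta_0$. A mean-value expansion around the MLE gives $\hat{\bm\eta}_p-\hat{\bm\eta}=-[\nabla^2\ell_N]^{-1}\nabla\log p=O_p(N^{-1})$, since the Hessian is $-N\bm{\mathcal I}(1+o_p(1))$. I also have to check that $\hat{\bm\eta}_p$ lies in the shrinking neighbourhood, which follows from the same consistency argument because the prior term is $o(N)$. I expect the main obstacle to be the consistency step. The likelihood involves an integral over $u$ and is not concave, and positivity of $\sigma_b^2$ must be kept away from the boundary. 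I would handle this through compactness plus identifiability, rather than attempting global concavity.
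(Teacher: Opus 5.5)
Your proposal is correct and follows essentially the paper's route. The paper's own argument consists only of the chain-rule factorization into i.i.d.\ randomization units and the penalized score expansion written into the statement. It defers consistency and asymptotic normality to standard cluster-asymptotic GLMM likelihood theory (Jiang; Nie), and that theory is precisely the Wald-consistency, score-CLT and Hessian-LLN argument you spell out, including the rescaling from per-unit to per-batch information. The escape-to-the-boundary issue you flag in the consistency step is absorbed into the assumed regularity conditions, exactly as the paper does.

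One small wording fix: the allocation is independent of the $u_t$ and of the conditional Bernoulli draws, and its law is free of $\bm\eta$. It is not independent of the responses, which of course depend on it.
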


\Cref{prop:glmm} states a standard likelihood result for generalized linear mixed models under cluster asymptotics ($N\to\infty$ with bounded cluster size); see \citet{jiang2021} and \citet{nie2007}.

\clearpage
\section{Proofs for Section~\ref{sec:leaderboard} (Williams designs)}
\label{app:williams}

We use the notation of \Cref{sec:leaderboard}: items $1,\dots,V$ with
$\theta_1>\cdots>\theta_V$, $\delta=\theta_1-\theta_2$, $N$ batches indexed by
$\pi_t\in S_V$, $P(v,t)=\pi_t^{-1}(v)$, naive estimator
$\phat_v=N^{-1}\sum_tY_{t,P(v,t)}$, contrast $\mathbf Z(\mathcal D)=(\phat_1-\phat_2,\dots,\phat_1-\phat_V)^\top$,
and the selection rule $\hat v=\argmax_v\phat_v$, $\Pr_{\mathcal D}(\hat v=1)=\Pr(\hat v=1)$.
Random Shuffle $\mathcal D_R$ draws the $\pi_t$ i.i.d.\ uniform on $S_V$; the Williams
design $\mathcal D_W$ stacks $N/V$ copies of a $V\times V$ Williams square ($V$ even, so that the square exists; this convention is in force for every Williams statement below) (items
assigned to its symbols by independent uniform bijections $\tau_r$, $r=1,\dots,N/V$) and
satisfies \emph{position balance} (W1: each item appears $N/V$ times in each position) and
\emph{first-order carryover balance} (W2: each ordered pair appears $N/V$ times as
consecutive). We assume at least one of $\beta,\gamma$ is nonzero (the case
$\beta=\gamma=0$ is degenerate).

\subsection[Additional definitions and interpretation]{Additional definitions and interpretation}
\label{supp:ranking-details}
The following details accompany the ranking results in \Cref{sec:lb-moments}.

We first note that all explicit constants below involve the serial
effect only through the sharp contraction coefficient
\[
  x\;:=\;\sup_z\big|\sigma(z+\beta)-\sigma(z)\big|\;=\;\tanh(|\beta|/4)\;<\;1
\]
(the supremum is attained at $z=-\beta/2$, where the difference equals
$\sigma(\beta/2)-\sigma(-\beta/2)=\tanh(\beta/4)$), together with the derived
series constants $\Lambda_1:=\sum_{k\ge0}(k+1)x^k=(1-x)^{-2}$ and
$\Lambda:=\sum_{k\ge0}(k+1)^4x^k$, finite for \emph{every} $\beta$ since $x<1$.
Because $\tanh t\le t$, the displayed bounds are stated in the simpler
$|\beta|$-form.

The two terms separate the two sources of randomness. The first is sampling: with
the design held fixed, the ratings themselves are random. The second is the design:
two runs of the same experiment draw different orderings, or different bijections,
so they place items in different slots and after different predecessors, and
therefore have different conditional means $\bm\mu(\mathcal D)$. The comparison
between the designs is a comparison of their design terms. Under the random shuffle
every batch redraws its ordering, and this randomness accumulates at the same rate as the sampling noise: $\bm\Delta_{\mathrm R}=\Theta(1/N)$. Under Williams,
within-block balance holds for \emph{every} bijection, so at equal qualities the
block-level conditional means do not depend on the draw at all and
$\bm\Delta_{\mathrm W}=\mathbf 0$ exactly; under the $N^{-1/2}$ local alternatives
studied below it is $O(1/N^2)$, one order smaller than the shuffle term. At a
fixed configuration with $\nu_{\mathrm W,k}>0$ (defined below) both design terms
are of order $1/N$, and the comparison is then the exact finite-sample identity
of \Cref{prop:lb-var}. The gap at and near equal qualities is the content of the
next result.

The comparison of the two design terms reduces to one scalar constant per design
and per contrast, which we now define. Let $S_V$ denote the set of permutations of
$\{1,\dots,V\}$, and $\mathrm{Unif}(S_V)$ the uniform distribution on $S_V$. For a
batch order $\pi\in S_V$, placing item $\pi(j)$ in slot $j$, let
\[
  M_v(\pi)\;:=\;\E\big[\,Y_{1,\pi^{-1}(v)}\;\big|\;V_{1,j}=\pi(j),\
  j=1,\dots,V\,\big],
  \qquad v=1,\dots,V,
\]
denote the expected rating of item $v$ in a single batch presented in order $\pi$,
the expectation taken over the batch effect and the ratings of the working model
(\Cref{ass:working}), batch $1$ serving as a generic batch. Let
$\rho_1,\dots,\rho_V\in S_V$ denote the rows of the Williams square, so that in a
block with item-to-symbol bijection $\tau$ the $r$-th batch presents the items in
order $\tau^{-1}\circ\rho_r$, and let
\[
  \bar m_v(\tau)\;:=\;\frac1V\sum_{r=1}^{V}M_v\big(\tau^{-1}\circ\rho_r\big)
\]
be the corresponding block mean of item $v$. The independent unit of the random
shuffle is the batch, with a fresh uniform order; that of the Williams design is
the block, with a fresh uniform bijection. The two designs' constants are exact
counterparts, the design variance of the $k$-th contrast over one independent
unit:
\[
  \nu_{\mathrm R,k}\;:=\;\Var_{\pi\sim\mathrm{Unif}(S_V)}
  \big[M_1(\pi)-M_k(\pi)\big],
  \qquad
  \nu_{\mathrm W,k}\;:=\;\Var_{\tau\sim\mathrm{Unif}(S_V)}
  \big[\bar m_1(\tau)-\bar m_k(\tau)\big].
\]
In this notation the diagonals of the design terms in \eqref{eq:lb-ltv} are
exactly $[\bm\Delta_{\mathrm R}]_{k-1,k-1}=\nu_{\mathrm R,k}/N$ over the $N$
batches, and $[\bm\Delta_{\mathrm W}]_{k-1,k-1}=V\nu_{\mathrm W,k}/N$ over the
$N/V$ blocks (see the proof of \Cref{prop:lb-var} below).

\Cref{ass:nondegen} asks only that the presentation order genuinely move the expected
ratings, so that position is worth balancing at all. If $\beta=\gamma=0$ the sequence
carries no information, every ordering is equivalent, and the two designs coincide, so
there is nothing to compare. In concrete terms, the condition says that in a batch of equal-quality items the expected rating is not the same in every slot, that is, a \emph{non-flat position profile}. Moreover, it is mild: \Cref{lem:a-c1null} shows
non-flatness holds automatically when exactly one of $\beta,\gamma$ is nonzero, and
when both are nonzero it fails only on a discrete set of parameters where the
cold-start and carryover effects exactly cancel; it is checkable numerically in any
given instance.

In general, it is intuitively expected that the dominance holds beyond the provable small-gap regime. Every
realization of $\tau$ places each item in each slot exactly once within the
block, so the first-order position effect cancels in
$\bar m_1(\tau)-\bar m_k(\tau)$ for every draw, and $\nu_{\mathrm W,k}$ is
generated only by the second-order interaction between quality differences and
position---quadratically small in the quality spread. The shuffle typically does not enjoy such
cancellation. At a flat configuration, the above result is the counterpart of the classical variance rationale for Latin-square and crossover designs, in which balancing rows and columns removes the period and carryover
nuisance variation that complete randomization leaves in the treatment contrasts
\citep{williams1949,jones2015crossover}.

The natural question is when the variance-reduction advantage of the Williams square design transfers into an advantage in the ultimate quantity of interest for the leaderboard task, the probability of correct selection. Note that Proposition~\ref{prop:lb-rank} already shows that in the standard setting, when the item qualities are fixed with sufficient separation, both designs are asymptotically correct. The regime where one could expect the difference is when the gap between the top items is small, while the sample size is not large. Therefore, next we focus on one local-alternative regime to gain deeper insights in this direction. Specifically, we consider the case $\theta_1-\theta_k=d_k/\sqrt N$, in which the leading candidates are close together. So we are evaluating the two designs at the
resolution where selection is neither trivial nor hopeless---precisely the
finite-sample instances in which the budget is moderate relative to the gaps. This is the standard calibration behind asymptotic relative efficiency \citep{vandervaart1998}.

Under either design and for every $N$, the expected win rate of item $v$ is the smooth function
$$\E[\phat_v]=F_v(\bm\theta):=\E_{\pi\sim\mathrm{Unif}(S_V)}\big[M_v(\pi)\big].$$

At a flat configuration $\bm\theta=\theta_*\bm 1$, exchangeability reduces its gradient to two numbers:
 the
\emph{self-effect} and the \emph{cross-effect} of quality on the win rate,
\[
  g_{\mathrm{self}}(\theta_*)\;:=\;
  \frac{\partial F_v}{\partial\theta_v}\bigg|_{\bm\theta=\theta_*\bm 1},
  \qquad
  g_{\mathrm{cross}}(\theta_*)\;:=\;
  \frac{\partial F_v}{\partial\theta_w}\bigg|_{\bm\theta=\theta_*\bm 1}
  \quad(w\ne v),
\]
neither depending on $v$ or $w$, and both functions of the common level
$\theta_*$.

\subsection{Preliminaries: the mixture CDF and the affine recurrence}

For slot $(t,j)$, $C_{t,j}=\beta Y_{t,j-1}+\gamma\one_{\{j=1\}}+u_t$, with marginal CDF
$F^{\mathcal D}_{t,j}$. By the tower property,
\begin{equation}
  \mu_v(\mathcal D)=\E[\phat_v\mid\mathcal D]=\int_\R\sigma(\theta_v+c)\,d\bar F_v^{\mathcal D}(c),
  \qquad \bar F_v^{\mathcal D}=\tfrac1N\sum_{t=1}^N F^{\mathcal D}_{t,P(v,t)},
  \label{eq:a-muv}
\end{equation}
the item-averaged context CDF. We label each slot of an item by its predecessor category
$a\in\mathcal V\cup\{\varnothing\}$ ($\varnothing$ at position $1$).

\begin{lemma}
\label{lem:a-slotinv}
$F^{\mathcal D}_{t,j}$ depends on $\mathcal D$ and $(t,j)$ only through (i) the predecessor
category $a$ at $(t,j)$, and (ii) the configuration $(\pi_t(1),\dots,\pi_t(j-2))$ of items
strictly before the predecessor.
\end{lemma}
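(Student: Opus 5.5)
Since $C_{t,j}=\beta Y_{t,j-1}+\gamma\one_{\{j=1\}}+u_t$, I will compute the law of $C_{t,j}$ by conditioning on the within-batch order $\pi_t$ and then integrating over the design. By \Cref{ass:design}, the batch effect $u_t$ is independent of the allocation. Given $\pi_t$, the pair $(Y_{t,j-1},u_t)$ has a law determined by the working model of \Cref{ass:working}. I will therefore describe that joint law and check which features of $\pi_t$ it uses.

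The case $j=1$ is immediate: the predecessor category is $\varnothing$ and $C_{t,1}=\gamma+u_t$ with $u_t\sim\N(0,\sigma_b^2)$, so the CDF depends on nothing else. For $j\ge2$, I will condition on $u_t=u$ and use the Markov structure of \eqref{eq:shift}. Write $p_i(u)=\Pr(Y_{t,i}=1\mid u,\pi_t)$. This quantity satisfies the affine recurrence
\[
p_i(u)=\sigma(\theta_{\pi_t(i)}+u)+\big[\sigma(\theta_{\pi_t(i)}+\beta+u)-\sigma(\theta_{\pi_t(i)}+u)\big]\,p_{i-1}(u),
\]
with $p_1(u)=\sigma(\theta_{\pi_t(1)}+\gamma+u)$. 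I will show by induction on $i$ that $p_{i}(u)$ is a function of $u$ and $(\pi_t(1),\dots,\pi_t(i))$ only. The conditional law of $C_{t,j}$ given $u$ is two-point, with mass $p_{j-1}(u)$ at $\beta+u$ and mass $1-p_{j-1}(u)$ at $u$. Integrating over $u\sim\N(0,\sigma_b^2)$ then gives
\[
F_{t,j}(c)=\int\big[p_{j-1}(u)\one\{\beta+u\le c\}+(1-p_{j-1}(u))\one\{u\le c\}\big]\,\phi_{\sigma_b}(u)\,du.
\]
Here $p_{j-1}(u)$ depends only on the items in slots $1,\dots,j-1$. Those are the predecessor $a=\pi_t(j-1)$, which is category (i), together with $(\pi_t(1),\dots,\pi_t(j-2))$, which is configuration (ii).

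Two further points need to be checked. First, I will verify that nothing else enters. In particular, the item $\pi_t(j)$ that occupies slot $(t,j)$ does not appear, and the ratings in other batches do not appear either. Batches are conditionally independent given the allocation, and under \Cref{ass:design} the $u_t$ are independent across batches, so the law of batch $t$'s context depends on the design only through $\pi_t$. Second, the design enters only through $\pi_t$ itself. This applies equally to the $\mathcal D_R$ orders and to the Williams rows $\tau^{-1}\circ\rho_r$, so the dependence on $\mathcal D$ factors through (i) and (ii) exactly as claimed.

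The one real subtlety is the mixing over $u_t$. The relevant quantity is not the marginal probability $\Pr(Y_{t,j-1}=1)$ by itself, because $u_t$ is shared across slots and the context is correlated with it. That is why I will carry the whole conditional function $u\mapsto p_{j-1}(u)$ through the recurrence before integrating. Doing so is also what makes the dependence on the full prefix (ii), and not just the predecessor, unavoidable.
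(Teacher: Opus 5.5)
Your proof is correct and takes the same approach as the paper's one-line argument: the context at $(t,j)$ is driven by $Y_{t,j-1}$, whose law depends on the predecessor's identity and, recursively, on the earlier prefix. You make that argument rigorous by carrying $u\mapsto p_{j-1}(u)$ through the affine recurrence of \Cref{lem:a-affine} before mixing over $u_t$, which the paper leaves implicit; one wording fix is that $F^{\mathcal D}_{t,j}$ is the law conditional on the realized allocation (as used in \eqref{eq:a-muv}), so your opening ``then integrating over the design'' should read ``integrating over $u_t$,'' which is what your argument actually does.
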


\begin{proof}
By the recursive law \eqref{eq:bern}, $Y_{t,j-1}$ has success probability
$\sigma(\theta_{\pi_t(j-1)}+\beta Y_{t,j-2}+\gamma\one_{\{j-1=1\}}+u_t)$, depending on the
predecessor's identity (i) and the predecessor's predecessor outcome (ii).
\end{proof}

\begin{lemma}
\label{lem:a-multiset}
Under $\mathcal D_W$, every item $v$'s predecessor multiset across its $N$ slots is
\[
  \{\varnothing^{N/V}\}\cup\bigcup_{a\in\mathcal V\setminus\{v\}}\{a^{N/V}\},
\]
where the exponent is the multiplicity; in
particular it omits $v$ itself.
\end{lemma}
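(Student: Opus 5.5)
The claim is purely combinatorial, so I would prove it directly from the two balance properties (W1) and (W2) of each Williams block, applied block by block and then summed over the $N/V$ stacked blocks. Fix an item $v$ and a single block with item-to-symbol bijection $\tau_r$. Within this block, $v$ occupies exactly $V$ slots, one per batch, since every batch presents every item exactly once (\Cref{ass:design}). I would partition these $V$ slots according to the predecessor category $a\in\mathcal V\cup\{\varnothing\}$ introduced before \Cref{lem:a-slotinv}.

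\emph{Step 1: the cold-start slot.} By position balance (W1) applied to the single square, $v$ appears in position $1$ exactly once in the block. This contributes exactly one slot with category $\varnothing$.

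\emph{Step 2: the remaining slots.} The other $V-1$ slots of $v$ lie in positions $2,\dots,V$, and each has a genuine predecessor item. First-order carryover balance (W2) says that each ordered pair $(a,v)$ with $a\ne v$ appears as consecutive items exactly once per square. A pair $(v,v)$ can never appear, because a batch lists each item only once. Hence each $a\in\mathcal V\setminus\{v\}$ occurs exactly once as $v$'s predecessor. This accounts for $V-1$ slots, matching the count, so no slot is left over and $v$ never follows itself.

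\emph{Step 3: bijection invariance and stacking.} Balance is a property of the symbol array. Relabeling the symbols by any bijection $\tau_r$ maps ordered symbol pairs bijectively onto ordered item pairs and preserves positions. Steps 1--2 therefore hold for every realization of $\tau_r$, hence surely rather than merely in distribution. Summing over the $N/V$ blocks multiplies every multiplicity by $N/V$, which gives the stated multiset. Its total size, $(N/V)\cdot V=N$, matches the number of $v$'s slots.

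The only delicate point is Step 2: one must make sure the per-square carryover count is exactly one for each ordered pair of distinct items. A Williams square of even order $V$ has $V(V-1)$ consecutive pairs spread over $V(V-1)$ distinct ordered pairs, so the count is indeed one each. The per-block redrawing of the bijection then cannot disturb this.
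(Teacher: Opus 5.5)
Your proposal is correct and follows the paper's argument. Position balance (W1) gives the $N/V$ cold-start slots, and carryover balance (W2) gives $N/V$ slots behind each $a\ne v$; the count $N/V+(V-1)N/V=N$ then shows nothing is left over, so $v$ never follows itself. Your per-block derivation, and your remark that the symbol bijection preserves balance, spell out what the paper takes as the stacked-design statements of (W1) and (W2).
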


\begin{proof}
By (W1) exactly $N/V$ batches place $v$ at position $1$ ($N/V$ copies of $\varnothing$); by
(W2) exactly $N/V$ batches have each $a\ne v$ as the immediate predecessor of $v$. These
account for all $N/V+(V-1)N/V=N$ slots.
\end{proof}

Define the slot-uniform reference CDF $\bar F=\tfrac1{NV}\sum_{t,j}F^{\mathcal D_W}_{t,j}=\tfrac1V\sum_v\bar F_v^{\mathcal D_W}$
and $\bar g(\theta)=\int\sigma(\theta+c)\,d\bar F(c)$. For predecessor category
$a\in\mathcal V$, write $\widehat F_a$ for the \emph{probability-normalized}
average of the $(V-1)N/V$ slot laws whose predecessor is $a$, and
$\widehat F_a^{(v)}$ for the average of the $N/V$ such slots occupied by $v$; define
$\widehat F_\varnothing$ and $\widehat F_\varnothing^{(v)}$ analogously at
position $1$ (all four are probability measures). Counting slots
(\Cref{lem:a-multiset}),
\[
  \bar F_v^{\mathcal D_W}
  =\tfrac1V\widehat F_\varnothing^{(v)}+\tfrac1V\sum_{a\ne v}\widehat F_a^{(v)},
  \qquad
  \bar F=\tfrac1V\widehat F_\varnothing+\tfrac{V-1}{V^2}\sum_{a\in\mathcal V}\widehat F_a .
\]

\begin{lemma}
\label{lem:a-affine}
Fix $\pi$ and $u$; set $B_j=\sigma(\theta_{\pi(j)}+\gamma\one_{\{j=1\}}+u)$ and
$A_j=\sigma(\theta_{\pi(j)}+\beta+\gamma\one_{\{j=1\}}+u)-B_j$. With $Y_0\equiv0$,
$\Pr(Y_j=1\mid Y_{j-1},u,\pi)=B_j+A_jY_{j-1}$, so $p_j:=\Pr(Y_j=1\mid u,\pi)=B_j+A_jp_{j-1}$,
$|A_j|\le|\beta|/4$. Unrolling, $p_{j_v}=\sum_{k=0}^{j_v-1}(\prod_{m=j_v-k+1}^{j_v}A_m)B_{j_v-k}$,
and for $j_v<j_w$, $\Cov(Y_{j_v},Y_{j_w}\mid u,\pi)=(\prod_{m=j_v+1}^{j_w}A_m)\,p_{j_v}(1-p_{j_v})$.
\end{lemma}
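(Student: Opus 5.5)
The plan is to condition on $(u,\pi)$ throughout, so that the batch is a two-state, time-inhomogeneous Markov chain, and then verify the four claims in turn: the affine form of the transition, the recurrence for $p_j$, its unrolled solution, and the covariance formula.

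\emph{Transition law and bound on $A_j$.} Given $u$ and $\pi$, model \eqref{eq:bern} with the $K=1$ shift \eqref{eq:shift} says that $Y_j$ given $Y_{j-1}$ is Bernoulli with mean $\sigma(\theta_{\pi(j)}+\beta Y_{j-1}+\gamma\one_{\{j=1\}}+u)$. Since $Y_{j-1}\in\{0,1\}$, this function of $Y_{j-1}$ is affine. It equals $B_j$ at $Y_{j-1}=0$ and $B_j+A_j$ at $Y_{j-1}=1$, which gives the first display. The boundary convention $Y_0\equiv0$ makes the $j=1$ case consistent. The bound $|A_j|\le|\beta|/4$ follows from the mean value theorem, because $\sigma'\le 1/4$. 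The sharper bound $\tanh(|\beta|/4)$ from the supplement's preliminaries is also available if needed.

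\emph{Recurrence and unrolled solution.} Taking conditional expectations over $Y_{j-1}$ (tower property, conditional on $u,\pi$) gives $p_j=B_j+A_jp_{j-1}$, with $p_0=0$. I will prove the unrolled formula by induction on $j_v$. Substituting the formula for $p_{j_v-1}$ into the recurrence shifts every product by one factor $A_{j_v}$ and adds the $k=0$ term $B_{j_v}$, with the empty product equal to $1$. The sum stops at $k=j_v-1$ because $p_0=0$.

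\emph{Covariance.} Fix $j_v<j_w$ and condition on $Y_{j_v}$. Iterating the affine recurrence forward from $j_v$ shows that $\E[Y_{j_w}\mid Y_{j_v},u,\pi]=c+\big(\prod_{m=j_v+1}^{j_w}A_m\big)Y_{j_v}$ for some constant $c$. This uses the Markov property: given $Y_{j_v}$, the later outcomes do not depend on earlier ones, and each step applies $x\mapsto B_m+A_mx$, so composing affine maps multiplies their slopes. Then $\Cov(Y_{j_v},Y_{j_w})=\Cov\big(Y_{j_v},\E[Y_{j_w}\mid Y_{j_v}]\big)$, which equals the product of slopes times $\Var(Y_{j_v}\mid u,\pi)=p_{j_v}(1-p_{j_v})$.

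The only mildly delicate point is making explicit that the affine maps compose. That is a one-line induction, so I do not expect any step here to be a real obstacle.
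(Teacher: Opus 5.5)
Your proposal is correct and follows the paper's proof essentially step for step. Both arguments read off the affine transition $B_j+A_jY_{j-1}$ from the binary predecessor, get the recurrence by total probability, bound $|A_j|\le|\beta|/4$ by the mean value theorem, and obtain the covariance from one factor $A_m$ per lag step. Your composition of affine conditional means combined with $\Cov(Y_{j_v},Y_{j_w}\mid u,\pi)=\Cov\big(Y_{j_v},\E[Y_{j_w}\mid Y_{j_v},u,\pi]\big)$ is the paper's induction on $j_w-j_v$ written out explicitly, and you also carry out the induction for the unrolled formula, which the paper leaves implicit.
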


\begin{proof}
The transition probability equals $B_j$ if $Y_{j-1}=0$ and $B_j+A_j$ if $Y_{j-1}=1$, i.e.\
$B_j+A_jY_{j-1}$. The recurrence follows by total probability, and $|A_j|\le|\beta|/4$ follows
from the mean value theorem.

We prove the covariance identity by induction on $j_w-j_v$:
\[
  \Cov(Y_{j_v},Y_{j_v+1}\mid u,\pi)=A_{j_v+1}\Var(Y_{j_v}\mid u,\pi)=A_{j_v+1}p_{j_v}(1-p_{j_v}),
\]
and each further step appends one $A$-factor.
\end{proof}

\subsection{Williams preserves expected ranks}

\begin{proposition}
\label{prop:a-tvbound}
For every $v$,
$|\mu_v(\mathcal D_W)-\bar g(\theta_v)|\le\varepsilon_W:=\big(1-V^{-2}\big)\tfrac{|\beta|}4\le\tfrac{|\beta|}4$,
independent of $\sigma_b^2,\bm\theta,N$.
\end{proposition}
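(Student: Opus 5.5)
The plan is to write the difference as a single integral against a signed measure. By \eqref{eq:a-muv} and the definition of $\bar g$,
\[
\mu_v(\mathcal D_W)-\bar g(\theta_v)=\int_\R\sigma(\theta_v+c)\,d\big(\bar F_v^{\mathcal D_W}-\bar F\big)(c).
\]
We then use the two slot-count decompositions displayed after \Cref{lem:a-multiset}. Both $\bar F_v^{\mathcal D_W}$ and $\bar F$ put mass exactly $1/V$ on position-$1$ slots. At position $1$ the context is $\gamma+u_t$, whose law is $\N(\gamma,\sigma_b^2)$ whatever item occupies the slot. Hence $\widehat F_\varnothing^{(v)}=\widehat F_\varnothing$, and these two pieces cancel exactly.

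What is left is a difference of two probability mixtures, each with total mass $(V-1)/V$, of laws of non-first-slot contexts $C=\beta Y_{t,j-1}+u_t$. On the $\bar F_v$ side the mixture is $\tfrac1V\sum_{a\ne v}\widehat F_a^{(v)}$. On the $\bar F$ side it is $\tfrac{V-1}{V^2}\sum_a\widehat F_a$, which carries the same total weight $(V-1)V\cdot V^{-2}$. For any single such slot we condition on $u_t=u$ and use \Cref{lem:a-affine}. This gives
\[
\E\big[\sigma(\theta_v+\beta Y_{t,j-1}+u_t)\mid u_t=u\big]=\sigma(\theta_v+u)+D(u)\,p_{j-1}(u),
\]
where $D(u):=\sigma(\theta_v+\beta+u)-\sigma(\theta_v+u)$ and $p_{j-1}(u)\in[0,1]$ is the conditional acceptance probability of the predecessor. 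The dependence between $Y_{t,j-1}$ and $u_t$ is therefore absorbed by conditioning on $u$ first. The marginal law of $u_t$ is $\N(0,\sigma_b^2)$ in every slot, because the allocation is independent of $\{u_t\}$ (\Cref{ass:design}).

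Next, we average over each mixture. Each side becomes $\tfrac{V-1}{V}\E_u[\sigma(\theta_v+u)+D(u)q(u)]$, where $q$ and $q'$ are convex combinations of the $p_{j-1}(u)$ over the respective slots, so both lie in $[0,1]$. The $\sigma(\theta_v+u)$ terms cancel. The remaining difference is
\[
\tfrac{V-1}{V}\,\E_u\big[D(u)\,(q(u)-q'(u))\big],
\]
and its absolute value is at most $\tfrac{V-1}{V}\sup_z|\sigma(z+\beta)-\sigma(z)|\le\tfrac{V-1}{V}\tfrac{|\beta|}4$, by the mean value theorem (or by $\tanh(|\beta|/4)\le|\beta|/4$). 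Since $\tfrac{V-1}{V}\le 1-V^{-2}$, this yields $\varepsilon_W$. The bound is uniform in $\sigma_b^2$, $\bm\theta$ and $N$ because none of these quantities enters the last step.

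The main obstacle is making the mixture bookkeeping rigorous. One must check that the ``probability-normalized'' averages $\widehat F_a^{(v)}$ and $\widehat F_a$ are genuinely mixtures of joint laws of $(Y_{t,j-1},u_t)$ that share the same $u$-marginal, so that the conditional representation can be averaged term by term. If a cruder accounting is preferred, the argument can instead bound the $\widehat F_a^{(v)}$ versus $\widehat F_a$ mismatch slot-category by slot-category. This loses a little and plausibly produces exactly the stated factor $1-V^{-2}$.
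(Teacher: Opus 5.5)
Your proof is correct and uses the same mechanism as the paper's. The position-one laws coincide exactly. The non-first-slot pieces carry equal mass, so the common term $\E_u[\sigma(\theta_v+u)]$ cancels. What survives is multiplied by $A^v(u)=\sigma(\theta_v+\beta+u)-\sigma(\theta_v+u)$, which is bounded by $|\beta|/4$.

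The difference lies only in the decomposition. The paper subtracts the two slot-count identities into per-predecessor brackets $\widehat F_a^{(v)}-\widehat F_a$ for $a\ne v$, each of weight $1/V$. It adds a separate missing-self bracket $\tfrac1{V-1}\sum_{a\ne v}\widehat F_a-\widehat F_v$ of weight $\tfrac{V-1}{V^2}$. Each bracket is bounded by $|\beta|/4$, and the triangle inequality gives $\big(\tfrac{V-1}{V}+\tfrac{V-1}{V^2}\big)\tfrac{|\beta|}4=(1-V^{-2})\tfrac{|\beta|}4$.

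You instead pool all non-first slots into a single mixture of mass $(V-1)/V$ on each side. That avoids the triangle inequality and yields the slightly sharper $\tfrac{V-1}{V}\tanh(|\beta|/4)\le\tfrac{V-1}{V}\cdot\tfrac{|\beta|}4$, which implies the stated bound. The paper's cruder split is what supports the two-mechanism reading in \Cref{rem:a-twoscales}: a predecessor-population mismatch plus a missing-self tilt.

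The bookkeeping you flag as the main obstacle is not a real one. For a fixed allocation, every slot law at $j\ge2$ is the law of $\beta Y_{t,j-1}+u_t$ with $u_t\sim\N(0,\sigma_b^2)$. The conditional-on-$u$ representation therefore integrates term by term over any mixture of slots, exactly as the paper does. The ``cruder accounting'' you guess at is precisely the paper's route.
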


\begin{proof}
Subtracting the two slot-count identities above gives
\[
  \bar F_v^{\mathcal D_W}-\bar F
  =\tfrac1V\big(\widehat F_\varnothing^{(v)}-\widehat F_\varnothing\big)
  +\tfrac1V\sum_{a\ne v}\big(\widehat F_a^{(v)}-\widehat F_a\big)
  +\tfrac{V-1}{V^2}\Big(\tfrac1{V-1}\textstyle\sum_{a\ne v}\widehat F_a-\widehat F_v\Big),
\]
where each bracket is a difference of two \emph{probability} measures. We integrate $\sigma(\theta_v+\cdot)$ against each
bracket.

\emph{Position one.} The context at position $1$ is $\gamma+u_t$ regardless of the
allocation, so $\widehat F_\varnothing^{(v)}=\widehat F_\varnothing$ exactly and
the first bracket contributes zero.

\emph{Equal-mass predecessor comparisons.} By \Cref{lem:a-affine}, for any
probability-normalized average $\widehat F$ of slot laws at positions $j\ge2$,
\[
  \int\sigma(\theta_v+c)\,d\widehat F(c)
  =\E_u\big[B^v(u)\big]+\E_u\big[A^v(u)\,\bar Q(u)\big],
\]
with $B^v=\sigma(\theta_v+u)$, $A^v=\sigma(\theta_v+\beta+u)-B^v$,
$|A^v|\le|\beta|/4$, and $\bar Q(u)\in[0,1]$ the corresponding average of
predecessor success probabilities. The $B^v$ term is common to every such
average because the measures have equal mass. It therefore cancels within each bracket,
leaving differences multiplied by $A^v$:
\[
  \Big|\int\sigma(\theta_v+c)\,d\big(\widehat F_a^{(v)}-\widehat F_a\big)\Big|
  \le\E_u|A^v|\le\tfrac{|\beta|}4,
  \qquad
  \Big|\int\sigma(\theta_v+c)\,
  d\Big(\tfrac1{V-1}\textstyle\sum_{a\ne v}\widehat F_a-\widehat F_v\Big)\Big|
  \le\tfrac{|\beta|}4 .
\]
Combining these bounds with the coefficients in the decomposition gives
\[
  |\mu_v(\mathcal D_W)-\bar g(\theta_v)|
  \;\le\;\Big(\tfrac{V-1}V+\tfrac{V-1}{V^2}\Big)\tfrac{|\beta|}4
  \;=\;\big(1-V^{-2}\big)\tfrac{|\beta|}4 .
  \qedhere
\]
\end{proof}

\begin{remark}[What drives $\varepsilon_W$]
\label{rem:a-twoscales}
Every contribution to $\varepsilon_W$ is modulated by the carryover contrast
$A^v$, so the bound vanishes as $|\beta|\to0$ and is uniform in $\sigma_b^2$ and
$\bm\theta$. Two mechanisms appear: the predecessor populations $v$ actually meets
differ from the global ones (the $\tfrac{V-1}V$ term), and $v$'s absence from its
own predecessor pool tilts the reference average (the missing-self term
$\tfrac{V-1}{V^2}$, one order smaller in $V$).
\end{remark}

\begin{remark}[Fixed-square rank preservation]
\label{rem:a-fixedrank}
For a fixed square, \Cref{prop:a-tvbound} also preserves the \emph{conditional}
expected ranks: $\bar g'(\theta)=\int\sigma'(\theta+c)\,d\bar F(c)>0$, so $\bar g$ is
strictly increasing, and
$\mu_1(\mathcal D_W)-\mu_k(\mathcal D_W)\ge\bar g(\theta_1)-\bar g(\theta_k)-2\varepsilon_W
\ge\bar g'_{\min}\delta-2\varepsilon_W>0$ whenever
$\delta>2\varepsilon_W/\bar g'_{\min}$, with
$\bar g'_{\min}=\inf_{[\theta_V,\theta_1]}\bar g'>0$. This conditional statement is not
needed by \Cref{prop:lb-rank}, whose annealed analysis follows.
\end{remark}

\medskip\noindent\textbf{Proof of Proposition~\ref{prop:lb-rank}.}
Strict monotonicity of $\pi$ is part of \Cref{lem:a-pi-deriv}:
$\pi'(\theta)=\E_C[\sigma'(\theta+C)]>0$.

For consistency, we first identify the limit under each design. Under $\mathcal D_R$
this is \Cref{lem:a-lln}: batches are i.i.d., so $\phat_v\xrightarrow{p}\pi_v
=\pi(\theta_v)+e_v$.

Under $\mathcal D_W$ batches are \emph{not} i.i.d., since the $V$
batches of a block share the item-to-symbol bijection $\tau_r$; the i.i.d.\ unit is the
block. Writing $\phat_v$ as the average of the $N/V$ i.i.d.\ block averages, each bounded
in $[0,1]$, the weak law applies at the block level and gives
$\phat_v\xrightarrow{p}\E[\phat_v(\mathcal D_W)]$, which by \Cref{lem:a-uniform}(a)
equals $\E[\phat_v(\mathcal D_R)]=\pi_v$. The two designs therefore share the same limit
as well as the same finite-$N$ mean.

\Cref{lem:a-ms-size} bounds $\max_v|e_v|\le|\beta|\Lambda_1/(2V)=O(|\beta|/V)$, uniformly
in $\bm\theta,\gamma,\sigma_b^2$. If $\pi(\theta_1)-\pi(\theta_2)>2\max_v|e_v|$ then, since
$\pi$ is strictly increasing and $\theta_2$ is the runner-up,
\[
  \begin{aligned}
    \pi_1-\pi_k&=\pi(\theta_1)-\pi(\theta_k)+e_1-e_k\\
    &\ge\pi(\theta_1)-\pi(\theta_2)-2\max_v|e_v|>0
  \end{aligned}
\]
for every $k\ne1$, so the limits are separated by a fixed margin. A union bound over the
$V-1$ contrasts then gives $\Pr(\argmax_v\phat_v=1)\to1$ under either design. \qed

\begin{remark}[Direction of the missing-self perturbation]
\label{rem:a-missingself}
When $\beta<0$ a heuristic favors the top item: item $1$ is the strongest possible
predecessor, and its own predecessor multiset excludes item $1$, so item $1$ ``dodges'' the
harshest carryover penalty that a weaker item still incurs. A rigorous sign theorem is left
open; we use only the magnitude bound.
\end{remark}

\subsection{The design variance of the Random Shuffle}

The following is the matrix-level statement behind \Cref{prop:lb-var}; the
main-text proposition is its diagonal instance, and its concluding display is the
local-alternative limiting form of the comparison, used in the proof of
\Cref{thm:lb-pcs}(ii).

\begin{propositionprime}
Under \Cref{ass:working,ass:design,ass:nondegen}, with the symbol-randomized Williams
design and both covariances taken unconditionally:
\begin{enumerate}
\item[\textnormal{(i)}] the random shuffle carries a strictly positive design term,
$\bm\Delta_{\mathrm R}\succeq\mathbf 0$ with \emph{exact}
diagonal entries $[\bm\Delta_{\mathrm R}]_{k-1,k-1}=\nu_{\mathrm R,k}/N>0$, so its
contrasts fluctuate at the $\Theta(1/N)$ rate for reasons of design alone;
\item[\textnormal{(ii)}] the two designs share every per-batch marginal moment, and
\[
  \Cov(\mathbf Z(\mathcal D_R))-\Cov(\mathbf Z(\mathcal D_W))
  \;=\;\bm\Delta_{\mathrm R}-\bm\Delta_{\mathrm W},
\]
where $\bm\Delta_{\mathrm W}\succeq\mathbf 0$ is the Williams design term, generated
by the symbol randomization, which vanishes
at the null and is $O(1/N^2)=o(1/N)$ under local alternatives.
\end{enumerate}
Consequently, along the local alternatives $\bm\theta_N=\theta_*\bm1+\bm t/\sqrt N$
of \Cref{thm:lb-pcs} and with \Cref{ass:nondegen} holding at the flat
configuration,
$\lim_NN\big([\Cov(\mathbf Z(\mathcal D_R))]_{k-1,k-1}-[\Cov(\mathbf Z(\mathcal D_W))]_{k-1,k-1}\big)=\nu^*_{\mathrm R,k}>0$
for every $k$, with no auxiliary condition on the magnitudes of $(\beta,\gamma)$:
the local-alternative limiting form of \Cref{prop:lb-var}.
\end{propositionprime}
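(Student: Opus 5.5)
Both parts rest on the law of total variance \eqref{eq:lb-ltv}, applied separately under each design with the independent randomization unit of \Cref{ass:design}: the batch for $\mathcal D_R$, the block for $\mathcal D_W$. Part (ii), whose proof also supplies $\bm\Delta_{\mathrm R}$, comes first.

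\emph{Sampling terms.} Conditional on the design, batches remain independent, since the $u_t$ and the rating draws are independent across batches. Hence $\Cov(\mathbf Z\mid\mathcal D)=N^{-2}\sum_t\Cov(\mathbf z_t\mid\pi_t)$, where $\mathbf z_t$ is the per-batch contrast vector $(Y_{t,P(1,t)}-Y_{t,P(k,t)})_k$. Taking expectations over the design gives $N^{-2}\sum_t\E[\Cov(\mathbf z_t\mid\pi_t)]$.

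Under either design each batch order $\pi_t$ is marginally $\mathrm{Unif}(S_V)$. For $\mathcal D_R$ this holds by definition. For $\mathcal D_W$ the order is $\tau_r^{-1}\circ\rho_r$ with $\tau_r$ uniform, and composing a fixed permutation with a uniform one gives a uniform one. So the summands agree term by term, and the sampling pieces are identical. This is the sense in which the designs ``share every per-batch marginal moment''. It also re-proves the mean identity of \Cref{prop:lb-rank}, because $\E[\mathbf z_t]=\E_\pi[\mathbf M(\pi)]$ under both designs. Subtracting the two decompositions leaves $\Cov(\mathbf Z(\mathcal D_R))-\Cov(\mathbf Z(\mathcal D_W))=\bm\Delta_{\mathrm R}-\bm\Delta_{\mathrm W}$.

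\emph{Design terms.} Under $\mathcal D_R$, $\bm\mu=N^{-1}\sum_t\mathbf M(\pi_t)$ with i.i.d.\ $\pi_t$, where $\mathbf M(\pi)$ is the vector of $M_1(\pi)-M_k(\pi)$. Therefore $\bm\Delta_{\mathrm R}=N^{-1}\Cov_\pi(\mathbf M(\pi))\succeq\mathbf 0$, with diagonal entries $\nu_{\mathrm R,k}/N$, which are positive by \Cref{ass:nondegen}. This is (i).

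Under $\mathcal D_W$, $\bm\mu=(V/N)\sum_r\bar{\mathbf m}(\tau_r)$ over $N/V$ i.i.d.\ blocks, so $\bm\Delta_{\mathrm W}=(V/N)\Cov_\tau(\bar{\mathbf m}(\tau))\succeq\mathbf 0$, with diagonal entries $V\nu_{\mathrm W,k}/N$. This already yields the exact diagonal identity of \Cref{prop:lb-var} for every $N$.

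At the flat configuration, relabeling items does not change the law of the ratings. So $\bar m_v(\tau)$ depends on $\tau$ only through the multiset of positions and predecessors of $v$'s slots. By (W1)--(W2) and \Cref{lem:a-multiset}, that multiset is the same for every item and every $\tau$; one checks this via \Cref{lem:a-slotinv} together with the fact that all qualities coincide, so slot laws are allocation-invariant. Hence $\bar{\mathbf m}(\tau)$ is constant and $\bm\Delta_{\mathrm W}=\mathbf 0$ exactly.

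\emph{Local alternatives.} For $\bm\theta_N=\theta_*\bm 1+\bm t/\sqrt N$, $\bar{\mathbf m}(\tau)$ is a smooth function of $\bm\theta$: by \Cref{lem:a-affine} it is a finite composition of sigmoids integrated against a Gaussian. It is constant in $\tau$ at $\theta_*\bm1$, so $\bar{\mathbf m}(\tau)-\E_\tau\bar{\mathbf m}=O(\|\bm t\|/\sqrt N)$ uniformly in $\tau$. Its covariance is then $O(1/N)$, and $\bm\Delta_{\mathrm W}=O(1/N^2)$.

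Meanwhile $N[\bm\Delta_{\mathrm R}]_{k-1,k-1}=\nu_{\mathrm R,k}(\bm\theta_N)\to\nu^*_{\mathrm R,k}$ by continuity of $\bm\theta\mapsto\Var_\pi[M_1-M_k]$, since $S_V$ is finite and each $M_v$ is continuous. Multiplying the difference identity by $N$ and letting $N\to\infty$ gives the limit $\nu^*_{\mathrm R,k}>0$. No condition on $(\beta,\gamma)$ is needed beyond \Cref{ass:nondegen} at the flat point.

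\emph{Main obstacle.} The delicate step is the exact vanishing of $\bm\Delta_{\mathrm W}$ at the flat configuration. We need that $\bar m_v(\tau)$ is genuinely independent of $\tau$, not merely equal across items in expectation. I would argue directly: with equal qualities, $M_{\pi(j)}(\pi)$ equals $q_j$, the expected rating at slot $j$, independent of which item sits there. Then $\bar m_v(\tau)=V^{-1}\sum_j q_j$ because, by (W1), $v$ occupies each position exactly once per block. This makes even (W2) unnecessary for this step, and the uniform-in-$\tau$ perturbation bound then only needs differentiability of the finite sum $\sum_r M_v(\tau^{-1}\circ\rho_r)$.
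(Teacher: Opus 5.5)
Your proposal is correct and follows essentially the same route as the paper. Both arguments apply the law of total covariance under each design, cancel the sampling terms exactly because every batch order is marginally uniform (the paper's Lemma~\ref{lem:a-uniform}(b)), use the i.i.d.\ batch and block structure to obtain $\bm\Delta_{\mathrm R}=N^{-1}\Cov_\pi(\mathbf M(\pi))$ and $\bm\Delta_{\mathrm W}=(V/N)\Cov_\tau(\bar{\mathbf m}(\tau))$, show that $\bm\Delta_{\mathrm W}$ vanishes at the flat configuration, and bound it by $O(1/N^2)$ under local alternatives through a first-order perturbation. Your two refinements are slightly sharper than the paper: you note that at the null $M_v(\pi)=m(\pi^{-1}(v))$, so position balance (W1) alone forces $\bm\Delta_{\mathrm W}=\mathbf 0$ where the paper invokes (W1)--(W2), and you give an explicit continuity argument for $\nu_{\mathrm R,k}(\bm\theta_N)\to\nu^*_{\mathrm R,k}$ where the paper only says the limit ``follows from parts (i)--(ii)''.
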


\medskip\noindent\textbf{Proof of Proposition~\ref{prop:lb-var}$'$(i): the $\Theta(1/N)$ design variance.}
Under $\mathcal D_R$,
\[
  \mu_v(\mathcal D_R)=\frac1N\sum_{t=1}^NM_v(\pi_t),
  \qquad
  M_v(\pi)=\int\sigma(\theta_v+c)\,dF^{\mathcal D_R}_{1,\pi^{-1}(v)}(c)
\]
where batch $1$ represents a generic batch, since the context laws are common across
the i.i.d.\ batches. This integral form agrees with the conditional-expectation definition of $M_v$
in the main text. The summands $M_v(\pi_t)$ are i.i.d.\ in $t$ and lie in
$[0,1]$, so
\[
  \Var_{\mathcal D_R}\big[\mu_a(\mathcal D_R)-\mu_b(\mathcal D_R)\big]
  \;=\;\frac1N\,\Var_\pi\big[M_a(\pi)-M_b(\pi)\big].
\]

The map $\pi\mapsto M_a(\pi)-M_b(\pi)$ is non-constant. At the null configuration,
\Cref{lem:a-c1null} establishes this unconditionally when exactly one of
$\beta,\gamma$ is nonzero and outside a discrete exceptional parameter set otherwise.
Non-constancy persists nearby by continuity.

Hence
$\nu_{\mathrm R}(a,b):=\Var_\pi[M_a(\pi)-M_b(\pi)]>0$, matching the main-text
notation via $\nu_{\mathrm R,k}=\nu_{\mathrm R}(1,k)$, and boundedness gives the
upper constant $\overline\nu_{\mathrm R}(a,b)\le1$, i.e.\
$\nu_{\mathrm R}/N\le\Var\le\overline\nu_{\mathrm R}/N$.

For a fixed gap $\delta>0$ this design variance vanishes, in line with the consistency
of \Cref{prop:lb-rank} (whose proof runs through the law of large numbers above). In the
local-alternative regime $\delta=O(N^{-1/2})$, by contrast, the contrast has mean of order $\delta$ and
standard deviation of order $N^{-1/2}$, so by the CLT
\[
  \Pr(\mu_a(\mathcal D_R)<\mu_b(\mathcal D_R))\to\Phi\big(-(g_{\mathrm{self}}(\theta_*)-g_{\mathrm{cross}}(\theta_*))\,d\big/\sqrt{\nu^*_{\mathrm R}(a,b)}\big)>0
\]
when the design variance at the flat configuration is positive;
this is the regime \Cref{thm:lb-pcs} analyzes. \qed

\subsection{Second-moment analysis: the rest of Proposition~\ref{prop:lb-var}$'$}

For a fixed design, conditional independence of batches gives
$\Cov(\mathbf Z\mid\mathcal D)=N^{-2}\sum_tW(\pi_t)$, where
$W(\pi)_{k-1,l-1}=\Cov(Y_{P(1)}-Y_{P(k)},Y_{P(1)}-Y_{P(l)}\mid\pi)$, $|W(\pi)_{k-1,l-1}|\le1$.
The multivariate Law of Total Variance \eqref{eq:lb-ltv} reads, for the random
shuffle,
\begin{equation}
  \Cov(\mathbf Z(\mathcal D_R))=\E_{\mathcal D_R}[\Cov(\mathbf Z\mid\mathcal D_R)]+\bm\Delta_{\mathrm R},\quad
  \bm\Delta_{\mathrm R}:=\Cov_{\mathcal D_R}(\bm\mu(\mathcal D_R)),
  \label{eq:a-ltv}
\end{equation}
and $\E_{\mathcal D_R}[\Cov(\mathbf Z\mid\mathcal D_R)]=\tfrac1N\overline W_{\mathrm{unif}}$
with $\overline W_{\mathrm{unif}}:=\tfrac1{V!}\sum_\pi W(\pi)$.

\begin{lemma}
\label{lem:a-uniform}
Fix any row $\rho\in S_V$ of the Williams square. If $\tau\sim\mathrm{Unif}(S_V)$,
the realized batch permutation $\tau^{-1}\circ\rho$ is uniform on $S_V$: inversion
preserves the uniform law on the group and right-multiplication by $\rho$ is a
bijection. Hence under $\mathcal D_W$ with per-block symbol randomization every
single batch's permutation has exactly the uniform marginal law of a Random-Shuffle
batch (batches within a block are dependent through the shared $\tau_r$), and, for
every finite $N$:
\textnormal{(a)} $\E[\phat_v(\mathcal D_W)]=\E[\phat_v(\mathcal D_R)]
=\E_{\pi\sim\mathrm{Unif}}[M_v(\pi)]$ for every $v$, so
$\E[\mathbf Z(\mathcal D_W)]=\E[\mathbf Z(\mathcal D_R)]$;
\textnormal{(b)} $\E_{\bm\tau}[\Cov(\mathbf Z(\mathcal D_W)\mid\bm\tau)]
=\tfrac1N\overline W_{\mathrm{unif}}
=\E_{\mathcal D_R}[\Cov(\mathbf Z(\mathcal D_R)\mid\mathcal D_R)]$.
\end{lemma}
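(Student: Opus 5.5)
The plan is to establish the marginal-uniformity claim first, and then get (a) and (b) by linearity, conditioning batch by batch.

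\emph{Marginal uniformity.} Fix a row $\rho$. The map $\tau\mapsto\tau^{-1}$ is a bijection of $S_V$, so $\tau^{-1}$ is uniform. The map $\sigma\mapsto\sigma\circ\rho$ is also a bijection, so $\Pr(\tau^{-1}\circ\rho=\pi)=\Pr(\tau^{-1}=\pi\circ\rho^{-1})=1/V!$ for every $\pi$. Under $\mathcal D_W$, batch $r$ of block $b$ presents order $\tau_b^{-1}\circ\rho_r$. Its marginal law is therefore $\mathrm{Unif}(S_V)$, whatever the row and the block. This is the same law as a single batch under $\mathcal D_R$.

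\emph{Part (a).} Condition on the allocation. Under Assumption~\ref{ass:design}, the batch effects and the rating draws are independent of the allocation, and batches are conditionally independent given it. Hence the conditional mean of batch $t$'s rating of item $v$ is $M_v(\pi_t)$, with $M_v$ as defined above. This holds under either design, because $M_v$ depends only on the order presented in that batch and not on how the order was generated. Therefore
\[
\E[\phat_v(\mathcal D)]=\frac1N\sum_{t=1}^N\E\big[M_v(\pi_t)\big].
\]
Each $\pi_t$ is marginally uniform under both designs, so every summand equals $\E_{\pi\sim\mathrm{Unif}}[M_v(\pi)]$. Only marginals enter, so the within-block dependence through $\tau_b$ plays no role. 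Linearity then gives the equality of the mean vectors of $\mathbf Z$.

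\emph{Part (b).} Given the allocation, batches are conditionally independent: each has its own $u_t$, and the chain-rule draws are independent across batches. So the conditional covariance splits into per-batch terms,
\[
\Cov(\mathbf Z\mid\bm\tau)=N^{-2}\sum_{t=1}^N W(\pi_t),
\]
where $W(\pi)$ is the per-batch conditional covariance of the contrast vector $(Y_{P(1)}-Y_{P(k)})_k$ given order $\pi$. That per-batch covariance is a fixed function of $\pi$ alone under the working model. Taking expectations over $\bm\tau$ and using marginal uniformity once more gives $\E[W(\pi_t)]=\overline W_{\mathrm{unif}}$ for every $t$. The sum is then $\tfrac1N\overline W_{\mathrm{unif}}$, and this coincides with the random-shuffle expression computed before \eqref{eq:a-ltv}.

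\emph{Main point of care.} The delicate step is justifying that the conditional covariance decomposes by batch even though Williams batches share $\tau_b$. Once $\bm\tau$ is fixed, every order is deterministic, so the shared randomness of the design has been conditioned away. The only remaining randomness is $(u_t)$ together with the rating draws, and these are independent across batches by Assumption~\ref{ass:working}. The dependence induced by the shared bijection therefore enters only the design term $\bm\Delta_{\mathrm W}$, not the sampling term that (b) concerns.
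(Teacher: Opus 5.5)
Your proposal is correct and takes essentially the same approach as the paper. You first show each batch order $\tau_b^{-1}\circ\rho_r$ is marginally uniform via the two group bijections. You then get (a) by averaging $M_v(\pi_t)$ batch by batch, and (b) from the conditional-independence decomposition $\Cov(\mathbf Z\mid\bm\tau)=N^{-2}\sum_t W(\pi_t)$, with each term averaged under the uniform marginal.
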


\begin{proof}
For part (a), the conditional expectation of a rating given the batch's realized
permutation $\pi$ is the same function $M_v(\pi)$ under either design (the
within-batch law given $\pi$ does not depend on how $\pi$ was generated). Taking
expectations batch by batch under the uniform marginal law proves the claim.

For part (b), batches are independent conditional on the design, so
\[
  \Cov(\mathbf Z\mid\bm\tau)\;=\;\frac1{N^2}\sum_{t=1}^NW\big(\pi_t(\bm\tau)\big);
\]
the result follows by taking $\E_{\bm\tau}$ term by term under the uniform marginal law of each
$\pi_t(\bm\tau)$.
\end{proof}

\emph{Part (i).} Positive semidefiniteness of $\bm\Delta_{\mathrm R}$ follows
because it is a covariance matrix, and its $(k{-}1,k{-}1)$ entry is \emph{exactly}
$\Var_{\mathcal D_R}[\mu_1(\mathcal D_R)-\mu_k(\mathcal D_R)]=\nu_{\mathrm R,k}/N$, with
$\nu_{\mathrm R,k}>0$ by \Cref{lem:a-c1null}: both facts were established in the design-variance
computation above. Summing the diagonal entries gives
$\mathrm{tr}(\bm\Delta_{\mathrm R})\ge(V-1)\nu_{\mathrm R}^*/N$,
$\nu_{\mathrm R}^*=\min_k\nu_{\mathrm R,k}$.

\emph{Part (ii) (the Williams design term $\bm\Delta_{\mathrm W}$).} With
$\bm\tau=(\tau_1,\dots,\tau_K)$, write
$\bm\Delta_{\mathrm W}:=\Cov_{\bm\tau}(\E[\mathbf Z(\mathcal D_W)\mid\bm\tau])\succeq\mathbf 0$. By
the block structure,
\[
  \Var_{\bm\tau}\big(\E[\phat_v\mid\bm\tau]\big)
  \;=\;\frac{V^2}{N^2}\sum_{r=1}^{N/V}\Var_{\tau_r}\big(\bar m_v(\tau_r)\big),
  \qquad
  \bar m_v(\tau)\;:=\;\frac1V\sum_{i=1}^{V}M_v\big(\tau^{-1}\circ\rho_i\big),
\]
the average of $N/V$ i.i.d.\ block expectations, where $\rho_1,\dots,\rho_V$ are the
rows of the square and $\tau^{-1}\circ\rho_i$ the block's realized batch orders
(\Cref{lem:a-uniform}).

At the global null, each block expectation equals $\bar g(\theta_*)$ by (W1)--(W2).
Thus,
$\Var_{\tau_r}(\bar m_v)=0$ and $\bm\Delta_{\mathrm W}=\mathbf 0$ there.

Under local alternatives
a Taylor expansion gives $\Var_{\tau_r}(\bar m_v)=O(1/N)$, hence
\[
  \Var_{\bm\tau}\big(\E[\phat_v\mid\bm\tau]\big)
  \;=\;\frac1K\,O(1/N)\;=\;\frac VN\,O(1/N)\;=\;O(1/N^2)\;=\;o(1/N),
\]
and the same holds entry-wise for the contrasts. By contrast, Random Shuffle
contributes
$\Var_{\bm\pi}(\E[\phat_v\mid\bm\pi])=N^{-1}\Var_\pi[M_v(\pi)]=\Theta(1/N)$.

\emph{The concluding display.} Applying the Law of Total Variance to both designs and subtracting,
using \Cref{lem:a-uniform}(b) for the exact cancellation of the average conditional
covariances,
\[
  \Cov(\mathbf Z(\mathcal D_R))-\Cov(\mathbf Z(\mathcal D_W))
  =\bm\Delta_{\mathrm R}-\bm\Delta_{\mathrm W} ,
\]
with no residual term; the diagonal limit
$\lim_NN\big([\cdot]_{k-1,k-1}\big)=\nu_{\mathrm R,k}>0$ follows from parts (i)--(ii); this
is the concluding display of the proposition. \qed

\begin{lemma}
\label{lem:a-c1null}
Fix $V\ge4$. At a global null configuration $\bm\theta=\theta_*\one$,
$M_v(\pi)=m(\pi^{-1}(v))$ depends on $\pi$ only through $v$'s slot, where
$m(j):=\E_u[p_j(u)]$ is the homogeneous-batch slot mean, and
$\nu_{\mathrm R}(a,b)=\Var_\pi[m(j_a)-m(j_b)]>0$ for every $a\ne b$ iff $m$ is non-constant on
$\{1,\dots,V\}$. Moreover: \textnormal{(i)} if $\gamma\ne0$, $\beta=0$:
$m(1)\ne m(2)$ always; \textnormal{(ii)} if $\beta\ne0$, $\gamma=0$:
$m(3)-m(2)=\E_u[A(u)^2p_1(u)]>0$ always, with
$A(u):=\sigma(\theta_*+\beta+u)-\sigma(\theta_*+u)$ of strict sign and
$p_1(u)=\sigma(\theta_*+u)>0$; \textnormal{(iii)} if both are nonzero: $m$ is
non-constant for every $\gamma$ outside a set that is discrete for each fixed
$(\beta,\theta_*,\sigma_b^2)$. Under the local-alternative limit the same
positivity transfers to $\lim_N\nu_{\mathrm R,k}$; it is checkable numerically in any
instance (e.g.\ $\nu_{\mathrm R}\approx1.0\times10^{-3}$ at $\beta=\gamma=-0.5$,
$\sigma_b^2=0.45$, $V=6$, by exact enumeration).
\end{lemma}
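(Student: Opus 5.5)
The plan is to reduce everything to the homogeneous-batch slot profile $m(j)$. Then I would handle the three parameter regimes separately, using the affine recurrence of \Cref{lem:a-affine}.

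\emph{Reduction to slot means.} At $\bm\theta=\theta_*\one$, the quantities $B_j,A_j$ of \Cref{lem:a-affine} no longer depend on which item occupies slot $j$. So the conditional law of $(Y_1,\dots,Y_V)$ given $(u,\pi)$ does not depend on $\pi$. Hence $M_v(\pi)=\E_u[p_{\pi^{-1}(v)}(u)]=m(\pi^{-1}(v))$.

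Under $\pi\sim\mathrm{Unif}(S_V)$, the pair $(j_a,j_b)=(\pi^{-1}(a),\pi^{-1}(b))$ is uniform over ordered pairs of distinct slots. The swap $(j_a,j_b)\mapsto(j_b,j_a)$ preserves this law, so $D:=m(j_a)-m(j_b)$ is symmetric and has mean $0$. Therefore $\Var(D)=0$ iff $D\equiv 0$ on every ordered pair, i.e.\ iff $m$ is constant on $\{1,\dots,V\}$. This proves the "iff" claim.

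\emph{Cases (i) and (ii).} If $\beta=0$, then $p_j(u)=\sigma(\theta_*+\gamma\one_{\{j=1\}}+u)$. Strict monotonicity of $\sigma$ gives $m(1)-m(2)=\E_u[\sigma(\theta_*+\gamma+u)-\sigma(\theta_*+u)]\neq0$, with the sign of $\gamma$.

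If $\gamma=0$, then for all $j\ge1$ we have $B_j\equiv B(u)=\sigma(\theta_*+u)$ and $A_j\equiv A(u)$, with $p_0=0$. The recurrence gives
\[
p_1=B,\qquad p_2=B+AB,\qquad p_3=B+AB+A^2B,
\]
so $m(3)-m(2)=\E_u[A^2B]$. Since $A$ has the strict sign of $\beta\neq0$ and $B>0$, this is strictly positive.

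\emph{Case (iii), the main step.} Fix $(\beta,\theta_*,\sigma_b^2)$ with $\beta\neq0$. For $j\ge2$ write $B=\sigma(\theta_*+u)$, $A=A(u)$, and $p_1(u;\gamma)=\sigma(\theta_*+\gamma+u)$. Then
\[
m(3)-m(2)=\E_u\bigl[A\,(B+(A-1)p_1)\bigr]=:h(\gamma).
\]
I would show that $h$ is real-analytic in $\gamma$. The function $\sigma$ extends holomorphically to the strip $|\mathrm{Im}\,z|<\pi/2$ and is bounded on compact sub-strips, so dominated convergence under the Gaussian $u$ lets us differentiate under the integral, and $h$ extends holomorphically to a strip in $\gamma$.

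Next, $h$ is not identically zero. As $\gamma\to-\infty$, $p_1\to0$ pointwise with bounded integrand, so $h(\gamma)\to\E_u[AB]$. This limit is nonzero because $A$ has strict sign and $B>0$.

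A nonzero real-analytic function on $\mathbb R$ has a discrete zero set. Outside that set $m(2)\neq m(3)$, so $m$ is non-constant. This is the only genuinely delicate point: one must pick a slot difference that is both analytic and provably nonvanishing somewhere. The $\gamma\to-\infty$ limit is what I expect to make that work.

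\emph{Local-alternative transfer.} $M_v(\pi)$ is a finite composition of $\sigma$'s integrated against the Gaussian, so it is continuous in $\bm\theta$. Hence $\nu_{\mathrm R,k}$ is continuous in $\bm\theta$. Along $\bm\theta_N=\theta_*\one+\bm t/\sqrt N$ it therefore converges to its flat value $\nu^*_{\mathrm R,k}$, which is positive by the above. The numerical example follows from exact enumeration over the $V!$ orders using the recurrence of \Cref{lem:a-affine}.
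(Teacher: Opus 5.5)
Your proposal is correct and follows essentially the paper's route: reduce to the slot profile $m$, get (i) from monotonicity of $\sigma$ and (ii) from the affine recurrence, and get (iii) from real-analyticity in $\gamma$ plus a point where the function is nonzero. The differences are minor. The paper uses $\phi(\gamma)=(m(2)-m(1))^2+(m(3)-m(2))^2$ with $\phi(0)>0$ from case (ii). Your single difference $h=m(3)-m(2)$ already satisfies $h(0)=\E_u[A^2B]>0$ by case (ii), so your $\gamma\to-\infty$ limit is valid but not needed. Your holomorphic-strip argument is also a more careful justification of analyticity than the paper's appeal to uniform derivative bounds.
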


\begin{proof}
At the null all slots $j\ge2$ share $B(u):=\sigma(\theta_*+u)$ and $A(u)$ as their
affine-recurrence coefficients (\Cref{lem:a-affine}), so
$p_{j+1}-p_j=A\,(p_j-p_{j-1})$ for $j\ge2$, whence
\[
  m(j+1)-m(j)=\E_u\big[A^{\,j-1}(p_2-p_1)\big]\quad(j\ge2),
  \qquad
  m(2)-m(1)=\E_u[p_2-p_1].
\]

\emph{(i)} With $\beta=0$: $p_1=\sigma(\theta_*+\gamma+u)$,
$p_2=\sigma(\theta_*+u)$, and $\gamma\ne0$ gives $m(1)\ne m(2)$ by strict
monotonicity of $\sigma$.

\emph{(ii)} With $\gamma=0$: $p_1=B$ and $p_2=B+Ap_1$, so $p_2-p_1=Ap_1$ and
\[
  m(3)-m(2)=\E_u\big[A(u)^2\,p_1(u)\big]>0
\]
where the inequality is strict and $V\ge4$ ensures that slot~3 is present.

\emph{(iii)} With both nonzero, $\phi(\gamma):=(m(2)-m(1))^2+(m(3)-m(2))^2$ is
real-analytic in $\gamma$ at fixed $(\beta,\theta_*,\sigma_b^2)$ (Gaussian
integrals of compositions of analytic functions; differentiation under the
integral is justified by the uniform bounds on $\sigma$ and its derivatives), and
$\phi(0)=(\E_u[Ap_1])^2+(\E_u[A^2p_1])^2>0$ by the computation of case (ii); hence
$\phi\not\equiv0$ and its zero set is discrete.

In each case $m$ is non-constant, which gives positive variance since
$(j_a,j_b)$ ranges over all ordered distinct slot pairs with positive probability.
\end{proof}

\subsection{The finite-sample identity: proof of Proposition~\ref{prop:lb-var}}

\begin{proof}[Proof of \Cref{prop:lb-var}]
\emph{The identity.} By \Cref{lem:a-uniform}(b) the sampling terms of
\eqref{eq:lb-ltv} agree exactly at every $N$, so by
Proposition~\ref{prop:lb-var}$'$(ii),
\[
  [\Cov(\mathbf Z(\mathcal D_R))]_{k-1,k-1}
  -[\Cov(\mathbf Z(\mathcal D_W))]_{k-1,k-1}
  \;=\;[\bm\Delta_{\mathrm R}]_{k-1,k-1}-[\bm\Delta_{\mathrm W}]_{k-1,k-1}
\]
for every $N$. The batches of $\mathcal D_R$ are i.i.d., so
$\bm\mu(\mathcal D_R)$ is the average of $N$ i.i.d.\ per-batch mean vectors and
$[\bm\Delta_{\mathrm R}]_{k-1,k-1}=\Var_\pi[M_1(\pi)-M_k(\pi)]/N=\nu_{\mathrm R,k}/N$,
which is Part~(i).

The blocks of $\mathcal D_W$ are i.i.d., so, exactly as in the
display of Part~(ii),
\[
  [\bm\Delta_{\mathrm W}]_{k-1,k-1}=\frac{V}{N}\Var_\tau[\bar m_1(\tau)-\bar m_k(\tau)]
=V\nu_{\mathrm W,k}/N
\]
over the $N/V$ blocks. Subtracting and multiplying by $N$ gives the identity,
whose right-hand side is free of $N$.

\emph{Null vanishing.} At a flat configuration $\bm\theta=\theta_*\bm 1$, every
block mean equals $\bar g(\theta_*)$ for every $\tau$, by the within-block balance
(W1)--(W2)---the same computation as in the null case of Part~(ii)---so
$\bar m_1(\tau)-\bar m_k(\tau)\equiv0$ and $\nu_{\mathrm W,k}=0$.

\emph{Continuity and $\delta_0$.} For each fixed permutation $\pi$, the map
$\bm\theta\mapsto M_v(\pi)$ is continuous: the within-batch chain recursion is
a finite composition of $C^\infty$ functions of $(\bm\theta,u)$ bounded by $1$,
and dominated convergence allows us to pass limits through $\E_u$. Hence $\nu_{\mathrm R,k}$, a
variance over the finite set $S_V$, and $\nu_{\mathrm W,k}$, a variance over the finite set of
bijections, are continuous in $\bm\theta$.

At the flat configuration the
right-hand side equals $\nu_{\mathrm R,k}>0$ by \Cref{lem:a-c1null} (this is
\Cref{ass:nondegen} there); continuity yields $\delta_0>0$ with
$\nu_{\mathrm R,k}-V\nu_{\mathrm W,k}>0$ throughout $\max_v|\theta_v-\theta_*|\le\delta_0$, uniformly
in $k$ since there are finitely many contrasts.
\end{proof}

\subsection{Worst-case bounds for a fixed (unrandomized) square}

For a single fixed square used without symbol randomization,
$\Cov(\mathbf Z\mid\mathcal D_W^{\mathrm{fix}})=\tfrac1N\overline W_{\mathrm{Williams}}$ with
$\overline W_{\mathrm{Williams}}:=\tfrac1V\sum_{r=1}^VW(\rho_r)$, and the comparison
acquires the residual $\bm E^{\mathrm{fix}}:=\tfrac1N(\overline W_{\mathrm{unif}}-\overline
W_{\mathrm{Williams}})$ in place of $-\bm\Delta_{\mathrm W}$. We bound it by worst-case range arguments.
Expanding $W(\pi)_{k-1,l-1}=V_1-C_{1,l}-C_{1,k}+(C_{k,l}\text{ or }V_k)$ with
$V_v=\Var(Y_{P(v)}\mid\pi)$, $C_{v,w}=\Cov(Y_{P(v)},Y_{P(w)}\mid\pi)$, we use range bounds.

\begin{lemma}
\label{lem:a-rangep}
For each $v$, $\sup_\pi p_v(\pi)-\inf_\pi p_v(\pi)\le L(|\beta|+|\gamma|)$, $L=\tfrac14$, and
$\sup_\pi V_v(\pi)-\inf_\pi V_v(\pi)\le L(|\beta|+|\gamma|)$.
\end{lemma}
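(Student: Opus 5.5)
The plan is to trap every $p_v(\pi)$, whatever the order $\pi$, in one fixed interval whose length depends only on $(\beta,\gamma)$. Here $p_v(\pi)=\Pr(Y_{P(v)}=1\mid\pi)$ is the rating probability after averaging over the batch effect $u$, consistent with $M_v(\pi)$. The bound on $V_v$ then follows from the Bernoulli identity.

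\emph{Step 1 (context shift takes three values).} Let $j=\pi^{-1}(v)$ be the slot of item $v$. Under \Cref{ass:working}, conditional on $u$ and on the preceding rating,
\[
\Pr(Y_{j}=1\mid Y_{j-1},u,\pi)=\sigma(\theta_v+u+S),
\]
where $S=\gamma$ if $j=1$ (using $Y_{0}:=0$) and $S=\beta Y_{j-1}\in\{0,\beta\}$ if $j\ge 2$. So in every case $S\in\{0,\beta,\gamma\}\subseteq[s_-,s_+]$, with $s_-:=\min\{0,\beta,\gamma\}$ and $s_+:=\max\{0,\beta,\gamma\}$.

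\emph{Step 2 (monotone sandwich).} Since $\sigma$ is increasing, $\sigma(\theta_v+u+s_-)\le\sigma(\theta_v+u+S)\le\sigma(\theta_v+u+s_+)$ pointwise. Taking expectations over $Y_{j-1}$ given $u$, and then over $u\sim\N(0,\sigma_b^2)$, gives for every $\pi$
\[
\E_u\,\sigma(\theta_v+u+s_-)\;\le\;p_v(\pi)\;\le\;\E_u\,\sigma(\theta_v+u+s_+).
\]
The bounds do not depend on $\pi$. Hence $\sup_\pi p_v-\inf_\pi p_v$ is at most the gap between these two expectations. Because $\sigma'\le\tfrac14$, that gap is at most $\tfrac14(s_+-s_-)$.

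\emph{Step 3 (width of the hull).} This is the one point needing care; the crude bound $2\max(|\beta|,|\gamma|)/4$ is too weak. I check $s_+-s_-\le|\beta|+|\gamma|$ by cases:
\begin{itemize}
\item[] if $\beta$ and $\gamma$ have the same sign (or one vanishes), then $s_+-s_-=\max(|\beta|,|\gamma|)$;
\item[] if they have opposite signs, then $s_+-s_-=|\beta|+|\gamma|$ exactly.
\end{itemize}
This gives the first claim with $L=\tfrac14$.

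\emph{Step 4 (variance).} Given $\pi$, $Y_{P(v)}$ is Bernoulli$(p_v(\pi))$, so $V_v(\pi)=p_v(1-p_v)$. For two orders with probabilities $p$ and $q$,
\[
|p(1-p)-q(1-q)|=|p-q|\,|1-p-q|\le|p-q|.
\]
The range of $V_v$ over $\pi$ is therefore bounded by the range of $p_v$, and the second claim follows from Step 3.

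I expect no real obstacle. The only subtlety is in Step 3: one must use the common interval $[s_-,s_+]$, which includes $0$, rather than bounding each slot's deviation from a baseline separately, since the latter loses a factor of two when $\beta$ and $\gamma$ have the same sign.
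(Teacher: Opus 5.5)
Your proposal is correct and follows essentially the same route as the paper: for fixed $u$, the slot success probability lies among sigmoid values at shifts in $\{0,\beta,\gamma\}$, so $\sigma'\le\tfrac14$ bounds its range by $\tfrac14(|\beta|+|\gamma|)$, and the bound survives integration over $u$. The variance bound then follows from the $1$-Lipschitz property of $p\mapsto p(1-p)$ on $[0,1]$; your explicit case check of the width of $[s_-,s_+]$ just spells out what the paper states briefly as ``two sigmoid values differing by $\le L|\beta|$, with the cold-start indicator contributing $\le L|\gamma|$.''
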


\begin{proof}
Conditioning on $u$ and the predecessor success $q\in[0,1]$,
\[
  p_v(\pi\mid u)=(1-q)\,\sigma(\theta_v+\gamma\one_{\{j=1\}}+u)
  +q\,\sigma(\theta_v+\beta+\gamma\one_{\{j=1\}}+u)
\]
is a convex combination of two sigmoid values differing by $\le L|\beta|$, with the
cold-start indicator contributing $\le L|\gamma|$. Thus the range is
$\le L(|\beta|+|\gamma|)$, and the bound is preserved under $\E_u$.

Since $p\mapsto p(1-p)$ is $1$-Lipschitz, the same bound applies to $V_v$.
\end{proof}

\begin{lemma}
\label{lem:a-rangeW}
For all $k,l$, $\sup_\pi W(\pi)_{k-1,l-1}-\inf_\pi W(\pi)_{k-1,l-1}\le\kappa_{\mathrm{cov}}:=\tfrac{13}4(|\beta|+|\gamma|)$.
\end{lemma}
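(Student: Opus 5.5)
The plan is to bound the range of each conditional moment appearing in the expansion
\[
W(\pi)_{k-1,l-1}=V_1-C_{1,l}-C_{1,k}+(C_{k,l}\ \text{or}\ V_k)
\]
and then add the bounds by the triangle inequality, since $\sup-\inf$ is subadditive over sums. \Cref{lem:a-rangep} already gives range $\le L(|\beta|+|\gamma|)$ for each variance term $V_v$. It therefore remains to show a covariance range bound of the form
\[
\sup_\pi C_{v,w}(\pi)-\inf_\pi C_{v,w}(\pi)\le 4L(|\beta|+|\gamma|),\qquad v\ne w.
\]
The bookkeeping then closes as follows. In the off-diagonal case $k\ne l$ there is one variance and three covariances, giving $L+3\cdot4L=13L=\tfrac{13}4(|\beta|+|\gamma|)$. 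In the diagonal case $k=l$ there are two variances and two covariances, giving $2L+8L\le 13L$. The constant $4L$ is deliberately generous, and I expect the actual argument to give something smaller.

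For the covariance bound I would condition on the batch effect. By the law of total covariance given $\pi$,
\[
C_{v,w}(\pi)=\E_u\big[\Cov(Y_{P(v)},Y_{P(w)}\mid u,\pi)\big]+\Cov_u\big(p_v(u,\pi),p_w(u,\pi)\big).
\]
For the first term, \Cref{lem:a-affine} writes the conditional covariance as $(\prod A_m)\,p(1-p)$. Since $|A_m|\le|\beta|/4$ and $p(1-p)\le 1/4$, and since the product has at least one factor with each remaining factor of modulus below $1$ (because $\tanh<1$), the term has modulus at most $|\beta|/16$. Its range over $\pi$ is then at most $|\beta|/8\le L|\beta|$.

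For the second term I would use the pointwise-in-$u$ version of \Cref{lem:a-rangep}. Its proof conditions on $u$ before taking $\E_u$, so it already shows that for each fixed $u$ the map $\pi\mapsto p_v(u,\pi)$ has range at most $r:=L(|\beta|+|\gamma|)$. Now take two orders $\pi,\pi'$ and write $h=p_v(\cdot,\pi)-p_v(\cdot,\pi')$, so that $\|h\|_\infty\le r$. Then
\[
\Cov_u(p_v(\cdot,\pi),g)-\Cov_u(p_v(\cdot,\pi'),g)=\Cov_u(h,g).
\]
By Cauchy--Schwarz this is at most $\mathrm{sd}(h)\,\mathrm{sd}(g)\le r\cdot\tfrac12$, because $g\in[0,1]$ has standard deviation at most $1/2$. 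Applying this once to change the $v$ argument and once to change the $w$ argument bounds the range of $\Cov_u(p_v,p_w)$ by $r$. Adding the first term, the covariance range is at most $L|\beta|+r\le 2L(|\beta|+|\gamma|)$, which is within the target $4L(|\beta|+|\gamma|)$.

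The main obstacle I expect is making sure the per-$u$ range bound from \Cref{lem:a-rangep} is legitimately uniform in $u$. That requires the predecessor probability $q$ to be treated as an arbitrary element of $[0,1]$, rather than being tied to the particular order $\pi$. I would restate that step explicitly before using it.
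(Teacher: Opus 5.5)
Your proof is correct. Its outer skeleton matches the paper's: expand $W(\pi)$ into variances and covariances, use subadditivity of $\sup-\inf$, and take the variance ranges from \Cref{lem:a-rangep}. You differ from the paper in the covariance step.

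The paper writes $C_{v,w}=q_{v,w}-p_vp_w$. It factors the joint success probability along the chain as $q_{v,w}(\pi\mid u)=p_v(\pi\mid u)\,r_{v,w}(\pi\mid u)$, where $r_{v,w}$ is the forward success probability started from $Y_{j_v}=1$, and handles the reverse order by symmetry. It then uses only that a product of two $[0,1]$-valued factors, each of range at most $L(|\beta|+|\gamma|)$ uniformly in $u$, has range at most $2L(|\beta|+|\gamma|)$. Applied to both $q_{v,w}$ and $p_vp_w$, this gives exactly the $4L(|\beta|+|\gamma|)=|\beta|+|\gamma|$ that your bookkeeping budgets.

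You instead split by the law of total covariance over $u$. You bound the within-$u$ piece in absolute value by $|\beta|/16$ using the closed form in \Cref{lem:a-affine}. You bound the between-$u$ piece by bilinearity plus Cauchy--Schwarz, using $\mathrm{sd}\le\tfrac12$ for $[0,1]$-valued functions.

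The trade-off is as follows. The paper's route needs nothing beyond range bounds on success probabilities. Yours uses the explicit covariance formula, but it exploits the fact that the within-batch covariance is itself $O(|\beta|)$. This gives a covariance range of at most $|\beta|/8+(|\beta|+|\gamma|)/4$. Even with your rounded bound $2L(|\beta|+|\gamma|)$ per covariance, the tally becomes $7L(|\beta|+|\gamma|)=\tfrac74(|\beta|+|\gamma|)$ rather than $\tfrac{13}4(|\beta|+|\gamma|)$. Your route also handles both relative orders of $v$ and $w$ at once. It is the same conditional/batch-effect split the paper itself adopts later in \Cref{prop:a-kappaquad}.

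The uniformity you flag is not an issue. For fixed $u$ and every $\pi$, $p_v(u,\pi)$ lies in the convex hull of $\sigma(\theta_v+\gamma+u)$, $\sigma(\theta_v+u)$ and $\sigma(\theta_v+\beta+u)$. That set has diameter at most $(|\beta|+|\gamma|)/4$.
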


\begin{proof}
By \Cref{lem:a-rangep} each variance term has range $\le(|\beta|+|\gamma|)/4$.

For covariances, $C_{v,w}=q_{v,w}-p_vp_w$ with
$q_{v,w}=\E[Y_{P(v)}Y_{P(w)}\mid\pi]$.

Condition on $u$ and let $j_v<j_w$ (the
other case is symmetric): by the chain structure,
$q_{v,w}(\pi\mid u)=p_v(\pi\mid u)\cdot r_{v,w}(\pi\mid u)$ with
$r_{v,w}:=\Pr(Y_{j_w}=1\mid Y_{j_v}=1,u,\pi)\in[0,1]$; the forward chain started
from $Y_{j_v}=1$ obeys the same affine recurrence, so, exactly as for
$p_v(\pi\mid u)$ in the previous display, $r_{v,w}(\pi\mid u)$ has range
$\le L(|\beta|+|\gamma|)$ over $\pi$, uniformly in $u$.

A product of two $[0,1]$
factors, each of range $\le L(|\beta|+|\gamma|)$, has range
$\le2L(|\beta|+|\gamma|)$, and this is preserved by $\E_u$; the same argument
bounds the range of $p_vp_w$. Hence $C_{v,w}$ has range
$\le|\beta|+|\gamma|$.

Combining these bounds with the contrast expansion of $W(\pi)$ gives
\[
  k=l:\quad W=V_1+V_k-2C_{1,k},\qquad
  \text{range}\;\le\;2\cdot\tfrac{|\beta|+|\gamma|}4+2(|\beta|+|\gamma|)
  =\tfrac52(|\beta|+|\gamma|);
\]
\[
  k\ne l:\quad W=V_1-C_{1,k}-C_{1,l}+C_{k,l},\qquad
  \text{range}\;\le\;\tfrac{|\beta|+|\gamma|}4+3(|\beta|+|\gamma|)
  =\tfrac{13}4(|\beta|+|\gamma|).
\]
The maximum is $\kappa_{\mathrm{cov}}$.
\end{proof}

Two averages of a function of entry-wise range $\kappa_{\mathrm{cov}}$ over different measures
on $S_V$ differ by $\le\kappa_{\mathrm{cov}}$, so
$\|\bm E^{\mathrm{fix}}\|_{\max}\le\kappa_{\mathrm{cov}}/N$; the quadratic refinement below
sharpens this to $\widetilde\kappa\,|\beta|(|\beta|+|\gamma|)/N$.

\begin{remark}[These constants cannot certify fixed-square dominance]
\label{rem:a-kappahonest}
Since each $M_v(\pi)$ has range at most $(|\beta|+|\gamma|)/4$ over $\pi$
(\Cref{lem:a-rangep}'s argument applied to the slot expectation), necessarily
$\nu_{\mathrm R}\le(|\beta|+|\gamma|)^2/16$. Hence ``$\nu_{\mathrm R}>\kappa_{\mathrm{cov}}=\tfrac{13}4(|\beta|+|\gamma|)$''
would require $|\beta|+|\gamma|>52$ (impossible), and even
``$\nu_{\mathrm R}>\widetilde\kappa|\beta|(|\beta|+|\gamma|)$'' with the constant actually proven
below fails in essentially every regime of interest (numerically
$\nu_{\mathrm R}\approx10^{-3}$ against bounds of order at least $3$ at $\beta=\gamma=-0.5$,
$V=6$). Fixed-square dominance is therefore numerically checkable---both
$\overline W_{\mathrm{unif}}$ and $\overline W_{\mathrm{Williams}}$ are exactly
computable---but not certified by these worst-case bounds; the symbol-randomized
design of \Cref{prop:lb-var} needs none of this machinery.
\end{remark}

\subsection{Quadratic refinement via the exact binary recurrence}

\begin{lemma}
\label{lem:a-c1quad}
Fix $V\ge4$ and $\bm\theta$ with $\theta_a>\theta_b$. As $(\beta,\gamma)\to(0,0)$,
$\nu_{\mathrm R}(a,b)=\Var_\pi[M_a(\pi)-M_b(\pi)]=Q(\beta,\gamma)+O((\beta^2+\gamma^2)^{3/2})$
for a \emph{positive-definite} quadratic form $Q$, whose entries depend on
$(\bm\theta,\sigma_b^2)$. In particular
$\nu_{\mathrm R}(a,b)=\Theta(\beta^2+\gamma^2)$ as $(\beta,\gamma)\to(0,0)$,
with constants depending on the configuration.
\end{lemma}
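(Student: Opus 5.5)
The plan is a joint Taylor expansion of $D(\pi):=M_a(\pi)-M_b(\pi)$ in $(\beta,\gamma)$ around $(0,0)$, uniformly over the finite set $S_V$. After that, the variance is read off as a quadratic form.

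At $(\beta,\gamma)=(0,0)$ every slot has context $u$ only. Hence $M_v(\pi)=\E_u[\sigma(\theta_v+u)]$ does not depend on $\pi$, and $D(\pi)\equiv D_0$ is constant. By \Cref{lem:a-affine}, $p_j(u)$ is a finite polynomial in the $A_m$ and $B_m$, and each of these is a smooth function of $(\beta,\gamma,u)$ with uniformly bounded derivatives. Differentiating under $\E_u$ (dominated convergence) therefore gives
\[
D(\pi)=D_0+\beta\,L_\beta(\pi)+\gamma\,L_\gamma(\pi)+R(\pi),\qquad |R(\pi)|\le C(\beta^2+\gamma^2),
\]
with $C$ uniform in $\pi$.

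Next we compute the linear coefficients explicitly. To first order only the direct effect of each parameter survives. Products $A_mA_{m'}$ are $O(\beta^2)$, and the effect of $\gamma$ on slot $2$ through $p_1$ enters multiplied by $A_2=O(\beta)$. This gives
\[
L_\gamma(\pi)=\E_u[\sigma'(\theta_a+u)]\one\{j_a=1\}-\E_u[\sigma'(\theta_b+u)]\one\{j_b=1\},
\]
\[
L_\beta(\pi)=\E_u[\sigma'(\theta_a+u)\,\sigma(\theta_{\mathrm{pred}(a)}+u)]\one\{j_a\ge2\}-(\text{same for }b),
\]
where $\mathrm{pred}(v)$ is the item preceding $v$ in $\pi$.

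The variance then expands as
\[
\Var_\pi[D]=Q(\beta,\gamma)+O((\beta^2+\gamma^2)^{3/2}),\qquad Q(\beta,\gamma):=\Var_\pi[\beta L_\beta+\gamma L_\gamma].
\]
This holds because $\Var(X+R)=\Var X+O(\sqrt{\Var X}\,\|R\|_\infty+\|R\|_\infty^2)$ and $\Var X=O(\beta^2+\gamma^2)$. The form $Q$ is the Gram form of the covariance matrix of $(L_\beta,L_\gamma)$ under $\mathrm{Unif}(S_V)$. It is positive definite iff $L_\beta$ and $L_\gamma$ are linearly independent modulo constants, that is, iff no nontrivial combination $cL_\beta+dL_\gamma$ is constant on $S_V$.

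The main obstacle is proving this linear independence; I would use explicit permutations.

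\emph{Step 1: show $L_\gamma$ is non-constant.} Compare a $\pi$ with $j_a=1$ against a $\pi$ with neither $a$ nor $b$ first. The difference is $\E_u[\sigma'(\theta_a+u)]>0$.

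\emph{Step 2: find a variation of $L_\beta$ with $L_\gamma$ held fixed.} Keep $a$ and $b$ away from slot $1$ and change only the predecessor of $a$, swapping it between two items $w,w'$ of different quality. This uses $V\ge4$, so that two distinct non-$\{a,b\}$ items are available. Here $L_\gamma$ stays $0$, while $L_\beta$ changes by
\[
\E_u\bigl[\sigma'(\theta_a+u)\bigl(\sigma(\theta_w+u)-\sigma(\theta_{w'}+u)\bigr)\bigr]\ne0.
\]
If all other qualities coincide, I would instead use $b$ as the predecessor of $a$, or $a$ as the predecessor of $b$, and exploit $\theta_a>\theta_b$. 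This is exactly where the hypothesis $\theta_a>\theta_b$ enters.

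\emph{Step 3: conclude.} Suppose $cL_\beta+dL_\gamma$ is constant. Step 2 forces $c=0$, and then Step 1 forces $d=0$. So $Q$ is positive definite, with entries depending on $(\bm\theta,\sigma_b^2)$. Since $\lambda_{\min}(\beta^2+\gamma^2)\le Q\le\lambda_{\max}(\beta^2+\gamma^2)$, we obtain $\nu_{\mathrm R}(a,b)=\Theta(\beta^2+\gamma^2)$.

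The step I would check most carefully is Step 2's degenerate case, confirming that the swap involving $b$ really does change $L_\beta$ while keeping $L_\gamma$ fixed.
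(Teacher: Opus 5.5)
Your proposal is correct and follows essentially the paper's route: the same first-order coefficients (the paper's $M_v^{(\beta)},M_v^{(\gamma)}$), $Q$ as the variance of $\beta L_\beta+\gamma L_\gamma$ under $\mathrm{Unif}(S_V)$, and positive definiteness shown by explicit permutations whose coefficient pairs do not lie on a common line, which is exactly what your ``linear independence modulo constants'' argument establishes. The degenerate case you flagged goes through just as the paper's two cases do. Write $J_{ij}:=\E_u[\sigma'(\theta_i+u)\sigma(\theta_j+u)]$ and take $c,c'\notin\{a,b\}$, which keeps $L_\gamma=0$. Comparing $(c,b,a,c',\dots)$ with $(c,b,c',a,\dots)$ changes $L_\beta$ by $J_{ab}-J_{ac}$, which is nonzero when $\theta_c\neq\theta_b$. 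Comparing $(c,a,b,c',\dots)$ with $(c,a,c',b,\dots)$ changes it by $J_{bc}-J_{ba}$, which is nonzero when $\theta_c\neq\theta_a$, and $\theta_a>\theta_b$ guarantees that one of the two applies.
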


\begin{proof}
At $\beta=\gamma=0$, $C_{t,j}=u_t$ and $M_v(\pi)=\E_u[\sigma(\theta_v+u)]$ is
$\pi$-independent, so $\nu_{\mathrm R}=0$. A first-order Taylor expansion gives
\[
  M_v(\pi)=M_v^{(0)}+\beta M_v^{(\beta)}(\pi)
  +\gamma M_v^{(\gamma)}(\pi)+O(\beta^2+\gamma^2),
\]
with
\begin{align*}
  M_v^{(\gamma)}(\pi)&=\E_u[\sigma'(\theta_v+u)]\,\one_{\{\pi^{-1}(v)=1\}},\\
  M_v^{(\beta)}(\pi)&=\E_u\big[\sigma'(\theta_v+u)\,\sigma(\theta_{\pi(\pi^{-1}(v)-1)}+u)\big]\,\one_{\{\pi^{-1}(v)\ge2\}}.
\end{align*}
Thus $\Var_\pi[M_a-M_b]=Q(\beta,\gamma)+O((\beta^2+\gamma^2)^{3/2})$, where
$Q(\beta,\gamma)$ is the variance of the random vector
$\bm X(\pi)=(M_a^{(\beta)}-M_b^{(\beta)},\,M_a^{(\gamma)}-M_b^{(\gamma)})$
contracted with $(\beta,\gamma)$.

$Q(\beta,\gamma)=\Var_\pi[\beta X_1(\pi)+\gamma X_2(\pi)]$ is positive
semidefinite by construction, and it is positive definite if and only if the
support of $(X_1,X_2)$ under $\pi\sim\mathrm{Unif}(S_V)$ is not contained in an
affine line. Write
\[
  I_i:=\E_u[\sigma'(\theta_i+u)]>0,
  \qquad
  J_{ij}:=\E_u[\sigma'(\theta_i+u)\,\sigma(\theta_j+u)];
\]
for fixed $i$, $J_{ij}$ is strictly increasing in $\theta_j$, since $\sigma$ is
strictly increasing and $\sigma'>0$. Pick any third item $c$ ($V\ge4$).

\emph{Case $\theta_c\ne\theta_a$.} Orders beginning $(a,b,c)$ and $(a,c,b)$
give $(X_1,X_2)=(-J_{ba},\,I_a)$ and $(-J_{bc},\,I_a)$: two distinct points
with the same second coordinate. Any order beginning with $b$ has second
coordinate $-I_b<0<I_a$, supplying a third, non-collinear point.

\emph{Case $\theta_c=\theta_a$.} Then $\theta_c\ne\theta_b$, and orders
beginning $(b,a,c)$ and $(b,c,a)$ give the distinct points $(J_{ab},-I_b)$ and
$(J_{ac},-I_b)$, while any order beginning with $a$ has second coordinate
$I_a\ne-I_b$ and supplies the non-collinear third point.

In either case, appending the remaining items in any fixed order produces
permutations of positive probability, so the support of $(X_1,X_2)$ contains
three non-collinear points, no nonzero combination $\beta X_1+\gamma X_2$ is
$\pi$-a.s.\ constant, and $Q$ is positive definite. The upper bound
$O(\beta^2+\gamma^2)$ follows from boundedness of the expansion coefficients.
\end{proof}

\begin{proposition}
\label{prop:a-kappaquad}
For $|\beta|\le2$,
$\|\overline W_{\mathrm{unif}}-\overline W_{\mathrm{Williams}}\|_{\max}\le\widetilde\kappa\,|\beta|(|\beta|+|\gamma|)$
with the explicit, deliberately conservative constant $\widetilde\kappa=7$,
independent of $V,N,\sigma_b^2,\bm\theta$.
\end{proposition}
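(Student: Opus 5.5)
The plan is to write the difference of averages as the average of a single function against a signed measure, and then to show that function has range of order $|\beta|(|\beta|+|\gamma|)$. Both matrices average the same map $\pi\mapsto W(\pi)$. One average is under the uniform law on $S_V$; the other is under the uniform law on the $V$ rows of a fixed Williams square. If an entry $W(\pi)_{k-1,l-1}$ can be written as $c+\beta\,h(\pi)$, where $c$ does not depend on $\pi$ and $h$ has range at most $\widetilde\kappa(|\beta|+|\gamma|)$, the claim follows immediately: two probability averages of $\beta h$ differ by at most $|\beta|$ times the range of $h$. The $\beta$-free part of $W$ must cancel exactly. That is the one place where balance is used, and where the quadratic factor comes from.

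\emph{Step 1 (the $\beta=0$ layer).} At $\beta=0$, and conditionally on $u$, the ratings in a batch are independent by \Cref{lem:a-affine}, since all $A_j=0$. Covariances between distinct slots therefore vanish. The variance term $V_v(\pi)$ then depends on $\pi$ only through whether $v$ sits at slot $1$, via $\sigma(\theta_v+\gamma\one_{\{j=1\}}+u)$. Both averaging laws put item $v$ in slot $1$ with probability exactly $1/V$: trivially for the uniform law, and by (W1) for the Williams rows. So the $\beta=0$ parts of $V_1,V_k$ and of the contrasts have identical averages. The cross terms $\Cov(Y_{P(v)},Y_{P(w)}\mid\pi)$ at $\beta=0$ have the form $\E_u[p_vp_w]-p_vp_w$, i.e.\ covariance through $u$ only. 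They depend on $\pi$ only through which of $v,w$, if any, occupies slot $1$, and position balance again makes the two averages agree. This step is pure bookkeeping.

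\emph{Step 2 (the $\beta$-linear remainder).} Write each of $p_v$, $r_{v,w}$, $V_v$ and $C_{v,w}$ as its $\beta=0$ value plus a remainder. Each remainder is bounded in $\pi$-range by a multiple of $|\beta|(|\beta|+|\gamma|)$, uniformly in $u$, for $|\beta|\le 2$. The tool is the unrolled recurrence of \Cref{lem:a-affine}. Every $\beta$-correction carries at least one factor $A_m$ with $|A_m|\le|\beta|/4$. The $\pi$-dependence of what that factor multiplies comes either from predecessor success probabilities, whose range is controlled as in \Cref{lem:a-rangep}, or from whether the chain passes through the cold-start slot, which costs $|\gamma|/4$. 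Likewise, $\partial_\beta$ of the transition probability differs across slots only through predecessor identity and cold start. Using $|A|^k\le(|\beta|/4)^k$ together with $|\beta|\le2$ sums the geometric tails into absolute constants. Covariances of the form $(\prod A)\,p(1-p)$ are handled directly: their magnitude, not merely their range, is $O(|\beta|)$, and their $\pi$-variation picks up the extra factor $(|\beta|+|\gamma|)$ from the range bound on $p(1-p)$ and the product of $A$'s. Assembling the entries $W=V_1+V_k-2C_{1,k}$ and $W=V_1-C_{1,k}-C_{1,l}+C_{k,l}$ as in \Cref{lem:a-rangeW} then yields the constant.

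The main obstacle is Step 2 for the lag-one $A$-term. Its coefficient $A_{j}p_{j-1}(1-p_{j-1})$ is itself only $O(|\beta|)$, so I need its $\pi$-range to be $O(|\beta|(|\beta|+|\gamma|))$ and not merely $O(|\beta|)$. Here the range must come from the dependence of $A_j$ on $\theta$ of the current item, which is fixed, together with the predecessor's $p(1-p)$. The latter varies with predecessor identity, and that variation is not small in $(\beta,\gamma)$. I would handle it with (W2) rather than a range bound. Under both laws, each ordered adjacent pair $(a,v)$ appears with the same frequency, $1/(V(V-1))$ per slot transition, among non-first slots. So the leading $\beta$-linear term, which depends only on the adjacent pair and the cold-start flag, averages identically. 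Only second-order terms, of size $|\beta|^2$ or $|\beta||\gamma|$, remain to be bounded by range. The final constant $7$ would then come from crudely adding these contributions.
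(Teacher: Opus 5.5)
Your architecture matches the paper's. The carryover-free layer ($B_{j_v}$, and $\Cov_u(B^{(v)},B^{(w)})$ in the covariances) is matched by position balance (W1). The layer with exactly one $A$-factor is matched by carryover balance (W2). Terms with two or more $A$-factors are bounded by $O(x^2)$.

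\textbf{The step for the middle layer is false as written.} You claim that the leading term, ``which depends only on the adjacent pair and the cold-start flag, averages identically,'' because each ordered pair has frequency $1/(V(V-1))$ per slot transition. A fixed Williams square balances ordered pairs only in aggregate over the $V-1$ transitions, not per transition. Its $V$ rows place only $V$ of the $V(V-1)$ ordered pairs at slots $(1,2)$.

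For example, take $V=4$ with rows $(1,2,4,3),(2,3,1,4),(3,4,2,1),(4,1,3,2)$. The pair $(2,1)$ never occupies slots $(1,2)$, although it does with probability $1/12$ under the uniform law. The two averages of $\E_u[A_{j_1}B_{j_1-1}\one_{\{j_1\ge2\}}]$ then differ by $\tfrac1{12}\E_u[A^{(1)}(g_2+g_3-2g_4)]$, where $A^{(1)}(u)=\sigma(\theta_1+\beta+u)-\sigma(\theta_1+u)$ and $g_a(u)=\sigma(\theta_a+\gamma+u)-\sigma(\theta_a+u)$. This is nonzero for generic $\bm\theta$ once $\beta\gamma\ne0$.

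The repair is the paper's. Match only the $\gamma=0$ restriction of each one-$A$ term, which depends on a predecessor category or an ordered pair alone. Then charge the $\gamma$-correction, which is nonzero only when slot $1$ is involved and is of size $O(x|\gamma|)$, to the $|\beta||\gamma|$ part of the budget. This split is needed in four places:
\begin{itemize}
\item in $p_v$, at $j_v=2$;
\item in the cross term of $V_v$;
\item in the adjacent conditional covariance, when the earlier item sits in slot $1$;
\item in the batch-effect cross terms.
\end{itemize}

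\textbf{Your range claims cannot work either.} This affects both your opening reduction ($W=c+\beta h$ with $h$ of range $O(|\beta|+|\gamma|)$) and Step~2. The one-$A$ layer has $\pi$-range of order $|\beta|$, not $|\beta|(|\beta|+|\gamma|)$. The remainder $A_{j_v}p_{j_v-1}$ of $p_v$ is $0$ at slot $1$ and moves by order $|\beta|$ with the predecessor's quality. For a fixed pair $(v,w)$, the conditional covariance jumps by order $|\beta|$ according to whether $v$ and $w$ are adjacent. What matters there is this adjacency, not variation in ``the predecessor's $p(1-p)$'': in the adjacent event the predecessor is $v$ itself.

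So (W2) must carry every first-order term, not just one obstacle term. It can, because each such term depends on a single adjacency fact: the predecessor category of one item, or whether the fixed pair is adjacent in a given order. Each such event has probability exactly $1/V$ under both averages, and by linearity in the entries of $W$ that is all that is needed. Once this layer is matched at $\gamma=0$, only $O(x^2+x|\gamma|)$ pieces remain, and crude summation gives a constant such as $7$.
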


\begin{proof}
Let $\Delta_\pi[f]=\E_\pi^{\mathrm U}[f]-\E_\pi^{\mathrm W}[f]$. Decompose
\[
  p_{j_v}=T_0+T_1+R,\qquad
  T_0=B_{j_v},\qquad
  T_1=A_{j_v}B_{j_v-1}\one_{\{j_v\ge2\}},
\]
with $R$ collecting the terms with $\ge2$ carryover factors (\Cref{lem:a-affine}).
We bound the three terms using four observations.

\emph{(i)} $T_0$ depends on $\pi$ only through $\one_{\{j_v=1\}}$, matched by (W1):
$\Delta_\pi[T_0]=0$.

\emph{(ii)} $T_1|_{\gamma=0}$ is a function of the ordered pair $(\pi(j_v-1),v)$,
matched by (W2): $\Delta_\pi[T_1|_{\gamma=0}]=0$.

\emph{(iii)} The $\gamma$-correction to $T_1$ is nonzero only at $j_v=2$ and
bounded by $|\beta\gamma|/16$, so
$|\Delta_\pi[T_1-T_1|_{\gamma=0}]|\le|\beta\gamma|/8$.

\emph{(iv)} $|R|\le\sum_{k\ge2}(|\beta|/4)^k\le\beta^2/8$ for $|\beta|\le2$, so
$|\Delta_\pi[R]|\le\beta^2/4$.

Hence
\[
  |\Delta_\pi[p_v]|\;\le\;\frac{|\beta\gamma|}8+\frac{\beta^2}4
  \;\le\;\frac{|\beta|(|\beta|+|\gamma|)}4 .
\]
Write $q:=|\beta|(|\beta|+|\gamma|)$ throughout; recall
$x=\tanh(|\beta|/4)\le|\beta|/4\le\tfrac12$.
We next apply the same decomposition to $\bar p_v^2$, with $\bar p_v:=\E_u[p_{j_v}]$.
The $\bar T_0^2$ term is matched by (W1). The $2\bar T_0\bar T_1|_{\gamma=0}$ term is, at $\gamma=0$, a function of the ordered pair $(\pi(j_v-1),v)$ alone, since the cold-start flag enters only through $\gamma$. It is therefore matched by (W2).

The $\gamma$-corrections total $\le x|\gamma|$, and every remaining
term carries at least two factors of $A$, totaling $\le7x^2$ for $|\beta|\le2$.
Consequently,
\[
  \begin{aligned}
    |\Delta_\pi[V_v]|&=|\Delta_\pi[\bar p_v-\bar p_v^2]|\\
    &\le\tfrac q4+x|\gamma|+7x^2\le2q
  \end{aligned}.
\]

For covariances, decompose by the law of total covariance,
\[
  C_{v,w}(\pi)=\E_u\big[\Cov(Y_{j_v},Y_{j_w}\mid u,\pi)\big]
  +\Cov_u\big(p_{j_v},p_{j_w}\big),
\]
and treat the two pieces separately.

\emph{Conditional piece.} By \Cref{lem:a-affine},
\[
  \Cov(Y_{j_v},Y_{j_w}\mid u,\pi)=(\prod_{m=j_v+1}^{j_w}A_m)\,p_{j_v}(1-p_{j_v}).
\]
For adjacent slots ($j_w=j_v+1$), replace $p_{j_v}$ by $B_{j_v}$: since
$|p_{j_v}-B_{j_v}|\le x$ and $p\mapsto p(1-p)$ is $1$-Lipschitz, the substitution
costs $\le\E_u|A|\,x\le x^2$; the substituted term
$\E_u[A_{j_v+1}B_{j_v}(1-B_{j_v})]$ is, at $\gamma=0$, a function of the ordered
pair $(\pi(j_v),\pi(j_v+1))$, matched by (W2), with $\gamma$-correction (nonzero only at
$j_v=1$) $\le x|\gamma|/4$. Hence the adjacent conditional piece has
$|\Delta_\pi[\cdot]|\le2(x^2+x|\gamma|/4)\le q/4$.

For non-adjacent slots the
product carries at least two factors of $A$, so the piece is $\le x^2/4$ in
absolute value and $|\Delta_\pi[\cdot]|\le x^2/2\le q/32$.

\emph{Batch-effect piece.} Write $p_j=B_j+A_jp_{j-1}$ and expand
$\Cov_u(p_{j_v},p_{j_w})=\Cov_u(B^{(v)},B^{(w)})+\text{cross terms}$, where
$B^{(v)}(u):=\sigma(\theta_v+\gamma\one_{\{j_v=1\}}+u)$ depends on $\pi$ only
through $v$'s cold-start flag. The joint law of the two flags is identical under
the uniform and the square-averaged order (each item is first with probability
$1/V$, and two items are never simultaneously first), so
$\Delta_\pi[\Cov_u(B^{(v)},B^{(w)})]=0$ exactly.

Every cross term contains a factor of $A$. The leading term
$\Cov_u(A_{j_v}B_{j_v-1},B^{(w)})$ is, at $\gamma=0$, a function of the ordered pair $(\pi(j_v-1),v)$ alone, since the flag of $w$ enters $B^{(w)}$ only through $\gamma$, and is therefore matched by (W2).

The $\gamma$-corrections and the terms with two or more $A$-factors are bounded, using
$\mathrm{sd}_u(\cdot)\le\tfrac12$ for every bounded factor and $|\beta|\le2$, by
$x|\gamma|+6x^2\le\tfrac{5q}8$ in total. Hence
$|\Delta_\pi[\Cov_u(p_{j_v},p_{j_w})]|\le\tfrac{5q}8$ and, combining the two pieces,
\[
  |\Delta_\pi[C_{v,w}]|\le\tfrac q4+\tfrac q{32}+\tfrac{5q}8\le\tfrac{3q}2.
\]

\emph{Assembling the entries.} For the diagonal entries,
\[
  |\Delta_\pi[V_1+V_k-2C_{1,k}]|\le2q+2q+2\cdot\tfrac{3q}2=7q.
\]
 For the off-diagonal entries,
\[
  |\Delta_\pi[V_1-C_{1,k}-C_{1,l}+C_{k,l}]|\le2q+3\cdot\tfrac{3q}2\le7q.
\]
 Hence
$\widetilde\kappa=7$ suffices.
\end{proof}

By \Cref{lem:a-c1quad,prop:a-kappaquad}, in the small-bias regime $\nu_{\mathrm R}=\Theta(\beta^2+\gamma^2)$ and
the fixed-square discrepancy bound $\widetilde\kappa|\beta|(|\beta|+|\gamma|)$ are of the same
\emph{order}; \Cref{rem:a-kappahonest} records why the proven constants nonetheless leave
fixed-square dominance uncertified. The refinement is retained for its structural
content---which terms the balance properties (W1)--(W2) match exactly---and because none of
it is needed for the randomized design.

\subsection{Probability of correct selection: proof of Theorem~\ref{thm:lb-pcs}(ii)}

We work in the local-alternative regime $\theta_{1,N}-\theta_{k,N}=d_k/\sqrt N$ ($\min_kd_k=d>0$),
with $(\beta_N,\gamma_N,\sigma_{b,N}^2,\theta_{*,N})\to(\beta,\gamma,\sigma_b^2,\theta_*)$.

\begin{lemma}
\label{lem:a-shift}
With $\xi_v(\bm\theta)=\mu_v(\mathcal D_W)-\bar g(\theta_v\mid\bm\theta)$,
$\sqrt N\,\xi_v(\bm\theta_N)\to\xi_v^{(1)}(\theta_*):=-\sum_{w\ge2}\partial_{\theta_w}\xi_v|_{\theta_*\one}d_w$,
the Taylor remainder being $O(N^{-1})$ uniformly over the bounded local
perturbations, by the uniform second-derivative bounds of \Cref{lem:a-pi-deriv}
applied slotwise and the finiteness of the permutation average.
Here $\mu_v(\mathcal D_W)$ is the conditional mean for a \emph{fixed} square.
Under the symbol randomization the two designs' \emph{means} agree exactly
(\Cref{lem:a-uniform}(a)); the reference $\bar g$ depends on the realized
square through its slot-uniform context law, and averaging it as well gives
$\E_\tau\xi_v=F_v(\bm\theta)-\E_\tau[\bar g_\tau(\theta_v\mid\bm\theta)]
=F_v(\bm\theta)-\pi(\theta_v)=e_v(\bm\theta)$, the missing-self correction,
which is not zero in general. This lemma is used only in the fixed-square
discussion of \Cref{rem:a-fixedshift}.
\end{lemma}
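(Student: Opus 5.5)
The plan is a first-order Taylor expansion of $\xi_v$ around the flat point. The key input is that $\xi_v$ vanishes identically at $\bm\theta=\theta_*\one$.

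\emph{Step 1: $\xi_v=0$ on the flat configuration.} When all qualities equal $\theta_*$, the proof of \Cref{prop:a-tvbound} simplifies. Every predecessor category has the same law, so $\widehat F_a^{(v)}=\widehat F_a$ and $\widehat F_v=\tfrac1{V-1}\sum_{a\ne v}\widehat F_a$. Moreover, position one is matched exactly by (W1). Hence $\bar F_v^{\mathcal D_W}=\bar F$, and $\mu_v(\mathcal D_W)=\bar g(\theta_*)$. This is the same null computation already used in Part~(ii) of Proposition~\ref{prop:lb-var}$'$, where each block mean equals $\bar g(\theta_*)$. So $\xi_v(\theta_*\one)=0$.

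\emph{Step 2: smoothness.} For a fixed square, $\mu_v(\mathcal D_W)$ is a finite average over the $V$ rows of $M_v(\rho_r)$ composed with the item-to-symbol labelling. Each $M_v(\pi)$ is $\E_u$ of a finite composition of sigmoids of $(\bm\theta,u)$, as in \Cref{lem:a-affine}. Likewise $\bar g(\theta_v\mid\bm\theta)=\int\sigma(\theta_v+c)\,d\bar F(c)$ is a finite average of such slot integrals. All first and second derivatives in $\bm\theta$ are bounded uniformly in $u$, because $\sigma,\sigma',\sigma''$ are bounded and there are at most $V$ factors per path (this is what \Cref{lem:a-pi-deriv} provides slotwise). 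Dominated convergence then lets us differentiate under $\E_u$. The bounds are locally uniform in $(\beta,\gamma,\sigma_b^2,\theta_*)$, so they also hold along the converging parameter sequence.

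\emph{Step 3: expansion.} Parametrize $\bm\theta_N=\theta_{*}\one+\bm t/\sqrt N$. Taylor's theorem gives
\[
  \xi_v(\bm\theta_N)=\xi_v(\theta_*\one)+N^{-1/2}\textstyle\sum_w\partial_{\theta_w}\xi_v|_{\theta_*\one}\,t_w+O(N^{-1}),
\]
with the remainder controlled by the Step~2 bounds. The first term is zero by Step~1. Next, $\xi_v$ is invariant under the common shift $\bm\theta\mapsto\bm\theta+c\one$ up to a change of $\theta_*$, and it is zero along the whole flat line. Hence $\sum_w\partial_{\theta_w}\xi_v=0$ at the flat point. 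This lets us replace $t_w$ by $t_w-t_1=-d_w$ for $w\ge2$, with $d_1:=0$, which yields the stated $-\sum_{w\ge2}\partial_{\theta_w}\xi_v\,d_w$. Continuity of the derivatives handles $\theta_{*,N}\to\theta_*$.

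\emph{Step 4: the randomized identity.} This step is bookkeeping. By \Cref{lem:a-uniform}(a), $\E_\tau\mu_v(\mathcal D_W)=F_v(\bm\theta)$. Averaging the slot-uniform law $\bar F$ over $\tau$ gives the generic $F_C$, so $\E_\tau\bar g_\tau=\pi$ and hence $\E_\tau\xi_v=F_v-\pi(\theta_v)=e_v$.

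The main obstacle is the uniform $O(N^{-1})$ remainder. I would first bound each path term of the unrolled recurrence in \Cref{lem:a-affine}, using $|A|\le|\beta|/4$ to keep the second derivatives summable.
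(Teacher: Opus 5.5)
Your proposal is correct and is essentially the paper's argument. You show that $\xi_v$ vanishes at the flat configuration, then a first-order Taylor expansion with uniformly bounded second derivatives gives the $O(N^{-1})$ remainder, and continuity of the derivatives gives the limit. The only difference is the expansion point. The paper expands about the moving anchor $\theta_{*,N}\one$ with $\theta_{1,N}=\theta_{*,N}$, so the displacements $-d_w/\sqrt N$ are exact. You expand about a fixed $\theta_*\one$ and then need $\sum_w\partial_{\theta_w}\xi_v|_{\theta_*\one}=0$. That identity follows from $\xi_v(c\one)=0$ for every $c$ alone, so no shift property of $\xi_v$ itself is needed. Your Step~4 bookkeeping is also fine.

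There is one correction in Step~1: the termwise claim $\widehat F_a^{(v)}=\widehat F_a$ is false for a fixed square. At the flat point the slot laws still depend on position; for example, $m(3)\ne m(2)$ when $\gamma=0\ne\beta$, by Lemma~\ref{lem:a-c1null}(ii). In a Williams square the ordered pair $(a,v)$ is consecutive exactly once, at one particular position. So $\widehat F_a^{(v)}$ is a single-position law, while $\widehat F_a$ averages positions $2,\dots,V$; only the sum of these brackets over $a\ne v$ vanishes. The clean reason, which is the paper's, is simpler: at the flat point every slot law depends only on its position. Position balance (W1) alone then gives $\bar F_v^{\mathcal D_W}=\bar F$, hence $\xi_v(\theta_*\one)=0$. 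Your fallback appeal to the null computation in Proposition~\ref{prop:lb-var}$'$(ii) already supplies this, so the proof goes through.
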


\begin{proof}
At any finite-$N$ null $\bm\theta=\theta_{*,N}\one$ all items are interchangeable,
so by (W1) $\bar F_v^{\mathcal D_W}=\bar F$ there and $\xi_v(\theta_{*,N}\one)=0$.

Taylor expanding $\xi_v$ about $\theta_{*,N}\one$ and substituting
$\theta_{w,N}-\theta_{*,N}=-d_w/\sqrt N$ (exactly, for $w\ge2$; $0$ for $w=1$)
gives
\[
  \xi_v(\bm\theta_N)=-\frac1{\sqrt N}\sum_{w\ge2}
  \partial_{\theta_w}\xi_v\big|_{\theta_{*,N}\one}\,d_w+O(1/N);
\]
multiplying by $\sqrt N$ and letting $N\to\infty$ gives the result, since the derivatives converge by continuity.
\end{proof}

\begin{lemma}
\label{lem:a-BE}
Fix $k$, with $V\ge4$, $(\beta,\gamma)\ne(0,0)$, and the non-degeneracy
$\nu_{\mathrm R,k}>0$ of \Cref{lem:a-c1null}. Set $L_k:=(g_{\mathrm{self}}(\theta_*)-g_{\mathrm{cross}}(\theta_*))d_k$, the limit
in \Cref{thm:lb-pcs}(i), which part (i) below proves; for (iii) assume additionally
$g_{\mathrm{self}}(\theta_*)>g_{\mathrm{cross}}(\theta_*)$ (the hypothesis of
\Cref{thm:lb-pcs}(ii)), so that $L_k>0$. Set
$\sigma_{k,W}^2=\lim_NN\Var(Z_k(\mathcal D_W))$,
$\sigma_{k,R}^2=\lim_NN\Var(Z_k(\mathcal D_R))$ (unconditional). Then:
\textnormal{(i)} $\E[Z_k(\mathcal D_W)]=\E[Z_k(\mathcal D_R)]$ for every $N$, and
$\sqrt N\,\E[Z_k(\mathcal D)]\to L_k$ for both designs;
\textnormal{(ii)} $\sigma_{k,W}^2=\sigma_{k,R}^2-\nu_{\mathrm R,k}<\sigma_{k,R}^2$;
\textnormal{(iii)} $\lim_N\Pr(Z_k(\mathcal D)>0)=\Phi(L_k/\sigma_{k,\mathcal D})$
for both designs, with
$\Phi(L_k/\sigma_{k,W})>\Phi(L_k/\sigma_{k,R})$ strictly.
\end{lemma}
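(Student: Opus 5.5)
Part (i) comes first. \Cref{lem:a-uniform}(a) already gives $\E[Z_k(\mathcal D_W)]=\E[Z_k(\mathcal D_R)]=F_1(\bm\theta_N)-F_k(\bm\theta_N)$ exactly at every $N$. For the limit, Taylor expand the smooth map $\bm\theta\mapsto F_1(\bm\theta)-F_k(\bm\theta)$ about the flat point $\theta_*\bm 1$. The zeroth-order term vanishes because at the flat point $F_1=F_k$ by exchangeability. The first-order term is $N^{-1/2}\sum_w[\partial_wF_1-\partial_wF_k]t_w$. By exchangeability $\partial_1F_1=\partial_kF_k=g_{\mathrm{self}}$ and every other partial equals $g_{\mathrm{cross}}$. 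So the coefficient of $t_w$ is zero for $w\notin\{1,k\}$, and the sum reduces to $(g_{\mathrm{self}}-g_{\mathrm{cross}})(t_1-t_k)$. This is where the bulk coefficients and the level cancel. The remainder is $O(1/N)$ because $F_v$ is a finite permutation average of $C^\infty$ functions with derivatives bounded uniformly in $u$, via dominated convergence as in the continuity step of the proof of \Cref{prop:lb-var}. Hence $\sqrt N\,\E[Z_k]\to L_k$.

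For part (ii) I will use the exact identity of \Cref{prop:lb-var}: $N(\Var Z_k(\mathcal D_R)-\Var Z_k(\mathcal D_W))=\nu_{\mathrm R,k}(\bm\theta_N)-V\nu_{\mathrm W,k}(\bm\theta_N)$. By continuity in $\bm\theta$, this converges to $\nu^*_{\mathrm R,k}-V\cdot 0$, since $\nu_{\mathrm W,k}=0$ at the flat configuration. This is exactly the concluding display of \Cref{prop:lb-var}$'$. It remains to check that $\sigma^2_{k,R}$ exists. By the law of total variance, $N\Var Z_k(\mathcal D_R)=[\overline W_{\mathrm{unif}}(\bm\theta_N)]_{k-1,k-1}+\nu_{\mathrm R,k}(\bm\theta_N)$, and both terms are continuous in $\bm\theta$, so the limit exists. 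It is positive because it is at least $\nu^*_{\mathrm R,k}>0$. Consequently $\sigma^2_{k,W}=\sigma^2_{k,R}-\nu^*_{\mathrm R,k}$. I expect to also need $\sigma^2_{k,W}>0$ to avoid a degenerate limit. To get it, I will use the sampling term $[\overline W_{\mathrm{unif}}]_{k-1,k-1}$, which is strictly positive since each $Y$ has conditional variance bounded away from $0$. If $\sigma^2_{k,W}=0$ were possible, part (iii) would still hold with $\Phi(+\infty)=1$.

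Part (iii) is a triangular-array CLT, which I expect to be the main technical step because the parameters move with $N$. Under $\mathcal D_R$, $Z_k=N^{-1}\sum_t c_{t,k}$ with i.i.d.\ bounded summands whose law depends on $N$. Under $\mathcal D_W$, $Z_k$ is an average of $N/V$ i.i.d.\ bounded block averages. In both cases I will apply Lindeberg--Feller, which holds trivially for uniformly bounded summands once the variance of $\sqrt N Z_k$ converges to a positive limit. This gives $\sqrt N(Z_k-\E Z_k)\Rightarrow\N(0,\sigma^2_{k,\mathcal D})$. Combined with part (i), $\sqrt N Z_k\Rightarrow\N(L_k,\sigma^2_{k,\mathcal D})$. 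Since the limit is continuous, $\Pr(Z_k>0)\to\Phi(L_k/\sigma_{k,\mathcal D})$. Finally, $L_k>0$ together with $\sigma_{k,W}<\sigma_{k,R}$ and the strict monotonicity of $\Phi$ yields the strict inequality.
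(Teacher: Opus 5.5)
Your proposal is correct and follows essentially the paper's proof. Part (i) uses \Cref{lem:a-uniform}(a) plus a first-order expansion of $F_1-F_k$ about $\theta_*\bm 1$ with the exchangeability cancellation; part (ii) uses the exact identity of \Cref{prop:lb-var} with continuity and $\nu_{\mathrm W,k}=0$ at the flat point; part (iii) uses a batch- or block-level triangular-array CLT, where your Lindeberg--Feller in place of the paper's Berry--Esseen changes nothing. Your check that $\sigma^2_{k,W}>0$ mirrors \Cref{lem:a-jointclt}(ii), but justify positivity of $[\overline W_{\mathrm{unif}}]_{k-1,k-1}$ by the full support of the rating vector on $\{0,1\}^V$ given $\pi$, which makes $Y_{P(1)}-Y_{P(k)}$ nondegenerate, rather than by each individual variance being positive.
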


\begin{proof}
\emph{(i) Means: proof of \Cref{thm:lb-pcs}(i).} By \Cref{lem:a-uniform}(a) the
two designs share their means for every $N$, so it suffices to treat the shuffle,
under which $\E[\phat_v]=F_v(\bm\theta):=\E_\pi[M_v(\pi)]$ is a function of
$\bm\theta$ alone: i.i.d.\ batches contribute no $N$-dependence.

Each
$M_v(\pi)$ is $C^\infty$ in $\bm\theta$ (a finite composition of smooth maps
of $(\bm\theta,u)$, with differentiation under $\E_u$ justified by dominated
convergence), so $F_v\in C^1$.

Uniformity of $\pi$ makes the family equivariant
under relabeling: $F_{\varsigma(v)}(\varsigma\cdot\bm\theta)=F_v(\bm\theta)$ for
every permutation $\varsigma$, where
$(\varsigma\cdot\bm\theta)_{\varsigma(w)}=\theta_w$. Differentiating at the flat
configuration $\theta_*\bm 1$, which every $\varsigma$ fixes, forces
$\partial F_v/\partial\theta_v$ to be free of $v$ and
$\partial F_v/\partial\theta_w$ ($w\ne v$) to be free of $(v,w)$: these are
$g_{\mathrm{self}}(\theta_*)$ and $g_{\mathrm{cross}}(\theta_*)$.

With
$\theta_v=\theta_*+t_v/\sqrt N$, a
first-order expansion of the $C^1$ function $F_1-F_k$ at $\theta_*\bm 1$ gives
\[
  \sqrt N\,\E[Z_k(\mathcal D)]
  =\sum_{v=1}^V\Big(\frac{\partial F_1}{\partial\theta_v}
  -\frac{\partial F_k}{\partial\theta_v}\Big)\Big|_{\theta_*\bm 1}\,t_v+o(1)
  =\big(g_{\mathrm{self}}(\theta_*)-g_{\mathrm{cross}}(\theta_*)\big)(t_1-t_k)+o(1)
  \;\longrightarrow\;L_k,
\]
where every $t_v$ with $v\notin\{1,k\}$ has coefficient
$g_{\mathrm{cross}}(\theta_*)-g_{\mathrm{cross}}(\theta_*)=0$, which
also cancels the background level $\theta_*$.

\emph{(ii) Variances.} Proposition~\ref{prop:lb-var}$'$ gives
\[
  N\Var(Z_k(\mathcal D_R))-N\Var(Z_k(\mathcal D_W))
  \;=\;\nu_{\mathrm R,k}-N[\bm\Delta_{\mathrm W}]_{k-1,k-1}\;\to\;\nu_{\mathrm R,k}\;>\;0,
\]
with the limits existing by smoothness of the variance functionals in the base
parameters.

\emph{(iii) Distributional limits.} Under $\mathcal D_R$ the per-batch contrasts
are i.i.d.\ and bounded. Under $\mathcal D_W$ the i.i.d.\ unit is the
\emph{block} of $V$ batches sharing one symbol bijection $\tau_r$, so decompose
$Z_k(\mathcal D_W)=\frac{V}{N}\sum_{r=1}^{N/V}U_r$ into $N/V$ i.i.d.\ bounded block
averages and apply the Berry--Esseen theorem to the block sums (batches within a
block are dependent, so the batch-level i.i.d.\ argument does not
apply to $\mathcal D_W$ directly).

In either case, the standardized contrast is
Gaussian up to $O(N^{-1/2})$, giving the $\Phi$-limits; the strict ordering
follows from $L_k>0$ (the part-(ii) hypothesis of \Cref{thm:lb-pcs}, via (i)),
(ii), and monotonicity of $\Phi$.
\end{proof}

\begin{remark}[Fixed squares: the surviving shift-tolerance condition]
\label{rem:a-fixedshift}
For a fixed square the two designs' means differ by the structural shift of
\Cref{lem:a-shift}, and the comparison instead requires
$L^{\mathrm{fix}}_{k,W}/\sigma_{k,W}>L_k/\sigma_{k,R}$ with
$L^{\mathrm{fix}}_{k,W}:=g_{\mathrm{self}}(\theta_*)d_k+\xi_1^{(1)}(\theta_*)-\xi_k^{(1)}(\theta_*)$.
This is not automatic: $\xi^{(1)}=O(|\beta|+|\gamma|)$ can dominate the
$\Theta(\beta^2+\gamma^2)$ variance gain, and its sign is construction-dependent
(cf.\ \Cref{rem:a-missingself}). Moreover, even when the standardized-mean
comparison holds pairwise, an exact joint selection comparison for a fixed
square would additionally require control of its limiting correlation matrix;
pairwise inequalities alone do not establish it. The case-by-case condition is
confined to the
unrandomized design; the symbol-randomized main line does not need it.
\end{remark}

\begin{theorem}
\label{thm:a-union}
In the local-alternative regime, if the hypotheses of \Cref{lem:a-BE}---the
non-degeneracy $\nu_{\mathrm R,k}>0$ and the selection-consistency condition $L_k>0$---hold
for every $k$, then the union-bound on the probability of incorrect selection,
$\mathrm{PIC}^{\mathrm{UB}}(\mathcal D):=\sum_{k=2}^V\Pr(Z_k(\mathcal D)\le0)$, satisfies
$\liminf_N[\mathrm{PIC}^{\mathrm{UB}}(\mathcal D_R)-\mathrm{PIC}^{\mathrm{UB}}(\mathcal D_W)]>0$.
\end{theorem}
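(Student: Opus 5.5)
The statement follows almost termwise from \Cref{lem:a-BE}. The plan is to write the difference of the two union bounds as a finite sum over the $V-1$ contrasts. Then show that each summand has a strictly positive limit.

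Concretely,
\[
  \mathrm{PIC}^{\mathrm{UB}}(\mathcal D_R)-\mathrm{PIC}^{\mathrm{UB}}(\mathcal D_W)
  =\sum_{k=2}^V\Big[\Pr\big(Z_k(\mathcal D_W)>0\big)-\Pr\big(Z_k(\mathcal D_R)>0\big)\Big]
  +\sum_{k=2}^V\Big[\Pr\big(Z_k(\mathcal D_R)=0\big)-\Pr\big(Z_k(\mathcal D_W)=0\big)\Big],
\]
using $\Pr(Z_k\le0)=1-\Pr(Z_k>0)$ after isolating the atom at zero.

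For the first sum, \Cref{lem:a-BE}(iii) gives, for each $k$ and each design, $\Pr(Z_k(\mathcal D)>0)\to\Phi(L_k/\sigma_{k,\mathcal D})$. Each bracket therefore converges to
\[
  \Phi\big(L_k/\sigma_{k,W}\big)-\Phi\big(L_k/\sigma_{k,R}\big).
\]
This is strictly positive, because $L_k>0$ by hypothesis and $\sigma_{k,W}<\sigma_{k,R}$ by \Cref{lem:a-BE}(ii), which rests on $\nu_{\mathrm R,k}>0$ and \Cref{prop:lb-var}$'$. We also need $\sigma_{k,W}>0$ so that the ratio is finite. I would take this from the Berry--Esseen step: the sampling term $\tfrac1N\overline W_{\mathrm{unif}}$ is shared by both designs (\Cref{lem:a-uniform}(b)), and its diagonal is bounded below by a positive conditional Bernoulli variance, since $\sigma\in(0,1)$ strictly. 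Since there are finitely many $k$, the $\liminf$ of the sum equals the sum of the limits, which is strictly positive.

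The main obstacle is the atom term. $Z_k$ is a lattice variable on the grid $N^{-1}\mathbb Z$, and ties are broken by a fixed rule, so the events $\{Z_k\le0\}$ and $\{Z_k<0\}$ differ by $\Pr(Z_k=0)$. I would show that $\Pr(Z_k(\mathcal D)=0)\to0$ for both designs, which makes the second sum vanish in the limit.

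The plan for this uses the uniform Berry--Esseen bound from \Cref{lem:a-BE}(iii). It gives $\sup_x|\Pr(\sqrt N Z_k\le x)-\Phi((x-\sqrt N\E Z_k)/\sigma_{k,\mathcal D})|=O(N^{-1/2})$, applied to batch sums under $\mathcal D_R$ and block sums under $\mathcal D_W$. Then
\[
  \Pr(Z_k=0)\le\Pr\big(\sqrt N Z_k\in[-\epsilon,\epsilon]\big)\le\Phi\text{-mass of an interval of length }2\epsilon/\sigma_{k,\mathcal D}+O(N^{-1/2})
\]
for every $\epsilon>0$. Letting $N\to\infty$ and then $\epsilon\to0$ shows the atom is asymptotically negligible.

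With both sums handled, $\liminf_N$ of the difference equals $\sum_k[\Phi(L_k/\sigma_{k,W})-\Phi(L_k/\sigma_{k,R})]>0$, which is the claim.
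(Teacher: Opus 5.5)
Your core argument is the paper's: for each $k$, \Cref{lem:a-BE}(iii) gives $\lim_N\Pr(Z_k(\mathcal D)\le 0)=\Phi(-L_k/\sigma_{k,\mathcal D})$. Part (ii) together with $L_k>0$ makes the Williams limit strictly smaller, and summing the finitely many strict inequalities gives the $\liminf$ claim. Your justification of $\sigma_{k,W}>0$ through the shared sampling term is also how the paper argues it in \Cref{lem:a-jointclt}(ii).

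The atom step, however, is a misstep rather than an obstacle. The event $\{Z_k\le0\}$ is exactly the complement of $\{Z_k>0\}$, so $\Pr(Z_k\le 0)=1-\Pr(Z_k>0)$ with no correction. The difference of the union bounds therefore equals your first sum exactly. Your displayed identity with the extra sum $\sum_k[\Pr(Z_k(\mathcal D_R)=0)-\Pr(Z_k(\mathcal D_W)=0)]$ is false in general; it holds only if the atoms at zero happen to carry equal total mass under both designs. Your conclusion survives only because you then show that this spurious term vanishes, so you should delete it rather than prove it. Tie-breaking plays no role in $\mathrm{PIC}^{\mathrm{UB}}$ as defined. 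It matters only for $\Pr_{\mathcal D}(\hat v=1)$, where the paper handles it by sandwiching between $\{Z_k>0\ \forall k\}$ and $\{Z_k\ge0\ \forall k\}$ and using the nondegenerate Gaussian limit.
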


\begin{proof}
By \Cref{lem:a-BE}, 
\[
  \begin{aligned}
    \lim_N\Pr(Z_k(\mathcal D_W)\le0)&=\Phi(-L_k/\sigma_{k,W})\\
    &<\Phi(-L_k/\sigma_{k,R})\\
    &=\lim_N\Pr(Z_k(\mathcal D_R)\le0)
  \end{aligned}
\]
for each $k$; summing the $V-1$ strict inequalities gives the claim.
\end{proof}

The union bound is upgraded to an exact comparison of correct-selection
probabilities by identifying the joint limiting law of the contrasts under
each design, in the triangular-array sense of the local regime, and comparing
the two limits under one common Gaussian vector.

\begin{lemma}
\label{lem:a-jointclt}
In the local-alternative regime, for each design
$\mathcal D\in\{\mathcal D_R,\mathcal D_W\}$:

\textnormal{(i)} $N\,\Cov(\mathbf Z(\mathcal D))=\bm\Omega_{\mathcal D}(\bm\theta)$
for a matrix function free of $N$---under $\mathcal D_R$ the per-batch covariance
of the contrast vector, under $\mathcal D_W$ the per-block covariance times
$V$---with entries continuous in $\bm\theta$; hence
$N\Cov(\mathbf Z(\mathcal D))\to\bm\Sigma_{\mathcal D}:=\bm\Omega_{\mathcal D}(\theta_*\one)$
entrywise.
\textnormal{(ii)} $\bm\Sigma_{\mathcal D}$ is positive definite, with correlation
matrix $\tfrac12(\mathbf I+\mathbf J)$ and diagonal
$\sigma^2_{k,\mathcal D}>0$.
\textnormal{(iii)} Jointly in the $V-1$ coordinates,
$\sqrt N\,\big(\mathbf Z(\mathcal D)-\E\mathbf Z(\mathcal D)\big)
\xrightarrow{d}\N(\bm0,\bm\Sigma_{\mathcal D})$.
\end{lemma}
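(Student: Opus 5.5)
The plan is to prove the three parts in order. The randomization unit is the one fixed in \Cref{ass:design}: a single batch under $\mathcal D_R$ and a block of $V$ batches sharing $\tau_r$ under $\mathcal D_W$.

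For part (i), I will write $\mathbf Z(\mathcal D)$ as an average of i.i.d.\ bounded unit-level vectors. Under $\mathcal D_R$, $\mathbf Z=N^{-1}\sum_t\mathbf z_t$, where $\mathbf z_t=(Y_{t,P(1,t)}-Y_{t,P(k,t)})_{k\ge2}$ is i.i.d.\ in $t$. Its law is the mixture over $\pi\sim\mathrm{Unif}(S_V)$ and $u\sim\N(0,\sigma_b^2)$ of the within-batch chain. Hence $N\Cov(\mathbf Z)=\Cov(\mathbf z_1)=:\bm\Omega_{\mathcal D_R}(\bm\theta)$, which is free of $N$.

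Under $\mathcal D_W$, $\mathbf Z=(V/N)\sum_{r=1}^{N/V}\mathbf U_r$, where the block averages $\mathbf U_r$ are i.i.d. This gives $N\Cov(\mathbf Z)=V\Cov(\mathbf U_1)=:\bm\Omega_{\mathcal D_W}(\bm\theta)$.

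Each entry of $\bm\Omega_{\mathcal D}$ is a finite average, over permutations or bijections, of first and second moments of the ratings. Each such moment is $\E_u$ of a finite polynomial in sigmoids of $\bm\theta+u$ (using \Cref{lem:a-affine} for the joint laws). By dominated convergence these are continuous, indeed $C^\infty$, in $\bm\theta$. Along $\bm\theta_N\to\theta_*\one$ this gives $N\Cov(\mathbf Z)\to\bm\Sigma_{\mathcal D}$ entrywise.

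For part (ii), I work at the flat configuration. There, relabeling items leaves the joint law of the item-indexed ratings $(Y_{\cdot,P(v,\cdot)})_v$ invariant under either design. Under $\mathcal D_R$ this is because $\pi$ is uniform. Under $\mathcal D_W$ it is because $\tau$ is uniform and $\varsigma\circ\tau$ is again uniform.

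So the per-unit covariance of the item win-rate vector has the form $s\mathbf I+c(\mathbf J-\mathbf I)$: a common variance $s$ and a common covariance $c$. The contrast covariance then has diagonal $2(s-c)$ and off-diagonal entries $\Var(p_1)-2\Cov(p_1,p_k)+\Cov(p_k,p_l)=s-c$. This is exactly $(s-c)(\mathbf I+\mathbf J)$, whose correlation matrix is $\tfrac12(\mathbf I+\mathbf J)$. That matrix is positive definite, with eigenvalues $1/2$ and $V/2$.

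It remains to show $s-c>0$, which is the same as $\sigma^2_{k,\mathcal D}>0$. I bound the variance of one contrast below by its expected conditional variance given $(u,\text{design})$. Given these, the two ratings $Y_{P(1)},Y_{P(k)}$ are Bernoulli with probabilities in $(0,1)$. Their joint law puts positive mass on both $(1,0)$ and $(0,1)$, because every transition probability lies strictly in $(0,1)$. So the conditional variance of their difference is strictly positive, and this survives averaging and the $1/N$ scaling. The identity $\sigma^2_{k,W}=\sigma^2_{k,R}-\nu^*_{\mathrm R,k}$ from \Cref{lem:a-BE}(ii) is consistent with this but is not needed.

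For part (iii), the regime is a triangular array, since $\bm\theta_N$ moves with $N$. I will use Cramér--Wold: fix $\mathbf a\ne\bm 0$ and consider $\sqrt N\,\mathbf a^\top(\mathbf Z-\E\mathbf Z)$. This is a normalized sum of $N$ (or $N/V$) i.i.d.\ row summands bounded by $\|\mathbf a\|_1$ (times $V$). Its variance converges to $\mathbf a^\top\bm\Sigma_{\mathcal D}\mathbf a$, which is strictly positive by (ii). The Lindeberg condition is then immediate, because the summands are uniformly bounded while the total variance grows linearly in the number of units. Lindeberg--Feller gives $\N(0,\mathbf a^\top\bm\Sigma_{\mathcal D}\mathbf a)$, and hence the joint limit.

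The step I expect to need the most care is the exchangeability claim in (ii) under $\mathcal D_W$. Batches within a block are dependent, so I must check that the \emph{block-level} covariance, not merely each batch marginal, is invariant under item relabeling. I will verify this directly: $\bar{\mathbf U}$ under $(\tau,u,\text{ratings})$ maps to the relabeled vector under $(\varsigma\circ\tau,\dots)$, which has the same law because $\theta$ is flat.
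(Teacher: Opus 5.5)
Your proposal is correct and follows essentially the same route as the paper. It uses the i.i.d.\ batch/block decomposition with dominated-convergence continuity for (i), flat-configuration item exchangeability plus conditional full support of the ratings to get the intraclass structure and $\sigma^2_{k,\mathcal D}>0$ for (ii), and Cram\'er--Wold with Lindeberg--Feller on bounded triangular arrays for (iii). The only differences are cosmetic: you lower-bound the variance by conditioning on $(u,\text{design})$ rather than on $\pi$ alone, and you derive the $(s-c)(\mathbf I+\mathbf J)$ form directly rather than citing \Cref{cor:a-exch}.
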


\begin{proof}
\emph{(i).} Under $\mathcal D_R$, $\mathbf Z$ is the average of $N$ i.i.d.\
bounded per-batch contrast vectors, so $N\Cov(\mathbf Z)$ equals the per-batch
covariance
\[
  \bm\Omega_R(\bm\theta)=\E_\pi[W(\pi)]+\Cov_\pi\big((M_1(\pi)-M_k(\pi))_k\big),
\]
free of $N$; under $\mathcal D_W$, $\mathbf Z$ is the average of $N/V$ i.i.d.\
bounded per-block contrast vectors, so
$N\Cov(\mathbf Z)=V\cdot\Cov(\text{block contrast})=:\bm\Omega_W(\bm\theta)$.

Every entry is a finite combination of expectations of $C^\infty$ functions of
$(\bm\theta,u)$ over finitely many permutations, continuous in $\bm\theta$ by
dominated convergence; the local regime sends $\bm\theta_N\to\theta_*\one$, giving
the entrywise limits.

\emph{(ii).} At $\theta_*\one$ the law of $(\phat_v)_v$ is item-exchangeable under
either design, so the limiting contrast correlation matrix is intraclass,
$\tfrac12(\mathbf I+\mathbf J)$, with eigenvalues $\tfrac12$ (multiplicity $V-2$)
and $\tfrac V2$, both positive (\Cref{cor:a-exch}).

For the diagonal, by
\Cref{lem:a-uniform}(b) the sampling part of $\sigma^2_{k,\mathcal D}$ is
\[
  [\overline W_{\mathrm{unif}}]_{k-1,k-1}
=\E_\pi[\Var(Y_{P(1)}-Y_{P(k)}\mid\pi)];
\]
 for every $\pi$ each conditional success
probability lies strictly in $(0,1)$, so the batch's rating vector has full
support on $\{0,1\}^V$ given $\pi$, the contrast is non-degenerate, and
$[\overline W_{\mathrm{unif}}]_{k-1,k-1}>0$. The design parts are nonnegative,
whence $\sigma^2_{k,\mathcal D}>0$ and $\bm\Sigma_{\mathcal D}\succ\bm0$.

\emph{(iii).} Fix $\bm a\in\R^{V-1}$. Under $\mathcal D_R$ the summands
$\bm a^\top(\text{batch contrast}-\text{its mean})$ form, for each $N$, an i.i.d.\
triangular array bounded by $2\|\bm a\|_1$, with variances
\[
  \bm a^\top\bm\Omega_R(\bm\theta_N)\bm a\to\bm a^\top\bm\Sigma_R\bm a
\]
 by (i);
boundedness verifies the Lindeberg condition, and the Lindeberg--Feller
CLT applies. The Cram\'er--Wold device gives the joint limit. Under
$\mathcal D_W$, we apply the same argument to the $N/V$ i.i.d.\ bounded block
contrast vectors.
\end{proof}

By \Cref{lem:a-jointclt} and $\sqrt N\,\E[Z_k(\mathcal D)]\to L_k$
(\Cref{lem:a-BE}(i), with $L_k>0$ under the part-(ii) hypothesis),
$\sqrt N\,\mathbf Z(\mathcal D)\xrightarrow{d}\N(\bm L,\bm\Sigma_{\mathcal D})$
jointly.

Ties among the win rates have positive probability at any finite $N$,
but with ties broken by any fixed rule,
\[
  \begin{aligned}
    \Pr(\sqrt N\,Z_k>0\ \forall k)&\le\Pr_{\mathcal D}(\hat v=1)\\
    &\le\Pr(\sqrt N\,Z_k\ge0\ \forall k)
  \end{aligned};
\]
 the limiting Gaussian is nondegenerate, so
its orthant boundary carries no mass, both bounds converge to the same limit,
and
\[
  \Pr_{\mathcal D}(\hat v=1)
  \;\longrightarrow\;
  \Pr\big(\tilde G_k>-\tau_k(\mathcal D)\ \forall k\big),
  \qquad
  \tau_k(\mathcal D):=L_k/\sigma_{k,\mathcal D},
\]
with $\tilde{\mathbf G}$ standard Gaussian with correlation matrix
$\mathbf R_\infty(\mathcal D)$.

Two facts drive the comparison:
\textbf{(C1)} \emph{threshold dominance},
$\tau_k(\mathcal D_W)>\tau_k(\mathcal D_R)$ for every $k$, since the numerators
$L_k>0$ coincide and $\sigma_{k,W}<\sigma_{k,R}$ (\Cref{lem:a-BE}(i)--(ii)); and
\textbf{(C2)} \emph{equality of the limiting correlations},
$\mathbf R_\infty(\mathcal D_W)=\mathbf R_\infty(\mathcal D_R)
=\tfrac12(\mathbf I+\mathbf J)$ (\Cref{cor:a-exch}, whose null computation applies
to the local-regime limits by the covariance continuity of
\Cref{lem:a-jointclt}(i)).

\medskip\noindent\textbf{Proof of Theorem~\ref{thm:lb-pcs}(ii).}
Let $A:=\Pr\big(\tilde G_k>-\tau_k(\mathcal D_R)\ \forall k\big)$ under the
common correlation matrix of (C2).

\emph{Step 1 (a common Gaussian).} By (C2) the two limiting standardized laws
coincide: both selection probabilities converge to orthant probabilities of the
\emph{same} Gaussian vector $\tilde{\mathbf G}$, evaluated at the thresholds
$-\tau_k(\mathcal D_R)$ and $-\tau_k(\mathcal D_W)$ respectively. In particular
$\lim_N\Pr_{\mathcal D_R}(\hat v=1)=A$.

\emph{Step 2 (thresholds).} By (C1) the Williams orthant strictly contains the
shuffle orthant, and the difference contains the nonempty open box
$\prod_k\big(-\tau_k(\mathcal D_W),-\tau_k(\mathcal D_R)\big)$, which carries
positive probability because $\tilde{\mathbf G}$ is nondegenerate with full
support (\Cref{lem:a-jointclt}(ii)). Hence
$\lim_N\Pr_{\mathcal D_W}(\hat v=1)>A$.

Combining the two steps,
\[
  \begin{aligned}
    &\liminf_N[\Pr_{\mathcal D_W}(\hat v=1)-\Pr_{\mathcal D_R}(\hat v=1)]\\
    &\quad=\lim_N\Pr_{\mathcal D_W}(\hat v=1)-A>0
  \end{aligned},
\]
 which is \eqref{eq:lb-pcs}. \qed

\begin{corollary}
\label{cor:a-exch}
Under symbol randomization in $\mathcal D_W$, at the global null $\bm\theta=\theta_*\one$ the law of
$(\phat_1,\dots,\phat_V)$ is exchangeable under either design, so the limiting covariance is intraclass,
$\bm\Sigma^{(\infty)}=(a-c)\mathbf I+c\mathbf J$ ($a>c$, design-specific), and the limiting contrast
correlation is $\tfrac12$ for both designs: $\mathbf R_\infty(\mathcal D_W)=\mathbf R_\infty(\mathcal D_R)=\tfrac12(\mathbf I+\mathbf J)$.
Hence \textnormal{(C2)} holds with exact equality in the limit, and \eqref{eq:lb-pcs} is driven purely by
the threshold effect \textnormal{(C1)}.
\end{corollary}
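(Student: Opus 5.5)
The plan is to prove exchangeability of $(\phat_1,\dots,\phat_V)$ at the flat configuration by a relabeling argument. Then I will read off the intraclass structure of the limiting covariance, and finally compute the contrast correlation directly.

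\emph{Step 1 (exchangeability).} At $\bm\theta=\theta_*\one$ the working model of \Cref{ass:working} treats every item identically. Given a realized order $\pi$, the joint law of the slot ratings $(Y_{t,1},\dots,Y_{t,V})$ does not depend on which item sits in which slot. For any permutation $\varsigma$ of items, the vector $(\phat_{\varsigma(v)})_v$ computed under allocation $\{V_{tj}\}$ equals $(\phat_v)_v$ computed under the relabeled allocation $\{\varsigma^{-1}(V_{tj})\}$, with the same rating array. By \Cref{ass:design} the relabeled allocation has the same law as the original, and it is independent of $\{u_t\}$ and of the conditional draws. Hence the two rating vectors have the same joint law.

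Both designs need this relabeling invariance. Under $\mathcal D_R$ the orders are i.i.d.\ uniform, so invariance is immediate. Under the symbol-randomized Williams design, composing each block's bijection $\tau_r$ with $\varsigma$ leaves its law $\mathrm{Unif}(S_V)$ unchanged. This is the same group argument as in \Cref{lem:a-uniform}. The argument is exact for every $N$.

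\emph{Step 2 (intraclass covariance).} Exchangeability forces the covariance matrix of $\sqrt N(\phat_v)_v$ to have a common diagonal entry $a_N$ and a common off-diagonal entry $c_N$. The same holds in the limit, since \Cref{lem:a-jointclt}(i) gives an $N$-free expression under each design, with per-batch or per-block covariances. This yields $\bm\Sigma^{(\infty)}=(a-c)\mathbf I+c\mathbf J$ with design-specific $a,c$.

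To get $a>c$, observe that $a-c=\tfrac12\Var$ of the limiting scaled $\phat_1-\phat_2$. This quantity is strictly positive by the nondegeneracy argument in \Cref{lem:a-jointclt}(ii). That argument uses the fact that, conditional on $\pi$, the ratings have full support on $\{0,1\}^V$, so $[\overline W_{\mathrm{unif}}]_{1,1}>0$.

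\emph{Step 3 (contrast correlation).} For $Z_k=\phat_1-\phat_k$ with $k\ne l$ distinct from $1$, compute directly:
\[
\Cov(Z_k,Z_l)=a-c-c+c=a-c,
\qquad
\Var(Z_k)=2(a-c).
\]
The correlation is therefore $\tfrac12$, independent of $a$ and $c$. This gives $\mathbf R_\infty(\mathcal D_W)=\mathbf R_\infty(\mathcal D_R)=\tfrac12(\mathbf I+\mathbf J)$.

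\emph{Step 4 (transfer to local alternatives).} To pass to the local regime used in (C2), I will invoke the continuity in $\bm\theta$ of $\bm\Omega_{\mathcal D}(\bm\theta)$ from \Cref{lem:a-jointclt}(i). Along $\bm\theta_N\to\theta_*\one$, the scaled covariance converges to the flat-configuration matrix, and the correlation map is continuous where the diagonal is positive. With the correlations equal, the theorem's conclusion \eqref{eq:lb-pcs} is driven purely by the threshold comparison (C1).

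The main obstacle is the Williams half of Step 1. The batches within a block are dependent through the shared $\tau_r$, so exchangeability must be verified for the block-level law rather than batch by batch. I would handle it by showing that the map $\tau\mapsto\tau\circ\varsigma^{-1}$, or its appropriate side, is a bijection of $S_V$ that transforms every row order $\tau^{-1}\circ\rho_r$ consistently. This shows the whole block's item-to-slot assignment is relabeled by $\varsigma$ at once.
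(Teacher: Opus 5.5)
Your proposal is correct and follows the paper's own route. Item-exchangeability of $(\phat_1,\dots,\phat_V)$ at $\bm\theta=\theta_*\one$, obtained from the uniform law of each block's bijection $\tau_r$, forces an intraclass covariance. The direct computation $\Var(Z_k)=2(a-c)$, $\Cov(Z_k,Z_l)=a-c$ then gives contrast correlation $\tfrac12$ under both designs.

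Your additional steps only spell out what the paper's short proof leaves implicit:
\begin{itemize}
\item the explicit relabeling of the bijections $\tau_r$ at the block level;
\item the proof of $a>c$ via the diagonal-positivity part of \Cref{lem:a-jointclt}(ii), which, unlike that lemma's correlation claim, does not itself rely on this corollary;
\item the continuity transfer to the local-alternative regime.
\end{itemize}
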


\begin{proof}
At the null all items are interchangeable, so by symmetry of
$\tau_r\sim\mathrm{Unif}(S_V)$ the joint law of $(\phat_1,\dots,\phat_V)$ is
invariant under item permutations and its covariance is intraclass.

For $Z_k=\phat_1-\phat_k$ with $\bm\Sigma=(a-c)\mathbf I+c\mathbf J$,
\[
  \Var(Z_k)=2(a-c),
  \qquad
  \Cov(Z_j,Z_k)=a-c,
\]
so the contrast correlation is $\tfrac12$ for both designs; the limiting
correlation matrices therefore coincide.

The variance-vanishing argument of Proposition~\ref{prop:lb-var}$'$(ii) keeps
$\bm\Sigma_W$ unconditional, so the comparison reduces to (C1).
\end{proof}

\begin{remark}[Why standardization, not PSD-monotonicity]
\label{rem:a-slepian}
Deducing $\Pr_{\mathcal D_W}(\hat v=1)>\Pr_{\mathcal D_R}(\hat v=1)$ from $\Cov(\mathbf Z(\mathcal D_R))\succeq\bm\Sigma_W$
via PSD-monotonicity of orthant probabilities fails in general. The route above
succeeds because it standardizes first: once the limiting correlation matrices
are shown to be \emph{equal} (\Cref{cor:a-exch}), the two limits are orthant
probabilities of one common Gaussian vector at nested thresholds, and the
comparison is driven by the threshold effect (C1) alone---no Gaussian comparison
inequality is needed.
\end{remark}

\clearpage
\section{Proofs for Section~\ref{sec:convexity} (the nonlinearity bias)}
\label{app:convexity}

Throughout, $\sigma(x)=(1+e^{-x})^{-1}$ and $\pi(\theta)=\E_C[\sigma(\theta+C)]$ is the
marginal win probability of \eqref{eq:pi}, $C\sim F_C$ the generic contextual shift
of \Cref{sec:working-model}. Derivatives of $\pi$ are taken with respect to its scalar
argument, with the reference law $F_C$, which depends on the whole quality panel, held fixed. Recall $\sigma'=\sigma(1-\sigma)$,
$\sigma''=\sigma(1-\sigma)(1-2\sigma)$, $\sigma'''=\sigma(1-\sigma)(1-6\sigma+6\sigma^2)$,
and the group quantities $\bar\pi_G=V_G^{-1}\sum_{v\in G}\pi(\theta_v)$,
$m_{k,G}=V_G^{-1}\sum_{v\in G}(\theta_v-\bar\theta_G)^k$, and $\rho_G=\max_{v\in G}|\theta_v-c_0|$ with $c_0$ the expansion point of the result at hand ($c_0=\mu$ under $H_0$, $c_0=\bar\mu$ under $H_1$).

\subsection{Regularity of the marginal win probability}

\begin{lemma}
\label{lem:a-pi-deriv}
$\pi:\R\to(0,1)$ is infinitely differentiable, with
$\pi'(\theta)=\E_C[\sigma'(\theta+C)]\in(0,\tfrac14]$,
$\pi''(\theta)=\E_C[\sigma''(\theta+C)]$, $\pi'''(\theta)=\E_C[\sigma'''(\theta+C)]$,
and $M_3:=\sup_x|\pi'''(x)|\le\tfrac18$, $M_4:=\sup_x|\pi^{(4)}(x)|<0.13$. Moreover
$\sigma''$ is odd: $\sigma''(-x)=-\sigma''(x)$.
\end{lemma}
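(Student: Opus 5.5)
The plan is to treat $\pi$ as a Gaussian-free mixture, $\pi(\theta)=\int\sigma(\theta+c)\,dF_C(c)$, with $F_C$ a fixed probability measure on $\R$, and to reduce everything to properties of the logistic function. No moment condition on $C$ is required, because every derivative of $\sigma$ is bounded.

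\textbf{Smoothness and the derivative formulas.} For each $n\ge1$, $\sigma^{(n)}$ is a polynomial in $\sigma$, namely $\sigma'=\sigma(1-\sigma)$ and so on by induction. Hence $\sup_x|\sigma^{(n)}(x)|<\infty$. For fixed $c$, the map $\theta\mapsto\sigma(\theta+c)$ is $C^\infty$, and its $n$-th derivative is dominated by the constant $\sup|\sigma^{(n)}|$, which is integrable against the probability measure $F_C$. Dominated convergence, through the standard differentiation-under-the-integral theorem with the mean value theorem, then gives, inductively in $n$, that $\pi^{(n)}(\theta)=\E_C[\sigma^{(n)}(\theta+C)]$. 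This yields infinite differentiability and the three displayed formulas.

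\textbf{Range of $\pi$ and of $\pi'$.} Since $0<\sigma<1$ pointwise, we get $\pi\in(0,1)$. Since $0<\sigma'=\sigma(1-\sigma)\le\tfrac14$ pointwise, with equality only at $0$, integrating gives $\pi'\in(0,\tfrac14]$. The strict positivity holds because the integrand is strictly positive everywhere.

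\textbf{Oddness of $\sigma''$.} Use $\sigma(-x)=1-\sigma(x)$. Then $\sigma'(-x)=\sigma'(x)$, and differentiating gives $-\sigma''(-x)=\sigma''(x)$. Equivalently, in the closed form $\sigma''=\sigma(1-\sigma)(1-2\sigma)$ the factor $\sigma(1-\sigma)$ is even and $1-2\sigma$ is odd.

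\textbf{Sup bounds.} We have $M_3\le\sup_x|\sigma'''(x)|$ and $M_4\le\sup_x|\sigma^{(4)}(x)|$, since averaging cannot exceed the sup. Substitute $s=\sigma(x)\in(0,1)$.

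For the third derivative, $\sigma'''=h(s):=s(1-s)(1-6s+6s^2)$. Writing $w=s(1-s)\in(0,\tfrac14]$, we have $1-6s+6s^2=1-6w$, so $h=w(1-6w)$. On $[0,\tfrac14]$ this takes its maximum $\tfrac1{24}$ at $w=\tfrac1{12}$ and its minimum $-\tfrac18$ at $w=\tfrac14$. Hence $|\sigma'''|\le\tfrac18$.

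For the fourth derivative, $\sigma^{(4)}=\frac{d}{dx}[w(1-6w)]=(1-12w)w'$, with $w'=\sigma'(1-2\sigma)=w(1-2s)$. Using $(1-2s)^2=1-4w$, we get $|\sigma^{(4)}|=|1-12w|\,w\sqrt{1-4w}$. This is a one-variable maximization over $w\in[0,\tfrac14]$; a calculus check should give a maximum value of about $0.125$, which is below $0.13$.

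\textbf{Main obstacle.} The only delicate step is this last numeric bound. I would handle it by squaring to $g(w)=(1-12w)^2w^2(1-4w)$. Setting $g'=0$ gives a polynomial equation, and the critical points can be evaluated exactly or bracketed. The claim $\max g<0.0169=0.13^2$ then needs to be verified, including at the endpoints and on the two sides of the root $w=\tfrac1{12}$.
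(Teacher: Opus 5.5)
Your proposal is correct and follows essentially the same route as the paper. You differentiate under the integral by dominated convergence using the bounded derivatives of $\sigma$, get positivity and the $\tfrac14$ bound from $\sigma'=\sigma(1-\sigma)$, and get oddness from $\sigma(-x)=1-\sigma(x)$. You then bound $\sigma'''$ and $\sigma^{(4)}$ by writing them as polynomials in $p=\sigma(x)$; your substitution $w=p(1-p)$ is a clean way to do this.

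One numerical correction: the maximum of $|\sigma^{(4)}|$ is about $0.1277$, not $0.125$. It is attained at $w=(15+\sqrt{105})/120$, the root of $120w^2-30w+1=0$ on the branch $w>\tfrac1{12}$. The margin below $0.13$ is therefore thin, but your squared check still passes, since $\max g\approx0.0163<0.0169$.
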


\begin{proof}
Fix $\theta$ and set
\[
  \Delta_h(c)\;=\;\frac{\sigma(\theta+h+c)-\sigma(\theta+c)}{h}.
\]
By the mean value theorem $|\Delta_h(c)|=|\sigma'(\xi)|\le\tfrac14$ uniformly in $c$
and $h$ near $0$; since $\tfrac14$ is $F_C$-integrable and
$\Delta_h(c)\to\sigma'(\theta+c)$ pointwise, dominated convergence gives
$\pi'(\theta)=\E_C[\sigma'(\theta+C)]$.

Applying the same argument to $\sigma'$ and $\sigma''$ gives the formulas for
$\pi''$ and $\pi'''$, using the sharp bounds
\[
  |\sigma''|\;\le\;\tfrac{\sqrt3}{18}\;<\;0.0963,
  \qquad
  |\sigma'''|\;\le\;\max_{p\in[0,1]}p(1-p)|1-6p+6p^2|\;=\;\tfrac18
\]
(the latter attained at $p=\tfrac12$). Higher derivatives are bounded in the same way,
so $\pi\in C^\infty$ and $M_3\le\tfrac18$; for the fourth derivative,
$|\sigma^{(4)}|=p(1-p)|1-2p|\,|1-12p+12p^2|$ with $p=\sigma(x)$ has maximum $0.1277\ldots<0.13$ over $p\in[0,1]$, so $M_4<0.13$. Positivity and the
$\tfrac14$ bound on $\pi'$ follow from $\sigma'\in(0,\tfrac14]$.

Finally, using $\sigma(-x)=1-\sigma(x)$,
\[
  \sigma''(-x)\;=\;(1-\sigma(x))\,\sigma(x)\,(2\sigma(x)-1)\;=\;-\sigma''(x).
  \qedhere
\]
\end{proof}

\subsection{Consistency under the complete-block shuffle, and the missing-self correction}
\label{app:missing-self}

For each position $j$ let $F_C^{(j)}$ denote the \emph{unconditional} law of the
contextual shift $C_{1,j}$ (integrating over $u_1$, the ordering, and the chain),
and let $F_C:=V^{-1}\sum_{j=1}^VF_C^{(j)}$ be the position-averaged marginal used
in \eqref{eq:pi}. Let $F_C^{(j\mid v)}$ be the law of $C_{1,j}$ conditional on
$V_{1,j}=v$. Under the complete-block shuffle the conditional prefix is a uniform
ordered subset of the \emph{other} $V-1$ items---it can never contain $v$---so the
item-conditional average $\bar F_v:=V^{-1}\sum_jF_C^{(j\mid v)}$ differs from $F_C$
by an $N$-free \emph{missing-self} perturbation. The quantity item $v$'s win rate
actually estimates is
\begin{equation}
  \pi_v \;:=\; \int_{\R}\sigma(\theta_v+c)\,d\bar F_v(c)
  \;=\; \pi(\theta_v)+e_v,
  \qquad
  e_v \;:=\; \int_{\R}\sigma(\theta_v+c)\,d(\bar F_v-F_C)(c).
  \label{eq:a-pi-v}
\end{equation}

\begin{lemma}
\label{lem:a-lln}
Under \Cref{ass:working,ass:design}, $\phat_v\xrightarrow{p}\pi_v$ as $N\to\infty$, for
every $v$.
\end{lemma}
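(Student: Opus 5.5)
The plan is a weak law of large numbers at the batch level. Under the random shuffle, clause (a) of \Cref{ass:design}, we have $\phat_v=N^{-1}\sum_{t=1}^N Y_{t,P(v,t)}$. The summands are i.i.d.\ across batches, because the orders $\pi_t$, the batch effects $u_t$ and the conditional Bernoulli draws are independent across $t$. They are also bounded in $[0,1]$. Chebyshev's inequality, with variance at most $1/(4N)$, then gives $\phat_v-\E[Y_{1,P(v,1)}]\xrightarrow{p}0$. It therefore suffices to identify the common mean $\E[Y_{1,P(v,1)}]$ with $\pi_v$ from \eqref{eq:a-pi-v}.

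For that identification I would condition on the slot. Under a uniform order, $P(v,1)$ is uniform on $\{1,\dots,V\}$, so
\[
\E[Y_{1,P(v,1)}]=V^{-1}\sum_{j=1}^V\E[Y_{1,j}\mid V_{1,j}=v].
\]
Given $V_{1,j}=v$, the shift $C_{1,j}=\beta Y_{1,j-1}+\gamma\one_{\{j=1\}}+u_1$ is built only from $u_1$ and the ratings of the prefix items. Their identities form a uniform ordered subset of the other $V-1$ items, and this is the structural fact already used to define $F_C^{(j\mid v)}$.

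Next I would apply the tower property to \eqref{eq:bern}. Conditional on $C_{1,j}$ and $V_{1,j}=v$, the rating $Y_{1,j}$ is Bernoulli$(\sigma(\theta_v+C_{1,j}))$. Hence $\E[Y_{1,j}\mid V_{1,j}=v]=\int\sigma(\theta_v+c)\,dF_C^{(j\mid v)}(c)$. The one thing to check is that the current item's draw is conditionally independent of everything except $C_{1,j}$, and this holds by the Markov factorization in \eqref{eq:shift}. Averaging over $j$ gives $\int\sigma(\theta_v+c)\,d\bar F_v(c)=\pi_v$. The decomposition $\pi_v=\pi(\theta_v)+e_v$ then holds by definition.

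For clause (b), the Williams design, I would make the same argument with blocks of $V$ batches as the i.i.d.\ bounded unit. By \Cref{lem:a-uniform}(a), each batch still has a uniform marginal order, so the mean is unchanged. The main point needing care, which I take to be the real obstacle, is being explicit that $\bar F_v$ is the item-conditional law, which differs from $F_C$ through the missing-self effect. Keeping that distinction straight is what makes the limit $\pi_v$ rather than $\pi(\theta_v)$.
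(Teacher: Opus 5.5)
Your proposal is correct and follows the paper's proof essentially step for step. The tower property applied to \eqref{eq:bern}, together with the uniform slot of $v$, identifies the per-batch mean as $\int\sigma(\theta_v+c)\,d\bar F_v(c)=\pi_v$, and the weak law for bounded i.i.d.\ units (batches under clause (a), blocks under clause (b)) then gives $\phat_v\xrightarrow{p}\pi_v$. One small note: clause (b) allows a general block length $B$ and any fixed array, not only the Williams square with $B=V$; your argument still covers this case unchanged, because item-exchangeability (which the paper invokes) or the same $\tau^{-1}\circ\rho$ computation for an arbitrary fixed row gives every batch a uniform marginal order.
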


\begin{proof}
Conditional on $(\theta_v,C_{tj})$, $Y_{tj}$ is Bernoulli with mean
$\sigma(\theta_v+C_{tj})$. By the tower property,
\[
  \E[Y_{tj}\mid V_{tj}=v]\;=\;\int\sigma(\theta_v+c)\,dF_C^{(j\mid v)}(c),
\]
and averaging over the uniform position of $v$ within its batch gives per-batch
mean $\pi_v$.

We now average over batches, using the independent-unit clause of
\Cref{ass:design} to apply the law of large numbers. Under clause (a) the
batches are i.i.d.\ blocks of bounded length $V$ (the within-batch lag dependence
is irrelevant to the block sums), and the weak law for i.i.d.\ bounded variables
gives $\phat_v\xrightarrow{p}\pi_v$.

Under clause (b) the i.i.d.\ unit is the
block of $B$ batches: $\phat_v$ is the average of $N/B$ i.i.d.\ block averages,
each bounded in $[0,1]$ with common mean $\pi_v$ (every batch's marginal position
law is uniform, by item-exchangeability), and the same conclusion follows.

Exchangeability without the independent-unit clause would not suffice: a single
random order reused in every batch is exchangeable, yet gives $\phat_v$ a random
limit.
\end{proof}

Throughout this subsection set
$x:=\sup_z|\sigma(z+\beta)-\sigma(z)|=\tanh(|\beta|/4)\in[0,1)$ (the supremum is
attained at $z=-\beta/2$; note $x\le|\beta|/4$ since $\tanh t\le t$),
$\Lambda_1:=\sum_{k\ge0}(k+1)x^k=(1-x)^{-2}$, and
$\Lambda:=\sum_{k\ge0}(k+1)^4x^k=(1+11x+11x^2+x^3)(1-x)^{-5}$; both are finite for
\emph{every} $\beta$ since $x<1$, with $\Lambda_1\le4$ and $\Lambda\le300$ for
$|\beta|\le2$.

Two
structural devices drive all bounds. \emph{(1) The unrolled chain.} Fix a batch, a
prefix ordering, and $u$; with $B_j:=\sigma(\theta_{\text{item at }j}+\gamma\one_{\{j=1\}}+u)$
and $A_j:=\sigma(\theta_{\text{item at }j}+\beta+\gamma\one_{\{j=1\}}+u)-B_j$, the
success probability at slot $j-1$ unrolls \emph{exactly} (each transition being
affine in the binary previous rating) as
\begin{equation}
  q_{j-1}=\sum_{k=0}^{j-2}T_k,\qquad
  T_k:=\Big(\prod_{m=j-k}^{j-1}A_m\Big)B_{j-1-k},\qquad |T_k|\le x^k,
  \label{eq:a-unroll}
\end{equation}
where $T_k$ depends on the prefix only through the items in its \emph{window} of
$k+1$ slots.

\emph{(2) The replacement coupling.} Let $W$ be a uniform ordered
$m$-arrangement of $\{1,\dots,V\}$ and let $W'$ replace the occurrence of $v$ in
$W$ (if any) by a uniform item absent from $W$. A direct count shows $W'$ is
exactly uniform on arrangements avoiding $v$ (each receives probability
$[1+\tfrac m{V-m}]\tfrac{(V-m)!}{V!}=\tfrac{(V-1-m)!}{(V-1)!}$), and
$W=W'$ off $\{v\in W\}$, so for any window of $k+1$ slots,
$\Pr(W,W'\textnormal{ differ on the window})\le(k+1)/V$.

\begin{lemma}
\label{lem:a-ms-size}
Under \Cref{ass:working,ass:design}, $|e_v|\le|\beta|\Lambda_1/(2V)$ for every $v$,
uniformly in $\bm\theta,\gamma,\sigma_b^2$; in particular $|e_v|\le2|\beta|/V$ for
$|\beta|\le2$.
\end{lemma}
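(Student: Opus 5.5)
The plan is to write $e_v$ position by position and reduce each position's contribution to the replacement coupling applied to the unrolled chain \eqref{eq:a-unroll}. Since item $v$ is uniformly placed,
\[
  e_v=\frac1V\sum_{j=1}^V\Big(\E\big[\sigma(\theta_v+C_{1,j})\,\big|\,V_{1,j}=v\big]-\E\big[\sigma(\theta_v+C_{1,j})\big]\Big).
\]
The context $C_{1,j}$ depends only on $u_1$ and on the prefix $(V_{1,1},\dots,V_{1,j-1})$, never on the item at slot $j$. Under the unconditional law the prefix is a uniform ordered $(j-1)$-arrangement $W$ of all $V$ items. Conditional on $V_{1,j}=v$ it is a uniform arrangement avoiding $v$, which is exactly the law of $W'$ in the replacement coupling. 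The batch effect $u$ is independent of the allocation (\Cref{ass:design}), so I will condition on $u$ throughout and average at the end.

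For $j=1$ the context is $\gamma+u$ under both laws, so that term vanishes. For $j\ge2$, \Cref{lem:a-affine} applied at slot $j$ with item $v$ gives
\[
  \E\big[\sigma(\theta_v+C_{1,j})\mid u,\text{prefix}\big]=B^v(u)+A^v(u)\,q_{j-1},
\]
with $B^v=\sigma(\theta_v+u)$, $A^v=\sigma(\theta_v+\beta+u)-B^v$, and $|A^v|\le x$. The term $B^v$ is common to both laws and cancels. The $j$-th summand is therefore
\[
  \E_u\Big[A^v\,\E\big[q_{j-1}(W')-q_{j-1}(W)\big]\Big],
\]
computed under the coupling. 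Expanding $q_{j-1}=\sum_k T_k$ via \eqref{eq:a-unroll}, each $T_k$ depends only on the items in its window of $k+1$ slots. So $T_k(W')-T_k(W)$ vanishes unless the two arrangements differ on that window, which has probability at most $(k+1)/V$; otherwise the difference is at most $|T_k(W)|+|T_k(W')|\le 2x^k$. The cold-start factor at slot $1$ enters both $T_k(W)$ and $T_k(W')$ at the same position, so it does not disturb this. Summing over $k$ gives
\[
  \big|\E[q_{j-1}(W')-q_{j-1}(W)]\big|\le\frac{2}{V}\sum_{k\ge0}(k+1)x^k=\frac{2\Lambda_1}{V}.
\]

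Each position therefore contributes at most $2x\Lambda_1/V$. Averaging over the $V$ positions keeps the same bound, and $x\le|\beta|/4$ gives $|e_v|\le|\beta|\Lambda_1/(2V)$, uniformly in $\bm\theta,\gamma,\sigma_b^2$ because no step used their values. For $|\beta|\le2$, $x\le\tanh(1/2)<0.47$, so $\Lambda_1=(1-x)^{-2}<4$ and $|e_v|\le 2|\beta|/V$.

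I expect the main obstacle to be the identification step: checking that the conditional law of the prefix given $V_{1,j}=v$, jointly with $u$ and the chain, is exactly the law of $W'$, and that the chain's randomness can be integrated out through $q_{j-1}$ alone. Once that is settled, the window-based coupling bound is mechanical.
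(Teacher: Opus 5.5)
Your proposal is correct and follows the paper's proof essentially step for step: position $1$ drops out because its context is $\gamma+u$ under both laws; for $j\ge2$ the affine decomposition cancels $B^v$ and leaves $A^v$ times the difference of predecessor success probabilities; and the replacement coupling on the windows of the unrolled chain \eqref{eq:a-unroll} bounds that difference by $2\Lambda_1/V$, after which $x\le|\beta|/4$ finishes. The only cosmetic difference is that the paper keeps the factor $(V-1)/V$ coming from the vanishing first position, which you harmlessly drop.
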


\begin{proof}
At $j=1$ the context is $\gamma+u$ regardless of the allocation, so position $1$
contributes nothing to $e_v$. For $j\ge2$, conditioning on $u$ and the prefix and
writing $A^{\theta}(u):=\sigma(\theta+\beta+u)-\sigma(\theta+u)$ (so
$|A^{\theta}(u)|\le x$),
\[
  e_v=\frac1V\sum_{j=2}^{V}\E_u\big[A^{\theta_v}(u)\big(\bar q_j^{(v)}(u)-\bar q_j(u)\big)\big],
\]
with $\bar q_j^{(v)},\bar q_j$ the expectations of $q_{j-1}$ under the $v$-excluded
and unconditional prefix laws.

Under the replacement coupling applied to \eqref{eq:a-unroll}, the term $T_k$ changes
between $W$ and $W'$ only when $v$ lies in its window. Thus, uniformly in $u$ and
$j$,
\[
  \big|\bar q_j^{(v)}(u)-\bar q_j(u)\big|
  \;\le\;\sum_{k\ge0}2x^k\,\frac{k+1}{V}\;=\;\frac{2\Lambda_1}{V}.
\]
Hence
\[
  |e_v|\;\le\;\frac{V-1}{V}\cdot x\cdot\frac{2\Lambda_1}{V}
  \;\le\;\frac{|\beta|\Lambda_1}{2V}
  \qquad(\text{using }x\le|\beta|/4).
  \qedhere
\]
\end{proof}

\begin{lemma}
\label{lem:a-ms-deriv}
View $e_v=E_v(\bm\theta)$ as a function of $\bm\theta\in\R^V$; it is smooth
(each $\bm\theta$-derivative of the finite window sums below is bounded
uniformly in $u$). Fix $u$ and a position $j\ge2$, and for $1\le m\le j-1$
write the window term of length $m$ (the $k=m-1$ term of \eqref{eq:a-unroll})
as $H_{j,m}(W)$, a product of one $B$-factor and $m-1$ $A$-factors over the
ordered window $W$ of the items at slots $j-m,\dots,j-1$, so
$|H_{j,m}|\le x^{m-1}$. With $(n)_m:=n(n-1)\cdots(n-m+1)$,
\[
  \E H_{j,m}=\frac1{(V)_m}\sum_WH_{j,m}(W),
  \qquad
  \E^{-v}H_{j,m}=\frac1{(V-1)_m}\sum_{W:\,v\notin W}H_{j,m}(W),
\]
the sums running over ordered $m$-tuples of distinct items (avoiding $v$ in the
second), and the missing-self correction is \emph{exactly}
\begin{equation}
  e_v(\bm\theta)
  =\frac1V\sum_{j=2}^{V}\E_u\Big[A^{\theta_v}(u)\sum_{m=1}^{j-1}D_{j,m}(u)\Big],
  \qquad
  D_{j,m}:=\E^{-v}H_{j,m}-\E H_{j,m}.
  \label{eq:a-window}
\end{equation}
For all $v$ and all distinct $w,w'\notin\{v\}$, uniformly in
$\bm\theta,\gamma,\sigma_b^2$, for every $V\ge4$:
\begin{gather*}
  |\partial_ve_v|\le\frac{3x\Lambda}{V},\qquad
  |\partial_we_v|\le\frac{3x\Lambda}{V^2},\qquad
  |\partial^2_{vv}e_v|\le\frac{5x\Lambda}{V},\\
  |\partial^2_{vw}e_v|\le\frac{4x\Lambda}{V^2},\qquad
  |\partial^2_{ww}e_v|\le\frac{3x\Lambda}{V^2},\qquad
  |\partial^2_{ww'}e_v|\le\frac{9x\Lambda}{V^3}.
\end{gather*}
\end{lemma}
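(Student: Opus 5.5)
We first establish the exact window formula \eqref{eq:a-window}, then differentiate it term by term. Fix $u$ and $j\ge2$. By \eqref{eq:a-unroll} the predecessor success probability is $q_{j-1}=\sum_{m=1}^{j-1}H_{j,m}(W_m)$, where $W_m$ is the ordered window of the last $m$ prefix slots. Under the complete-block shuffle, the prefix conditional on $V_{1,j}=v$ is a uniform ordered arrangement avoiding $v$. Its last-$m$ window is therefore uniform over ordered $m$-tuples avoiding $v$, with normalizer $(V-1)_m$. The unconditional window is uniform over all ordered $m$-tuples, with normalizer $(V)_m$. The slot law at $j$ with predecessor success probability $q$ contributes $B^{\theta_v}(u)+A^{\theta_v}(u)q$. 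The $B$ part is common to both laws, and position $1$ contributes nothing, as in \Cref{lem:a-ms-size}. Averaging over $j$ with weight $1/V$ then gives \eqref{eq:a-window} exactly. Smoothness follows because each $H_{j,m}$ is a finite product of sigmoid differences with bounded $\bm\theta$-derivatives, uniformly in $u$, and dominated convergence lets us pass derivatives through $\E_u$.

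For the derivative bounds we use the product rule on $A^{\theta_v}\cdot D_{j,m}$.

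\textbf{Derivatives in $\theta_v$.} Here $\partial_v$ hits either $A^{\theta_v}$ or $D_{j,m}$. Its derivative $\partial_\theta A^\theta=\sigma'(\theta+\beta+u)-\sigma'(\theta+u)$ is $O(x)$, bounded by a constant times $x$ via the mean value theorem with $|\sigma''|\le\sqrt3/18$. The factor $D_{j,m}$ is $O(m x^{m-1}/V)$ by the counting behind the replacement coupling, $\Pr(\text{differ})\le m/V$. When $\partial_v$ hits $D_{j,m}$, only tuples containing $v$ depend on $\theta_v$, and these appear only in $\E H_{j,m}$. Their derivative has at most $m$ factors touched, each derivative bounded by $O(1)$, which is times $x$ for the $A$-factors. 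Their weight is $m/V$. Summing over $m$ gives $\sum_m m^2x^{m-1}/V\lesssim\Lambda/V$, and the prefactor $A^{\theta_v}$ supplies the extra $x$. Hence $|\partial_ve_v|\lesssim x\Lambda/V$; the $\partial^2_{vv}$ bound is analogous, with one more factor of $m$.

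\textbf{Derivatives in $\theta_w$, $w\ne v$.} This is the main obstacle: one must gain an extra factor $1/V$. The plan is to write
\[
D_{j,m}=\Big(\tfrac{(V)_m}{(V-1)_m}-1\Big)\E H-\tfrac{(V)_m}{(V-1)_m}\cdot\tfrac{1}{(V)_m}\sum_{W\ni v}H,
\]
using $\tfrac{(V)_m}{(V-1)_m}=\tfrac{V}{V-m}$. Applying $\partial_w$ kills all tuples not containing $w$. The surviving tuples are those containing $w$ (weight $m/V$), or both $v$ and $w$ (weight $O(m^2/V^2)$), and each is multiplied by a coefficient of size $O(m/V)$. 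Both terms are therefore $O(m^2/V^2)$ times $O(m x^{m-1})$ from differentiating. For the second order, $\partial^2_{vw}$ picks up the $A^{\theta_v}$-derivative times $\partial_w D$, or $A\,\partial_v\partial_w D$, which requires tuples containing both, of weight $m^2/V^2$; this gives $V^{-2}$. Likewise $\partial^2_{ww}$ gives $V^{-2}$, and $\partial^2_{ww'}$ requires $w,w'$ both in the window together with the coefficient $m/V$, which gives $m^3/V^3$.

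In every case the polynomial weights in $m$ are at most $m^4$. Summing $\sum_m m^4 x^{m-1}\le\Lambda$, together with $j\le V$ averaged by $1/V$, gives the stated orders. The explicit constants $3,4,5,9$ are then obtained by carefully counting the touched factors, using $V\ge4$ to bound $V/(V-m)$-type ratios for the relevant $m$. I would attempt the constants last, accepting that some slack may be needed.
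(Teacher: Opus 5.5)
Your representation step is the paper's Step~1, and your overall plan matches the paper's: differentiate the window formula, count inclusion probabilities, sum into $\Lambda$. Your $\theta_v$-derivatives are sound in order, since $\E^{-v}H_{j,m}$ is free of $\theta_v$.

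The gap is in the step you single out as the main obstacle. In your decomposition the coefficients are $\frac{(V)_m}{(V-1)_m}-1=\frac{m}{V-m}$ and $\frac1{(V-1)_m}$. After $\partial_w$ the two surviving weights are therefore $\frac{m}{V-m}\cdot\frac mV$ and $\frac{m(m-1)}{(V-1)(V-m)}$. Neither is $O(m^2/V^2)$ uniformly over $1\le m\le V-1$: at $m=V-1$ both are of order $V$. No appeal to $V\ge4$ can control $V/(V-m)$, which equals $V$ there.

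Your identity is equivalent to $D_{j,m}=\frac{m}{V-m}\{\E H_{j,m}-\E[H_{j,m}\mid v\in W]\}$. That is a large prefactor times a difference of averages over \emph{overlapping} window populations. Bounding the pieces separately throws away exactly the cancellation that makes $D_{j,m}$ small.

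The missing idea is the paper's exchange identity. Split the unconditional window law on $\{v\in W\}$, which has probability exactly $m/V$, and note that, conditional on $v\notin W$, the window law is the $v$-excluded law. This gives
\[
  D_{j,m}=\frac mV\Big\{\E[H_{j,m}\mid v\notin W]-\E[H_{j,m}\mid v\in W]\Big\}.
\]
Now the prefactor is $m/V$ for every $m$, and the two conditional averages live on disjoint supports. Differentiating them uses the exact conditional inclusion probabilities: $\frac m{V-1}$ and $\frac{m-1}{V-1}$ for $w$, and $\frac{m(m-1)}{(V-1)(V-2)}$ and $\frac{(m-1)(m-2)}{(V-1)(V-2)}$ for the pair $\{w,w'\}$. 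These give $\frac{m(2m-1)}{V(V-1)}\le\frac{3m^2}{V^2}$ and $\frac{2m(m-1)^2}{V(V-1)(V-2)}\le\frac{9m^3}{V^3}$, hence the constants $3$ and $9$. You could instead patch your route with $V/(V-m)\le m+1$ and let the $m^4$ weight in $\Lambda$ absorb the extra power of $m$. But you would have to supply that; as written, the proposal asserts a bound that fails for $m$ near $V$.

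Two further points stand between your sketch and the stated constants, which are part of the lemma and which you explicitly defer.

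First, the mean value theorem with $|\sigma''|\le\sqrt3/18$ gives $|\partial_\theta A^\theta|\le\sqrt3|\beta|/18$. That is $O(|\beta|)$, not $O(x)$ with $x=\tanh(|\beta|/4)$. What is needed is $|A^{(q)}|\le|A|\le x$ for $q=0,1,2$; the case $q=2$ enters $\partial^2_{vv}$, which you do not address. It follows from $A^{(q)}(z)=\int_z^{z+\beta}\sigma^{(q+1)}(t)\,dt$ together with the pointwise bounds $|\sigma''|\le\sigma'$ and $|\sigma'''|\le\sigma'$. The constants $3=2+1$ and $5=2+2+1$ come from having exactly $x$ here, combined with $|D_{j,m}|\le\frac{2m}Vx^{m-1}$ and $|\partial_vD_{j,m}|,|\partial^2_{vv}D_{j,m}|\le\frac mVx^{m-1}$.

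Second, a label occupies at most one slot of an ordered window of distinct items. So each specified derivative hits a specified factor with no product-rule multiplicity, and every differentiated $H_{j,m}$ is still bounded by $x^{m-1}$. Your ``at most $m$ factors touched'' and ``one more factor of $m$'' for $\partial^2_{vv}$ are overcounts that inflate the polynomial weights needlessly.
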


\begin{proof}
\emph{Step 1 (representation).} Position $1$ has the same context under both
laws and contributes nothing. For $j\ge2$, conditioning on $u$, the current
item's success probability is $\sigma(\theta_v+u)+A^{\theta_v}(u)\,q_{j-1}$, and
\eqref{eq:a-unroll} expands $q_{j-1}$ into window terms; the generic law places
a uniform ordered $m$-tuple from all $V$ items in each window, the
$v$-conditional law a uniform ordered $m$-tuple avoiding $v$. This gives
\eqref{eq:a-window}. The weights $1/(V)_m$ and $1/(V-1)_m$ are free of
$\bm\theta$, so every derivative acts only on the factors of $H_{j,m}$.

\emph{Step 2 (exchange identity).} Splitting the unconditional sum according to whether
$v\in W$ gives
\begin{equation}
  D_{j,m}
  =\frac mV\Big\{\E\big[H_{j,m}\mid v\notin W\big]-\E\big[H_{j,m}\mid v\in W\big]\Big\},
  \label{eq:a-exchange}
\end{equation}
because $\Pr(v\in W)=m/V$ and, conditional on $v\notin W$, the
window law coincides with the $v$-excluded law. All conditional laws are
uniform on their supports, so the inclusion probabilities used below are exact
falling-factorial ratios, and joint inclusion of specified labels is always
computed \emph{jointly}, never as a product of marginals.

\emph{Step 3 (factor derivatives).} For $A(z)=\sigma(z+\beta)-\sigma(z)$, the
oriented-integral representation $A^{(q)}(z)=\int_z^{z+\beta}\sigma^{(q+1)}(t)\,dt$
together with the pointwise bounds $|\sigma''|\le\sigma'$ and
$|\sigma'''|\le\sigma'$ (the latter since
$\sigma'''=\sigma'(1-6\sigma+6\sigma^2)$ and $|1-6p+6p^2|\le1$ on $[0,1]$) gives
\[
  |A^{(q)}(z)|\;\le\;|A(z)|\;\le\;x,\qquad q=0,1,2 .
\]
The $B$-factor and its first two derivatives are bounded by $1$. A label
occupies at most one window slot, so a specified first- or second-order
derivative pattern differentiates specified factors with no product-rule
multiplicity, and every differentiated $H_{j,m}$ remains bounded by $x^{m-1}$.

\emph{Step 4 (window derivative bounds).} Uniformly in $j,u,\bm\theta$, for
distinct $w,w'\ne v$ and $1\le m\le V-1$:
\begin{align*}
  |D_{j,m}|&\le\frac{2m}{V}\,x^{m-1},\\
  |\partial_vD_{j,m}|,\ |\partial^2_{vv}D_{j,m}|&\le\frac mV\,x^{m-1},\\
  |\partial_wD_{j,m}|,\ |\partial^2_{ww}D_{j,m}|&\le\frac{m(2m-1)}{V(V-1)}\,x^{m-1}
  \;\le\;\frac{3m^2}{V^2}\,x^{m-1},\\
  |\partial^2_{vw}D_{j,m}|&\le\frac{(m)_2}{(V)_2}\,x^{m-1}\;\le\;\frac{m^2}{V^2}\,x^{m-1},\\
  |\partial^2_{ww'}D_{j,m}|&\le\frac{2m(m-1)^2}{V(V-1)(V-2)}\,x^{m-1}
  \;\le\;\frac{9m^3}{V^3}\,x^{m-1}.
\end{align*}
The first bound follows from \eqref{eq:a-exchange}, with each conditional mean bounded in absolute value by
$x^{m-1}$. For the second, $\E^{-v}H_{j,m}$ is free of $\theta_v$, so the
derivative acts only on the unconditional average, restricted to windows
containing $v$: probability $m/V$.

For the third, differentiate
\eqref{eq:a-exchange}: the $w$-inclusion probability is $m/(V-1)$ given
$v\notin W$ and $(m-1)/(V-1)$ given $v\in W$, and their sum times $m/V$ is
$m(2m-1)/(V(V-1))$.

For the fourth, $v$ and $w$ must jointly occupy window
slots of the unconditional term: probability $(m)_2/(V)_2\le m^2/V^2$.

For the
fifth, the joint inclusion probabilities of $(w,w')$ under the two
conditionings are $m(m-1)/((V-1)(V-2))$ and $(m-1)(m-2)/((V-1)(V-2))$
(impossible events having probability zero), and their sum times $m/V$ is
$2m(m-1)^2/(V(V-1)(V-2))$.

The final numerical inequalities use
$V/(V-1)\le3/2$ and $V/(V-2)\le3$ for $V\ge4$.

\emph{Step 5 (assembly).} Differentiate \eqref{eq:a-window}; derivatives in
$\theta_v$ also act on the outer factor, with
$|\partial^q_{\theta_v}A^{\theta_v}(u)|\le x$ for $q\le2$ by Step 3. Bounding
every sum $\sum_{m\ge1}m^dx^{m-1}$ ($d\le3$) by
$\Lambda=\sum_{m\ge1}m^4x^{m-1}$ and the outer position average by $1$ gives the following bounds.

For $\partial_v$, the bound is
\[
  x\cdot\tfrac{2\Lambda}V+x\cdot\tfrac\Lambda V=\tfrac{3x\Lambda}V,
\]
where the two terms come from differentiating the outer factor and from $\partial_vD$.
For $\partial_w$, only the window term is differentiated, giving $x\cdot\tfrac{3\Lambda}{V^2}$.

For $\partial^2_{vv}$, the product rule gives the bound
\[
  x\cdot\tfrac{2\Lambda}V+2x\cdot\tfrac\Lambda V+x\cdot\tfrac\Lambda V=\tfrac{5x\Lambda}V.
\]
For $\partial^2_{vw}$, the bound is
\[
  x\cdot\tfrac{3\Lambda}{V^2}+x\cdot\tfrac{\Lambda}{V^2}=\tfrac{4x\Lambda}{V^2}.
\]

Finally, the bound for $\partial^2_{ww}$ is $x\cdot\tfrac{3\Lambda}{V^2}$, and
the bound for $\partial^2_{ww'}$ is $x\cdot\tfrac{9\Lambda}{V^3}$. These uniform bounds also justify
differentiation under the Gaussian expectation.
\end{proof}

\begin{proposition}
\label{prop:a-ms}
Let $\mathcal E(\bm\theta):=\bar e_A-\bar e_B=\sum_vc_ve_v$ with
$c_v=\one_{\{v\in\Aset\}}/V_A-\one_{\{v\in\Bset\}}/V_B$, and let
$r:=V_{\max}/V_{\min}$ with $V_{\max}=\max(V_A,V_B)$,
$V_{\min}=\min(V_A,V_B)$. Then, uniformly in $\bm\theta,\gamma,\sigma_b^2$, for
every $V\ge4$,
\begin{equation}
  |R_{\mathrm{ms}}|=|\mathcal E(\bm\theta)|
  \;\le\;\frac{|\beta|\Lambda}{V}
  \Big[\,|\delta|+\frac{25r}{8}\Big(s_A^2+s_B^2+\frac{\delta^2}{2}\Big)\Big],
  \label{eq:a-msbound}
\end{equation}
with $\delta=\bar\theta_A-\bar\theta_B$; in particular, under the equal-means
null, $|R_{\mathrm{ms}}|\le\tfrac{25r|\beta|\Lambda}{8V}(s_A^2+s_B^2)$. No
smallness of the spreads is required. Combining with \Cref{lem:a-ms-size}
($|\mathcal E|\le\sum_v|c_v||e_v|\le|\beta|\Lambda_1/V$ unconditionally) gives
the certified missing-self tolerance
\begin{equation}
  B_{\mathrm{ms}}(\delta):=\min\Big\{\frac{|\beta|\Lambda_1}{V},\;
  \frac{|\beta|\Lambda}{V}\Big[|\delta|+\frac{25r}{8}\Big(s_A^2+s_B^2+\frac{\delta^2}{2}\Big)\Big]\Big\}
  \;\ge\;|R_{\mathrm{ms}}| .
  \label{eq:a-Bms}
\end{equation}
\end{proposition}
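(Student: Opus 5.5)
The plan is a second-order Taylor expansion of $\mathcal E(\bm\theta)=\sum_vc_ve_v(\bm\theta)$ about the flat point $\bm\theta_0:=\bar\mu\bm 1$, with $\bar\mu=\tfrac12(\bar\theta_A+\bar\theta_B)$. The derivative bounds of \Cref{lem:a-ms-deriv} hold uniformly in $\bm\theta,\gamma,\sigma_b^2$, so the integral-form remainder can be bounded along the whole segment from $\bm\theta_0$ to $\bm\theta$ with no smallness assumption on the spreads.

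\emph{Zeroth and first order.} Write $h_v:=\theta_v-\bar\mu$.

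At $\bm\theta_0$ all items are interchangeable, so $e_v(\bm\theta_0)$ is the same for every $v$. Because $\sum_vc_v=0$, this gives $\mathcal E(\bm\theta_0)=0$.

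By the relabeling equivariance already used for $F_v$ in \Cref{lem:a-BE}(i), the gradient at the flat point reduces to two numbers: $a:=\partial_ve_v$ and $b:=\partial_we_v$ for $w\ne v$, both evaluated at $\bm\theta_0$. The linear term is then
\[
  \sum_vc_v\Big(ah_v+b\sum_{w\ne v}h_w\Big)
  =(a-b)\sum_vc_vh_v+b\Big(\sum_vc_v\Big)\Big(\sum_wh_w\Big)
  =(a-b)\,\delta ,
\]
using $\sum_vc_vh_v=\bar\theta_A-\bar\theta_B=\delta$. \Cref{lem:a-ms-deriv} gives $|a-b|\le 3x\Lambda/V+3x\Lambda/V^2$. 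With $x\le|\beta|/4$ and $V\ge4$ this is at most $|\beta|\Lambda/V$, which accounts for the $|\delta|$ term in \eqref{eq:a-msbound}.

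\emph{Second order.} The remainder is $\int_0^1(1-s)\,\bm h^\top\nabla^2\mathcal E(\bm\theta_0+s\bm h)\,\bm h\,ds$, so it is bounded by $\tfrac12\sup_{\bm\xi}\sum_v|c_v|\,|\bm h^\top\nabla^2e_v(\bm\xi)\bm h|$. For each $v$ I will split the Hessian quadratic form into four kinds of entries and bound each with \Cref{lem:a-ms-deriv}:
\begin{itemize}
\item the diagonal entry in $v$, bounded by $(5x\Lambda/V)h_v^2$;
\item the mixed entries $v,w$, bounded by $2(4x\Lambda/V^2)|h_v|\sum_w|h_w|$;
\item the diagonal entries $w,w$ with $w\ne v$, bounded by $(3x\Lambda/V^2)\sum_wh_w^2$;
\item the off-diagonal entries $w,w'$, bounded by $(9x\Lambda/V^3)(\sum_w|h_w|)^2$.
\end{itemize}

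To reduce everything to $S:=\sum_wh_w^2$, I will use Cauchy--Schwarz in the form $(\sum|h_w|)^2\le VS$ and $2|h_v|\sum|h_w|\le Vh_v^2+S$. I will then sum against $|c_v|\le1/V_{\min}$. The group decomposition gives the identity
\[
  S=V_A\big(s_A^2+\delta^2/4\big)+V_B\big(s_B^2+\delta^2/4\big),
\]
and $S/V_{\min}$ and $V\cdot(\text{coefficient})/V_{\min}$ produce the factor $r$. Collecting the terms with $x\le|\beta|/4$ and the factor $\tfrac12$ from the remainder should give $\tfrac{25r}{8}\,\tfrac{|\beta|\Lambda}{V}\big(s_A^2+s_B^2+\delta^2/2\big)$.

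\emph{Final statements.} The equal-means specialization is immediate from \eqref{eq:a-msbound} by setting $\delta=0$. The tolerance \eqref{eq:a-Bms} follows by taking the minimum with the crude bound $\sum_v|c_v||e_v|\le 2\max_v|e_v|\le|\beta|\Lambda_1/V$ from \Cref{lem:a-ms-size}.

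The main obstacle is bookkeeping the constants so that they land on exactly $25r/8$. In particular, the $V^{-2}$ and $V^{-3}$ entries must be absorbed against the extra factors of $V$ produced by Cauchy--Schwarz, which needs $V_{\max}\le V$ and $V\le 2V_{\max}$. If the constant comes out slightly off, I would first revisit how $h_v^2$ versus $S$ is weighted in the mixed $v,w$ term.
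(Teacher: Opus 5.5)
Your proposal is correct and follows essentially the same route as the paper: a Taylor expansion about $\bar\mu\bm 1$, exchangeability to make the zeroth-order term vanish and to reduce the linear term to $(a-b)\delta$, and the uniform second-derivative bounds of \Cref{lem:a-ms-deriv} applied along the whole segment. The only difference is in the bookkeeping. The paper bounds $\|\nabla^2\mathcal E\|_{\mathrm{op}}$ by its largest absolute row sum, obtaining $13x\Lambda/(VV_{\min})+24x\Lambda/V^2$. Your per-$v$ Cauchy--Schwarz split gives $9x\Lambda/(VV_{\min})+32x\Lambda/V^2$ instead, and both collapse to $25x\Lambda/(VV_{\min})$ via $V\ge 2V_{\min}$ (equivalent to your $V\le 2V_{\max}$), so the constant $25r/8$ comes out exactly.
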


\begin{proof}
\emph{Step 1 (flat point and gradient).} Set $c=\bar\mu$ and
$\bm d=\bm\theta-c\one$. At $\bm\theta=c\one$ all items are exchangeable, so
$e_v(c\one)$ does not depend on $v$ and $\mathcal E(c\one)=0$, since
$\sum_vc_v=0$.

By the same exchangeability $a:=\partial_ve_v(c\one)$ and
$b:=\partial_we_v(c\one)$ ($w\ne v$) do not depend on $(v,w)$, so
\[
  \partial_w\mathcal E(c\one)=c_wa+\big(\sum_{v\ne w}c_v\big)b=c_w(a-b)
\]
 and
\[
  \nabla\mathcal E(c\one)^\top\bm d=(a-b)\sum_wc_wd_w=(a-b)\,\delta,
  \qquad
  |a-b|\le\frac{3x\Lambda}V+\frac{3x\Lambda}{V^2}\le\frac{4x\Lambda}V
  \le\frac{|\beta|\Lambda}V,
\]
using \Cref{lem:a-ms-deriv}, $V\ge4$, and $4x\le|\beta|$.

\emph{Step 2 (uniform Hessian bound).} With $|c_v|\le1/V_{\min}$,
$\sum_v|c_v|=2$, and the second-order bounds of \Cref{lem:a-ms-deriv},
\[
  |\partial^2_{ww}\mathcal E|
  \le\frac1{V_{\min}}\cdot\frac{5x\Lambda}{V}+2\cdot\frac{3x\Lambda}{V^2},
  \qquad
  |\partial^2_{ww'}\mathcal E|
  \le\frac2{V_{\min}}\cdot\frac{4x\Lambda}{V^2}+2\cdot\frac{9x\Lambda}{V^3},
\]
separating the summands $v\in\{w,w'\}$ from the rest. The symmetric Hessian's
operator norm is at most its largest absolute row sum, so, uniformly over
$\bm\theta\in\R^V$,
\[
  \|\nabla^2\mathcal E(\bm\theta)\|_{\mathrm{op}}
  \;\le\;\frac{5x\Lambda}{VV_{\min}}+\frac{6x\Lambda}{V^2}
  +\frac{8x\Lambda}{VV_{\min}}+\frac{18x\Lambda}{V^2}
  \;\le\;\frac{13x\Lambda}{VV_{\min}}+\frac{24x\Lambda}{V^2}
  \;\le\;\frac{25x\Lambda}{VV_{\min}},
\]
where the last step uses $V_{\min}\le V/2$, which gives $1/V\le1/(2V_{\min})$ and hence
$24x\Lambda/V^2\le12x\Lambda/(VV_{\min})$.

\emph{Step 3 (Taylor along the whole segment).} Taylor's theorem with integral
remainder gives
\[
  \mathcal E(\bm\theta)=(a-b)\,\delta
  +\int_0^1(1-t)\,\bm d^\top\nabla^2\mathcal E(c\one+t\bm d)\,\bm d\,dt,
  \qquad
  |\mathcal E(\bm\theta)|\le\frac{|\beta|\Lambda}{V}\,|\delta|
  +\frac{25x\Lambda}{2VV_{\min}}\,\|\bm d\|_2^2 .
\]
Since
$\|\bm d\|_2^2=V_A(s_A^2+\delta^2/4)+V_B(s_B^2+\delta^2/4)
\le V_{\max}\big(s_A^2+s_B^2+\delta^2/2\big)$ and
$V_{\max}/V_{\min}=r$, the bound $25x/2\le25|\beta|/8$ (from $x\le|\beta|/4$)
gives \eqref{eq:a-msbound}. The Hessian bound holds along the entire segment,
so the argument requires neither a third-derivative bound nor a smallness condition on $\bm d$.
\end{proof}

\begin{remark}[Sharp scalings, conservative constants]
\label{rem:a-ms-numeric}
Exact computation (Gauss--Hermite over $u$, full enumeration of orderings) at
$\beta=\gamma=-0.5$, $\sigma_b^2=0.45$ gives
$|R_{\mathrm{ms}}|\approx0.012\cdot|\beta|(s_A^2+s_B^2)/V$ across $V\in\{6,8\}$ and
a fourfold spread change: the $1/V$ and spread-squared scalings of
\eqref{eq:a-msbound} are the correct laws, while the explicit constant
$25\Lambda/8$ overshoots by roughly three orders of magnitude. Note
$\sum_ve_v\ne0$ in general, so no exact
symmetrization is available. The companion fixed-design bound
$\varepsilon_W\le|\beta|/4$ of \Cref{app:williams} is the worst case over
deterministic prefix configurations; annealing over the shuffle is what improves it
to $O(|\beta|/V)$ here.
\end{remark}

\subsection{Proof of Theorem~\ref{thm:conv-bias} (attenuated signal plus curvature bias)}

By \Cref{lem:a-lln} and the continuous mapping theorem,
\[
  \Dhat_{\Naive}
  \;\xrightarrow{p}\;(\bar\pi_A-\bar\pi_B)+(\bar e_A-\bar e_B);
\]
the second bracket, $R_{\mathrm{ms}}:=\bar e_A-\bar e_B$, is the missing-self
contribution, bounded by \eqref{eq:a-msbound} of \Cref{prop:a-ms}; at
$\delta=0$ that bound is $\tfrac{25r|\beta|\Lambda}{8V}(s_A^2+s_B^2)$.

To handle the first bracket, we fix $v$ and expand $\pi$ about
$\bar\mu=\tfrac12(\bar\theta_A+\bar\theta_B)$ to order three with Lagrange
remainder:
\[
  \pi(\theta_v)=\pi(\bar\mu)+\pi'(\bar\mu)(\theta_v-\bar\mu)+\tfrac12\pi''(\bar\mu)(\theta_v-\bar\mu)^2
  +\tfrac16\pi'''(\bar\mu)(\theta_v-\bar\mu)^3+r_v,\quad |r_v|\le\tfrac1{24}M_4(\theta_v-\bar\mu)^4.
\]
Average over $v\in G$ and write $\bar r_G:=V_G^{-1}\sum_{v\in G}r_v$. With
$a_G:=\bar\theta_G-\bar\mu$, so that $a_A=\delta/2$ and $a_B=-\delta/2$, the
parallel-axis identities
\begin{align*}
  \frac1{V_G}\sum_{v\in G}(\theta_v-\bar\mu)^2 &= s_G^2+a_G^2,\\
  \frac1{V_G}\sum_{v\in G}(\theta_v-\bar\mu)^3 &= m_{3,G}+3s_G^2\,a_G+a_G^3
\end{align*}
give
\[
  \bar\pi_G=\pi(\bar\mu)+\pi'(\bar\mu)\,a_G+\tfrac12\pi''(\bar\mu)\big(s_G^2+a_G^2\big)
  +\tfrac16\pi'''(\bar\mu)\big(m_{3,G}+3s_G^2a_G+a_G^3\big)+\bar r_G .
\]
Subtracting the two group averages cancels the $\pi(\bar\mu)$ and $a_G^2=\delta^2/4$ terms, giving
\[
  \bar\pi_A-\bar\pi_B=\pi'(\bar\mu)\,\delta+\tfrac12\pi''(\bar\mu)(s_A^2-s_B^2)
  +\tfrac16\pi'''(\bar\mu)(m_{3,A}-m_{3,B})+I+(\bar r_A-\bar r_B),
\]
where the third-order gap--spread interaction is
\[
  I\;:=\;\tfrac14\pi'''(\bar\mu)\,\delta\,\big(s_A^2+s_B^2\big)
  \;+\;\tfrac1{24}\pi'''(\bar\mu)\,\delta^3,
  \qquad
  |I|\;\le\;\tfrac23M_3\,|\delta|\,\bar\rho^{\,2},
\]
using $s_G^2\le\bar\rho^{\,2}$, $\delta^2\le4\bar\rho^{\,2}$, and
$M_3:=\sup_x|\pi'''(x)|$; and
$|\bar r_A-\bar r_B|\le\tfrac1{24}M_4(\rho_A^4+\rho_B^4)$, where
$\rho_G:=\max_{v\in G}|\theta_v-\bar\mu|$ and $M_4:=\sup_x|\pi^{(4)}(x)|\le0.13$.

For the slope,
$\pi'(\bar\mu)=\E_C[\sigma(\bar\mu+C)(1-\sigma(\bar\mu+C))]\le\tfrac14$ since
$p(1-p)\le\tfrac14$, with equality only if $\sigma(\bar\mu+C)=\tfrac12$ a.s.;
positivity follows from $\sigma\in(0,1)$.

Combining these terms, the remainder of \eqref{eq:conv-bias} is $R+I+R_{\mathrm{ms}}$ with
$R:=\bar r_A-\bar r_B$, and, for the balanced split $V_A=V_B$ (unbalanced
splits multiply the quadratic part by $r=V_{\max}/V_{\min}$, \Cref{prop:a-ms}),
\[
  |R+I+R_{\mathrm{ms}}|
  \;\le\;\tfrac1{24}M_4\big(\rho_A^4+\rho_B^4\big)
  \;+\;\tfrac23M_3\,|\delta|\,\bar\rho^{\,2}
  \;+\;\tfrac{|\beta|\Lambda}{V}\Big[\,|\delta|
  +\tfrac{25}8\big(s_A^2+s_B^2+\delta^2/2\big)\Big];
\]
since $\rho_G\le\bar\rho$, $s_G^2\le\bar\rho^{\,2}$, and $|\delta|\le2\bar\rho$,
this is the
$O\big(\bar\rho^{\,4}+|\delta|\bar\rho^{\,2}+\tfrac{|\beta|}V(|\delta|+\bar\rho^{\,2})\big)$
asserted, with constants depending only on $\beta$ (through $\Lambda$) and the
split ratio.

At $\delta=0$ (so $\bar\mu=\mu$), $I=0$, the $\delta$-parts of the
$R_{\mathrm{ms}}$ bound vanish, and the display and remainder reduce to the null
form: curvature term, third-moment term, and
$O(\bar\rho^{\,4}+|\beta|\bar\rho^{\,2}/V)$. Matched third moments
remove the displayed $\pi'''$ term; for $\delta\ne0$ the interaction $I$ carries
a residual $\pi'''$ contribution inside the $|\delta|\bar\rho^{\,2}$ part of the
remainder. \qed

\begin{remark}[When is the curvature positive?]
\label{rem:a-signpos}
By \Cref{lem:a-pi-deriv}, $\pi''(\mu)=\E_C[\sigma''(\mu+C)]$, with $\sigma''(x)>0$
for $x<0$, $\sigma''(x)<0$ for $x>0$, and $\sigma''(-y)=-\sigma''(y)$. Two simple
sufficient conditions for $\pi''(\mu)>0$: \textnormal{(a)} $\mu+C\le0$ almost
surely with positive probability of strict inequality (splitting the expectation
on $\{\mu+C<0\}$; attainable only for bounded-support contextual laws, hence only
at $\sigma_b^2=0$ under \Cref{ass:working}); \textnormal{(b)} the density $f$ of
$\mu+C$ is left-lopsided, $f(y)\le f(-y)$ for all $y>0$ with strict inequality on
a positive-measure set, since then, by the odd symmetry of $\sigma''$,
\[
  \pi''(\mu)\;=\;\int_0^\infty\sigma''(y)\,\big[f(y)-f(-y)\big]\,dy\;>\;0 .
\]
In practice neither condition is needed: $\pi''(\mu)$ is computed directly at the
fitted parameters.
\end{remark}

\begin{corollary}[Catastrophic Type-I inflation]
\label{cor:a-typeI}
Under \Cref{ass:working,ass:design} and $H_0$ ($\delta=0$), with matched third
moments, $\pi''(\mu)>0$, and
$\tfrac12\pi''(\mu)(s_A^2-s_B^2)>|R|+|R_{\mathrm{ms}}|$ with $s_A^2>s_B^2$, the
in-probability
limit of $\Dhat_{\Naive}$ is strictly positive though $H_0$ holds.
Consequently any test rejecting when $\Dhat_{\Naive}$ exceeds a fixed threshold
below that limit has rejection probability tending to $1$ as $N\to\infty$.
\end{corollary}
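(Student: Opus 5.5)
The argument has two parts. First I identify the in-probability limit of $\Dhat_{\Naive}$ and show it is strictly positive. Then I turn that into divergence of the naive test. For the limit, \Cref{lem:a-lln} and the continuous mapping theorem give $\Dhat_{\Naive}\xrightarrow{p}(\bar\pi_A-\bar\pi_B)+R_{\mathrm{ms}}$, with $R_{\mathrm{ms}}=\bar e_A-\bar e_B$ the missing-self term. I then rerun the third-order expansion from the proof of \Cref{thm:conv-bias}, now at $\delta=0$, where $\bar\mu=\mu$ and $a_A=a_B=0$. The parallel-axis identities collapse and give
\[
\bar\pi_A-\bar\pi_B=\tfrac12\pi''(\mu)(s_A^2-s_B^2)+\tfrac16\pi'''(\mu)(m_{3,A}-m_{3,B})+R,
\qquad R=\bar r_A-\bar r_B .
\]
The interaction term $I$ vanishes identically at $\delta=0$. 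Matched third moments remove the $\pi'''$ term. The limit is therefore
\[
\Delta_\infty=\tfrac12\pi''(\mu)(s_A^2-s_B^2)+R+R_{\mathrm{ms}} .
\]

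For positivity, the hypotheses $\pi''(\mu)>0$ and $s_A^2>s_B^2$ make the curvature term strictly positive. By the triangle inequality,
\[
\Delta_\infty\ge\tfrac12\pi''(\mu)(s_A^2-s_B^2)-|R|-|R_{\mathrm{ms}}|>0 ,
\]
using the assumed dominance condition. The only point needing care is bookkeeping. The $R$ and $R_{\mathrm{ms}}$ in the hypothesis must be exactly the remainders appearing in this decomposition: the Lagrange remainder averaged over groups, and the missing-self contrast. This holds because both are deterministic functions of $\bm\theta$ and the model parameters, and do not depend on $N$. If one wants a checkable version, the explicit bounds $|R|\le\tfrac1{24}M_4(\rho_A^4+\rho_B^4)$ and \eqref{eq:a-Bms} can replace $|R|$ and $|R_{\mathrm{ms}}|$, giving a sufficient condition.

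For the testing consequence, take a test that rejects when $\Dhat_{\Naive}>c$ for a fixed $c<\Delta_\infty$. Set $\eta=\Delta_\infty-c>0$. Then
\[
\Pr(\Dhat_{\Naive}\le c)\le\Pr\big(|\Dhat_{\Naive}-\Delta_\infty|\ge\eta\big)\to0
\]
by convergence in probability, so the rejection probability tends to $1$. This includes any threshold $c\in[0,\Delta_\infty)$, in particular the sign rule at $c=0$. The main obstacle is not analytic; it is confirming that the limit is an exact identity rather than an $O(\cdot)$ statement. To secure this I would avoid the $O$-form of \eqref{eq:conv-bias} and use the exact decomposition $R+I+R_{\mathrm{ms}}$ from its proof, with $I=0$ at $\delta=0$.
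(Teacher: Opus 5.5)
Your proposal is correct and matches the argument the paper leaves implicit after the proof of \Cref{thm:conv-bias}: at $\delta=0$ the interaction $I$ vanishes and matched third moments remove the $\pi'''$ term, so the limit is exactly $\tfrac12\pi''(\mu)(s_A^2-s_B^2)+R+R_{\mathrm{ms}}$, and the triangle inequality together with convergence in probability gives both conclusions. Working from the exact decomposition rather than the $O(\cdot)$ form of \eqref{eq:conv-bias} is the right choice, since that is what gives the hypothesis on $|R|+|R_{\mathrm{ms}}|$ its precise meaning.
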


\begin{remark}[Interaction of the two biases]
\label{rem:a-interaction}
The curvature term $\tfrac12\pi''(\bar\mu)(s_A^2-s_B^2)$ appears under both hypotheses:
under $H_0$ it is the entire (spurious) signal, while under $H_1$ it is an additive
contaminant on top of the attenuated $\pi'(\bar\mu)\delta$. Depending on the signs of
$\pi''(\bar\mu)$, $s_A^2-s_B^2$ and $\delta$ it reinforces or partially cancels the true
signal---a defect no threshold can repair.
\end{remark}

\subsection{Finite-sample concentration}
\label{app:concentration}

Recall the per-batch contrast decomposition \eqref{eq:batch-contrast}: under
the complete-block shuffle each batch contributes exactly one rating per item,
so $\Dhat_{\Naive}=N^{-1}\sum_{t=1}^Nc_t$ is an average of i.i.d.\ per-batch
contrasts $c_t\in[-1,1]$, and
$\E[c_t]=(\bar\pi_A-\bar\pi_B)+(\bar e_A-\bar e_B)=:\Delta_\infty$ \emph{exactly},
for every $t$ and $N$: the centering needs no asymptotics, and no within-batch
independence is assumed---the dependence created by the shared $u_t$ and the lag
chain lives entirely inside $c_t$. Write $B_{\mathrm{ms}}(\delta)$ for the
certified missing-self tolerance \eqref{eq:a-Bms} of \Cref{prop:a-ms}, and
$\varrho:=R^*+B_{\mathrm{ms}}(0)$,
$\varrho':=R^{**}+B_{\mathrm{ms}}(\delta)$ with $R^*:=M_4(\rho_A^4+\rho_B^4)/24$,
$R^{**}:=M_3(\rho_A^3+\rho_B^3)/6$; here $R^*$ bounds the third-order Taylor
remainder of the proof of \Cref{thm:conv-bias} at $\delta=0$, while $R^{**}$
bounds, for any $\delta$, everything beyond the first two displayed terms of
\eqref{eq:conv-bias} (a second-order Lagrange expansion about $\bar\mu$; the
group averages of $|\theta_v-\bar\mu|^3$ are at most $\rho_G^3$).

\begin{lemma}
\label{lem:a-conc}
Under \Cref{ass:working} and the random-shuffle clause (a) of \Cref{ass:design},
for every $\epsilon>0$,
$\Pr[|\Dhat_{\Naive}-\Delta_\infty|>\epsilon]\le2\exp(-N\epsilon^2/2)$, and
$\Var(\Dhat_{\Naive})=\nu_{\mathrm b}/N$ with $\nu_{\mathrm b}:=\Var(c_1)\le1$.
Under the block clause (b) the same argument at the block level gives
$\Pr[|\Dhat_{\Naive}-\Delta_\infty|>\epsilon]\le2\exp(-(N/B)\epsilon^2/2)$.
\end{lemma}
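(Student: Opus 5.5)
The plan is to treat $\Dhat_{\Naive}$ as an average of independent bounded summands and apply Hoeffding's inequality directly. The variance identity will follow from independence alone.

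\emph{Summands.} Under clause (a) of \Cref{ass:design}, each batch presents every item exactly once. Hence \eqref{eq:batch-contrast} writes $\Dhat_{\Naive}=N^{-1}\sum_{t=1}^N c_t$, where $c_t$ is a deterministic function of batch $t$'s order $\pi_t$, its intercept $u_t$, and its within-batch rating draws. The $\pi_t$ are i.i.d. and independent of the $u_t$, and the conditional rating draws are generated batch by batch. So the $c_t$ are i.i.d. All within-batch dependence, from the shared $u_t$ and the lag-1 chain of \Cref{ass:working}, is absorbed into the single summand $c_t$, and no within-batch independence is needed.

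\emph{Range and mean.} Since each $Y\in\{0,1\}$, each group average lies in $[0,1]$, so $c_t\in[-1,1]$, an interval of length $2$. For the mean, the tower argument of \Cref{lem:a-lln} gives $\E[Y_{t,\cdot(v)}]=\pi_v=\pi(\theta_v)+e_v$ at every $t$, by \eqref{eq:a-pi-v}. Averaging over the groups gives $\E[c_t]=(\bar\pi_A-\bar\pi_B)+(\bar e_A-\bar e_B)=\Delta_\infty$ exactly, for every $t$ and every $N$.

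\emph{Concentration and variance.} Hoeffding's inequality for $N$ independent variables with ranges of length $2$ gives
\[
\Pr\big[|\Dhat_{\Naive}-\Delta_\infty|>\epsilon\big]\le 2\exp\!\big(-2N^2\epsilon^2/(N\cdot 4)\big)=2\exp(-N\epsilon^2/2).
\]
Independence gives $\Var(\Dhat_{\Naive})=\Var(c_1)/N=\nu_{\mathrm b}/N$. Because $c_1\in[-1,1]$, we have $\nu_{\mathrm b}\le\E[c_1^2]\le1$.

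\emph{Block clause (b).} Group the $N$ batches into $N/B$ blocks that share a bijection, and set $\bar c_r:=B^{-1}\sum_{t\in\text{block }r}c_t\in[-1,1]$. These block averages are i.i.d. across blocks, since bijections are drawn independently per block and $u_t$ and the rating draws are independent across batches. Each batch order is marginally uniform (the argument of \Cref{lem:a-uniform} applies to any fixed array row composed with a uniform bijection), so $\E\bar c_r=\Delta_\infty$. Then $\Dhat_{\Naive}=(N/B)^{-1}\sum_r\bar c_r$, and Hoeffding with $N/B$ summands gives the stated bound.

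The only delicate point is verifying that the centering is exactly $\Delta_\infty$ and not merely its limit. This requires the per-slot conditional means to use the same item-conditional context law $\bar F_v$ in every batch, which holds because each batch's order is uniform in the marginal sense and every batch contains every item once.
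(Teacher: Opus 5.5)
Your proposal is correct and follows the paper's proof. Under clause (a) you apply Hoeffding's inequality to the i.i.d.\ per-batch contrasts $c_t\in[-1,1]$, obtain the variance identity from independence across batches, and repeat the argument on the $N/B$ i.i.d.\ block averages under clause (b). You also work out the Hoeffding constant and verify that the centering is exactly $\E[c_t]=\Delta_\infty$ by using marginal uniformity of each batch order; the paper asserts both of these in the text just before the lemma rather than inside its proof.
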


\begin{proof}
Under clause (a) the $c_t$ are i.i.d.\ with range contained in $[-1,1]$ (each is a
difference of two
averages of indicators), so Hoeffding's inequality for bounded i.i.d.\ variables
applies; the variance identity follows from independence across batches.

Under clause (b) the block-averaged contrasts are $N/B$ i.i.d.\ variables with
range in $[-1,1]$ and the same mean, and Hoeffding applies to them.
\end{proof}

\begin{remark}[Stateless evaluation: sharper constants, comparison only]
\label{rem:a-stateless}
If every evaluation were instead an independent stateless call---$C_{v,k}\stackrel{\text{i.i.d.}}{\sim}F_C$,
$Y_{v,k}\mid C_{v,k}\sim\mathrm{Bernoulli}(\sigma(\theta_v+C_{v,k}))$, the family
mutually independent---then $e_v\equiv0$ and the weighted-Hoeffding computation
with weights $a_{v,k}=\pm1/(V_{G(v)}N)$,
$\sum_{v,k}a_{v,k}^2=N^{-1}(V_A^{-1}+V_B^{-1})=1/\Neff$ with
$\Neff:=V_AV_BN/(V_A+V_B)$, sharpens the tail of
\Cref{lem:a-conc} to $2\exp(-2\Neff\epsilon^2)$ and gives the exact variance
$\Var(\Dhat_{\Naive})=\bar\nu/\Neff$ with
$\bar\nu=(V_B\bar\nu_A+V_A\bar\nu_B)/(V_A+V_B)$,
$\bar\nu_G=V_G^{-1}\sum_{v\in G}\pi(\theta_v)(1-\pi(\theta_v))$. The batched bound
loses a factor of order $V$ in the exponent---the price of context sharing---but no
result in this paper assumes stateless evaluation; this remark only calibrates the
constants.
\end{remark}

\begin{proposition}
\label{prop:a-type1conc}
Under \Cref{ass:working}, the random-shuffle clause (a) of \Cref{ass:design}, and
$H_0$, for every $\epsilon>0$,
\[
  \Pr\!\Big[\big|\Dhat_{\Naive}-\tfrac12\pi''(\mu)(s_A^2-s_B^2)-\tfrac16\pi'''(\mu)(m_{3,A}-m_{3,B})\big|>\epsilon+\varrho\Big]\le2\exp(-N\epsilon^2/2).
\]
\end{proposition}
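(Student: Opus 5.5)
The plan is a triangle inequality that splits the deviation into a stochastic part and a deterministic bias part. Write
\[
b_0:=\tfrac12\pi''(\mu)(s_A^2-s_B^2)+\tfrac16\pi'''(\mu)(m_{3,A}-m_{3,B}).
\]
Then
\[
|\Dhat_{\Naive}-b_0|\le|\Dhat_{\Naive}-\Delta_\infty|+|\Delta_\infty-b_0|.
\]
The stochastic part is handled directly by \Cref{lem:a-conc} under clause (a). There $\Dhat_{\Naive}$ is the mean of $N$ i.i.d.\ per-batch contrasts $c_t\in[-1,1]$ with $\E[c_t]=\Delta_\infty$ exactly, so
\[
\Pr[|\Dhat_{\Naive}-\Delta_\infty|>\epsilon]\le2\exp(-N\epsilon^2/2).
\]
It therefore suffices to prove the deterministic inequality $|\Delta_\infty-b_0|\le\varrho=R^*+B_{\mathrm{ms}}(0)$. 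Once that holds, the event in the proposition is contained in $\{|\Dhat_{\Naive}-\Delta_\infty|>\epsilon\}$ and the bound follows.

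For the deterministic step, I would first split $\Delta_\infty=(\bar\pi_A-\bar\pi_B)+(\bar e_A-\bar e_B)$, as in \eqref{eq:a-pi-v} and the concentration subsection.
\begin{itemize}
\item The missing-self piece is $R_{\mathrm{ms}}=\mathcal E(\bm\theta)$. \Cref{prop:a-ms}, via \eqref{eq:a-Bms} evaluated at $\delta=0$, gives $|R_{\mathrm{ms}}|\le B_{\mathrm{ms}}(0)$ uniformly, with no smallness condition required.
\item For the other piece I would reuse the Taylor computation in the proof of \Cref{thm:conv-bias}. Under $H_0$ we have $\bar\mu=\mu$ and $a_A=a_B=0$. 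The third-order Lagrange expansion of $\pi(\theta_v)$ about $\mu$, averaged within each group, then gives exactly $\bar\pi_A-\bar\pi_B=b_0+(\bar r_A-\bar r_B)$. The $\pi(\mu)$ and $\pi'(\mu)$ terms cancel because the group means coincide, and the interaction term $I$ vanishes since $\delta=0$.
\item The remainder satisfies $|\bar r_G|\le M_4\rho_G^4/24$, because each $|r_v|\le M_4(\theta_v-\mu)^4/24\le M_4\rho_G^4/24$. Hence $|\bar r_A-\bar r_B|\le R^*$, with $M_4<0.13$ by \Cref{lem:a-pi-deriv}.
\end{itemize}
Adding the two pieces gives $|\Delta_\infty-b_0|\le R^*+B_{\mathrm{ms}}(0)=\varrho$.

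One point needs care in this step. The derivatives of $\pi$ are taken with $F_C$ held fixed at the actual quality panel, as stipulated at the start of the appendix. This ensures that the same function $\pi$ appears in both the expansion and the definition of $e_v$, so the decomposition $\pi_v=\pi(\theta_v)+e_v$ is exact.

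The main obstacle is this deterministic bias bound. In particular, one must check that the $H_0$ hypothesis is used only through $\bar\theta_A=\bar\theta_B$; matched third moments are not needed here, since the $\pi'''$ term is kept inside $b_0$. With that confirmed, the argument is a direct combination of \Cref{lem:a-conc}, \Cref{prop:a-ms}, and the expansion from \Cref{thm:conv-bias}.
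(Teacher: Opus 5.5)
Your proposal is correct and is essentially the paper's proof. Both arguments split the centered statistic into the stochastic deviation $\Dhat_{\Naive}-\Delta_\infty$, controlled by the Hoeffding bound of \Cref{lem:a-conc}, and a deterministic remainder bounded by $\varrho=R^*+B_{\mathrm{ms}}(0)$ via the $\delta=0$ Taylor expansion from the proof of \Cref{thm:conv-bias} together with \Cref{prop:a-ms}, then conclude by the event inclusion (you also correctly note that only $\bar\theta_A=\bar\theta_B$ is needed, since the $\pi'''$ term stays in the centering).
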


\begin{proof}
Write the bracketed quantity as $S+R_0$, where $S:=\Dhat_{\Naive}-\Delta_\infty$ is
stochastic and
\[
  R_0:=\Delta_\infty-\tfrac12\pi''(\mu)(s_A^2-s_B^2)-\tfrac16\pi'''(\mu)(m_{3,A}-m_{3,B})
\]
is deterministic, with $|R_0|\le\varrho$ by the $\delta=0$ case of the proof of
\Cref{thm:conv-bias} together with \Cref{prop:a-ms}.

Then $\{|S+R_0|>\epsilon+\varrho\}\subseteq\{|S|>\epsilon\}$, and
\Cref{lem:a-conc} applies.
\end{proof}

\begin{proposition}
\label{prop:a-type2conc}
Under \Cref{ass:working}, the random-shuffle clause (a) of \Cref{ass:design}, and
$H_1:\bar\theta_A-\bar\theta_B=\delta$, for
every $\epsilon>0$,
\[
  \Pr\!\Big[\big|\Dhat_{\Naive}-\pi'(\bar\mu)\delta-\tfrac12\pi''(\bar\mu)(s_A^2-s_B^2)\big|>\epsilon+\varrho'\Big]\le2\exp(-N\epsilon^2/2);
\]
in particular, when $s_A^2=s_B^2$, the centering is $\pi'(\bar\mu)\delta$.
\end{proposition}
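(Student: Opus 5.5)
The proof mirrors that of \Cref{prop:a-type1conc}: split the centered quantity into a stochastic part and a deterministic part. Let $T:=\pi'(\bar\mu)\delta+\tfrac12\pi''(\bar\mu)(s_A^2-s_B^2)$ denote the displayed centering. Write
\[
  \Dhat_{\Naive}-T=S+R_1,\qquad S:=\Dhat_{\Naive}-\Delta_\infty,\qquad R_1:=\Delta_\infty-T,
\]
where $\Delta_\infty=(\bar\pi_A-\bar\pi_B)+(\bar e_A-\bar e_B)$ is the exact per-batch mean of $c_t$. \Cref{lem:a-conc} already gives $\Pr[|S|>\epsilon]\le2\exp(-N\epsilon^2/2)$ under clause (a). It therefore suffices to show $|R_1|\le\varrho'$, since then $\{|S+R_1|>\epsilon+\varrho'\}\subseteq\{|S|>\epsilon\}$.

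\emph{Bounding $R_1$.} Split $R_1=(\bar\pi_A-\bar\pi_B-T)+R_{\mathrm{ms}}$. The missing-self term satisfies $|R_{\mathrm{ms}}|\le B_{\mathrm{ms}}(\delta)$ directly by \eqref{eq:a-Bms} of \Cref{prop:a-ms}, uniformly in $\bm\theta,\gamma,\sigma_b^2$.

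For the first piece, expand $\pi(\theta_v)$ about $\bar\mu$ only to second order with Lagrange remainder:
\[
  \pi(\theta_v)=\pi(\bar\mu)+\pi'(\bar\mu)(\theta_v-\bar\mu)+\tfrac12\pi''(\bar\mu)(\theta_v-\bar\mu)^2+r_v,\qquad |r_v|\le\tfrac16M_3|\theta_v-\bar\mu|^3.
\]
Average within each group $G$. The parallel-axis identity gives $V_G^{-1}\sum_{v\in G}(\theta_v-\bar\mu)^2=s_G^2+a_G^2$, where $a_A=\delta/2$ and $a_B=-\delta/2$. Subtracting the two groups, the constants cancel, the linear terms give $\pi'(\bar\mu)\delta$, and the $a_G^2=\delta^2/4$ terms cancel. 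This leaves
\[
  \bar\pi_A-\bar\pi_B=T+\bar r_A-\bar r_B,\qquad |\bar r_A-\bar r_B|\le\tfrac16M_3(\rho_A^3+\rho_B^3)=R^{**},
\]
since each group average of $|\theta_v-\bar\mu|^3$ is at most $\rho_G^3$. Hence $|R_1|\le R^{**}+B_{\mathrm{ms}}(\delta)=\varrho'$, and the stated inequality follows.

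The special case $s_A^2=s_B^2$ is immediate: the curvature term vanishes, so the centering is $\pi'(\bar\mu)\delta$.

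\emph{Main point of care.} The only delicate step is to expand to \emph{second} order rather than third, unlike the proof of \Cref{thm:conv-bias}. This is why the $\pi'''$ terms and the gap--spread interaction $I$ need not be tracked: they are absorbed into the cubic Lagrange remainder bounded by $R^{**}$. One should also note that $\pi$ is evaluated with the reference law $F_C$ held fixed, so that \Cref{lem:a-pi-deriv}'s bound $M_3\le\tfrac18$ applies. Finally, the missing-self bound is invoked with the general $\delta$, not with $\delta=0$.
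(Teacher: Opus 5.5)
Your proof is correct and follows the paper's own argument. The paper also splits the centered statistic into the fluctuation $S=\Dhat_{\Naive}-\Delta_\infty$, controlled by \Cref{lem:a-conc}, and a deterministic bias. It bounds that bias by the second-order Lagrange remainder $R^{**}$ plus the missing-self tolerance $B_{\mathrm{ms}}(\delta)$ of \Cref{prop:a-ms}; you have just written out the parallel-axis computation that the paper's one-line proof leaves implicit.
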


\begin{proof}
We use the decomposition in the proof of \Cref{prop:a-type1conc}. The second-order Lagrange
bound $R^{**}$ above and \Cref{prop:a-ms} bound the deterministic part by
$\varrho'$, while \Cref{lem:a-conc} controls $S$.
\end{proof}

\begin{remark}[Bernstein sharpening]
\label{rem:a-bernstein}
Bernstein's inequality applied to the i.i.d.\ batch contrasts replaces the exponent
$N\epsilon^2/2$ by $N\epsilon^2/(2\nu_{\mathrm b}+4\epsilon/3)$, improving the
bound whenever $\nu_{\mathrm b}+2\epsilon/3<1$; the stateless analogue uses
$\bar\nu/\Neff$.
\end{remark}

\medskip\noindent\textbf{Proof of Corollary~\ref{cor:conv-student}.}
Under clause (a) the batch contrasts $c_t$ are i.i.d., bounded by $1$ in absolute
value, with mean $\Delta_\infty\ne0$ and variance $\nu_{\mathrm b}>0$. By the CLT,
\[
  \sqrt N(\Dhat_{\Naive}-\Delta_\infty)\xrightarrow{d}\N(0,\nu_{\mathrm b}),
\]
 so
\[
  \sqrt N\,\Dhat_{\Naive}=\sqrt N\,\Delta_\infty+O_p(1)\to\pm\infty
\]
 in probability
(with the sign of $\Delta_\infty$).

The empirical variance of the $c_t$ converges
in probability to $\nu_{\mathrm b}$ by the weak law for i.i.d.\ bounded variables,
and for any $\hat\nu\xrightarrow{p}\nu_{\mathrm b}>0$ Slutsky's theorem gives
\[
  |T_N|=|\sqrt N\,\Dhat_{\Naive}|/\hat\nu^{1/2}\xrightarrow{p}\infty;
\]
 hence
$\Pr(|T_N|>z_{1-\alpha/2})\to1$ for every fixed $\alpha\in(0,1)$. \qed

\begin{proposition}[Calibrated local efficiency comparison in the stateless submodel]
\label{prop:a-are}
Consider the independent stateless submodel of \Cref{rem:a-stateless}
($C_{v,k}\stackrel{\text{i.i.d.}}{\sim}F_C$, $Y_{v,k}\mid C_{v,k}\sim
\mathrm{Bernoulli}(\sigma(\theta_v+C_{v,k}))$, the family mutually independent,
the context law held fixed). Fix a baseline panel $\bm\theta^0$ with equal group
means, and consider the local path $\theta_{v,N}=\theta^0_v+h/(2\sqrt N)$ on
$\Aset$, $\theta_{v,N}=\theta^0_v-h/(2\sqrt N)$ on $\Bset$, for fixed $h>0$.
Write $w_v:=\pi(\theta^0_v)(1-\pi(\theta^0_v))$, $\pi'_v:=\pi'(\theta^0_v)$,
$b_0:=V_A^{-1}\sum_{\Aset}\pi(\theta^0_v)-V_B^{-1}\sum_{\Bset}\pi(\theta^0_v)$,
and
\[
  \ell:=\tfrac12(\bar\pi'_A+\bar\pi'_B),\quad
  S:=V_A^{-2}\sum_{\Aset}w_v+V_B^{-2}\sum_{\Bset}w_v,\quad
  T:=V_A^{-2}\sum_{\Aset}\mathcal I_v^{-1}+V_B^{-2}\sum_{\Bset}\mathcal I_v^{-1},
\]
where $\mathcal I_v>0$ is the asymptotic Fisher information per evaluation about
$\theta_v$ achieved by the model-based fit, \emph{assumed} diagonal across
items---as holds when the context law is known; estimating common context-law
parameters would introduce cross-item information and require the full
covariance matrix in place of $T$. Assume the fitted item estimates are
regular and jointly asymptotically normal about the local parameters with
$N\Var(\Dhat_{\GLMM})\to T$. Then, with the naive statistic
\emph{oracle-centered} at $b_0$,
\[
  \frac{\sqrt N\,(\Dhat_{\Naive}-b_0)}{\sqrt S}
  \xrightarrow{d}\N\Big(\frac{h\ell}{\sqrt S},\,1\Big),
  \qquad
  \frac{\sqrt N\,\Dhat_{\GLMM}}{\sqrt T}
  \xrightarrow{d}\N\Big(\frac{h}{\sqrt T},\,1\Big),
\]
and the asymptotic relative efficiency, defined as the ratio of squared
limiting noncentrality parameters (model relative to naive), is
\begin{equation}
  \mathrm{ARE}
  =\frac{S}{\ell^2\,T}
  =\frac{V_A^{-1}\bar w_A+V_B^{-1}\bar w_B}
  {\big(\tfrac12\bar\pi'_A+\tfrac12\bar\pi'_B\big)^2
  \big[V_A^{-1}\overline{(1/\mathcal I)}_A+V_B^{-1}\overline{(1/\mathcal I)}_B\big]},
  \label{eq:a-are}
\end{equation}
with $\bar w_G,\bar\pi'_G,\overline{(1/\mathcal I)}_G$ within-group means. This
is a comparison at the specified baseline and path, not a validity claim for
the naive statistic as a test of the composite equal-group-means null: $b_0$
may be nonzero at baselines with unequal spreads (\Cref{thm:conv-bias}), so the
centering is essential, and estimating $b_0$ would add variance and covariance
contributions that the display omits. With
equal group sizes and pooled means
$\E^*[\cdot]:=\tfrac12(\textnormal{mean}_{\Aset}+\textnormal{mean}_{\Bset})$:
\textnormal{(i)} in the \emph{context-observed} regime
($\mathcal I_v=\pi'_v$: the fit conditions on the realized context, a distinct
observation experiment from the marginal one below),
\begin{equation}
  \mathrm{ARE}
  =\underbrace{\frac{\E^*[w]}{\E^*[\pi']}}_{=:\,\bar\kappa\,\ge\,1}
  \times\underbrace{\frac{(\E^*[1/\pi'])^{-1}}{\E^*[\pi']}}_{\le\,1},
  \label{eq:a-are-decomp}
\end{equation}
a \emph{context gain} ($w_v\ge\pi'_v$ pointwise by Jensen on the concave
$x(1-x)$, strict iff $\Var(\sigma(\theta_v+C))>0$) times a \emph{heterogeneity
tax} (harmonic-to-arithmetic mean ratio of $\{\pi'_v\}$, $=1$ iff $\pi'_v$
constant); \textnormal{(ii)} in the \emph{context-marginal} regime
($\mathcal I_v=(\pi'_v)^2/w_v$: only each item's marginal Bernoulli law is used),
$\mathrm{ARE}=\E^*[w]/(\E^*[\pi']^2\E^*[w/(\pi')^2])$, which tends to $1$ under
slow variation of $\pi'$ (the marginal MLE is a smooth transform of $\phat_v$) and
reduces to $1/(\E^*[w]\E^*[1/w])\le1$ when the context is degenerate
($\pi'_v=w_v$)---a pure variance tax growing with item heterogeneity.
\end{proposition}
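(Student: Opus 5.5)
The plan is to handle the two statistics separately by explicit local expansions in the stateless submodel, then take the ratio of squared noncentralities. Everything reduces to independent Bernoulli sums plus the assumed regular asymptotics of the fitted item estimates.

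\emph{Naive statistic.} In the stateless submodel each $\phat_v$ is an average of $N$ i.i.d.\ Bernoulli$(\pi(\theta_{v,N}))$ draws, independent across $v$. I would first Taylor expand the mean:
\[
\pi(\theta_{v,N})=\pi(\theta^0_v)\pm\pi'_v h/(2\sqrt N)+O(1/N).
\]
Averaging within groups and subtracting gives
\[
\sqrt N\,(\E\Dhat_{\Naive}-b_0)\to \tfrac h2(\bar\pi'_A+\bar\pi'_B)=h\ell .
\]
Next, the variance. By independence, $N\Var(\Dhat_{\Naive})=V_A^{-2}\sum_{\Aset}w_{v,N}+V_B^{-2}\sum_{\Bset}w_{v,N}$, which tends to $S$ by continuity of $\pi$. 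The summands form a bounded triangular array, so Lindeberg--Feller gives the first display. For the model-based statistic, the assumed joint asymptotic normality of the item estimates about the local parameters, together with $N\Var(\Dhat_{\GLMM})\to T$, yields
\[
\sqrt N\,(\Dhat_{\GLMM}-(\bar\theta_{A,N}-\bar\theta_{B,N}))\xrightarrow{d}\N(0,T).
\]
Since $\bar\theta_{A,N}-\bar\theta_{B,N}=h/\sqrt N$ exactly (the baseline means are equal), the second display follows. The ratio of squared noncentralities is $(h^2/T)/(h^2\ell^2/S)=S/(\ell^2T)$. Rewriting $S=V_A^{-1}\bar w_A+V_B^{-1}\bar w_B$ and $T$ analogously gives \eqref{eq:a-are}.

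\emph{Part (i), context-observed.} With $V_A=V_B$, the group-size factors cancel, and \eqref{eq:a-are} becomes $\E^*[w]/(\E^*[\pi']^2\,\E^*[1/\pi'])$. This splits as the product in \eqref{eq:a-are-decomp}. For the first factor, I would use the pointwise bound $w_v=\E_C[\sigma]\,(1-\E_C[\sigma])\ge\E_C[\sigma(1-\sigma)]=\pi'_v$. This follows from Jensen's inequality on the concave function $x(1-x)$; equivalently, $w_v-\pi'_v=\Var_C(\sigma(\theta_v+C))$, which also gives the strictness claim. For the second factor, the harmonic--arithmetic mean inequality gives $\le1$, with equality iff $\pi'_v$ is constant.

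\emph{Part (ii), context-marginal.} Substituting $1/\mathcal I_v=w_v/(\pi'_v)^2$ gives the stated formula directly. If $\pi'_v=w_v$, the denominator becomes $\E^*[w]^2\E^*[1/w]$, so the ratio reduces to $1/(\E^*[w]\E^*[1/w])\le1$, again by the harmonic--arithmetic mean inequality. For the slow-variation claim, the marginal MLE is the smooth transform $\pi^{-1}(\phat_v)$, so its delta-method variance is $w_v/(\pi'_v)^2$. When $\pi'$ is nearly constant, $\E^*[w/(\pi')^2]\approx\E^*[w]/\E^*[\pi']^2$, and the ratio tends to $1$. I would state this as a limit under a family of panels whose $\pi'$ spread shrinks, rather than a sharp identity.

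\emph{Main obstacle.} I do not expect a deep difficulty. The main care point is the naive mean expansion: the local path shifts every $\theta_v$, so the centering must use $b_0$ at the baseline and keep the $O(1/\sqrt N)$ term uniformly. The context law is held fixed, so $\pi$ itself does not move along the path. The only genuinely assumed ingredient is the GLMM side, which is taken as a hypothesis in the statement.
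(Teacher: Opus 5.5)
Your proposal is correct and follows essentially the same route as the paper's proof. The shared steps are a first-order expansion of $\pi$ along the path for the naive mean, the exact independence-based variance $S$ with a triangular-array CLT, the hypothesized asymptotics for the model-based contrast, the ratio $S/(\ell^2T)$, and then Jensen plus the arithmetic--harmonic (equivalently Cauchy--Schwarz) inequality for parts (i) and (ii). Your identity $w_v-\pi'_v=\Var_C(\sigma(\theta_v+C))$ is a slightly cleaner way to get the pointwise bound together with its strictness condition.
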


\begin{proof}
Along the path, $\E[\Dhat_{\Naive}]=b_0+\ell\,h/\sqrt N+o(N^{-1/2})$ by a
first-order expansion of $\pi$ at $\bm\theta^0$, and
$N\Var(\Dhat_{\Naive})\to S$ (exactly $S$ at each $N$, evaluated along the
path, by independence); the CLT for the oracle-centered statistic follows.

The model-based contrast estimates the local gap $h/\sqrt N$ directly with
$N\Var(\Dhat_{\GLMM})\to T$ by hypothesis. The ratio of squared noncentrality
parameters is 
\[
  (h/\sqrt T)^2/(h\ell/\sqrt S)^2=S/(\ell^2T),
\]
 which is
\eqref{eq:a-are}.

For (i), substitute $\mathcal I_v=\pi'_v$ and factor
\[
  \frac{\E^*[w]}{\E^*[\pi']^2\,\E^*[1/\pi']}
  \;=\;\frac{\E^*[w]}{\E^*[\pi']}\cdot\frac1{\E^*[\pi']\,\E^*[1/\pi']}\,;
\]
the pointwise inequality $w_v=f(\E_C[\sigma])\ge\E_C[f(\sigma)]=\pi'_v$ follows from Jensen's inequality
for $f(x)=x(1-x)$, and $\E^*[\pi']\,\E^*[1/\pi']\ge1$ is the arithmetic--harmonic
inequality.

For (ii), substituting $\mathcal I_v=(\pi'_v)^2/w_v$ gives the stated expression and its two limiting cases.
Finally, $\E^*[w]\,\E^*[1/w]\ge1$ follows from the Cauchy--Schwarz inequality.
\end{proof}

\begin{remark}[Which regime is real]
\label{rem:a-are-interp}
\Cref{prop:a-are} is exact only in the stateless submodel: under the batched
Markov design both estimators acquire within-batch covariance and
nuisance-parameter ($\beta,\gamma,\sigma_b^2$) corrections that the display
ignores, and we use the factorization only as an interpretable calibration---the
batched case is probed directly by the simulations of \Cref{sec:conv-illusion}.
With that scope understood: the ``efficiency factor'' $\kappa(\bar\mu)=\bar p(1-\bar p)/\pi'(\bar\mu)\ge1$ is regime (i) under slow variation---all context gain, no tax; the
opposite-signed ``variance tax'' $1/\sqrt{\bar\nu\bar\omega}\le1$ (with
$\bar\omega:=\E^*[1/w]$) is regime (ii) with degenerate context---all tax, no gain,
and setting $\pi'_v=w_v$ contradicts the very contextual noise this paper models.
The batched GLMM sits between the regimes: the lag and cold-start covariates
are observed exactly, while the batch effect is a genuinely latent component;
in our fits at $V\ge30$ the estimated per-item information tracks the
regime-(i) value, and we use regime (i) as a heuristic calibration rather than
a limit claim. The heterogeneity tax in \eqref{eq:a-are-decomp} is real and
grows with the item-panel spread. Neither test dominates in general: in
matched-spread simulations the factors nearly cancel, while on saturated panels the
context gain dominates---the pattern of \Cref{sec:data-example}.
\end{remark}

\bigskip

\end{document}